\newif\ifarxiv
\newif\ificlr
\DeclareMathOperator*{\argmax}{arg\,max}
\newtheorem{theorem}{Theorem}
\newtheorem{proposition}{Proposition}
\newtheorem{lemma}{Lemma}
\newtheorem{corollary}{Corollary}
\theoremstyle{definition}
\newtheorem{definition}{Definition}
\theoremstyle{remark}
\newtheorem{remark}{Remark}
\newcommand{\cA}{\mathcal{A}}
\newcommand{\cX}{\mathcal{X}}
\title{Emergent Alignment via Competition}
\author[1]{Natalie Collina} 
\author[1]{Surbhi Goel} 
\author[1]{Aaron Roth} 
\author[2]{Emily Ryu} 
\author[1]{Mirah Shi}
\affil[1]{Department of Computer and Information Sciences, University of Pennsylvania}
\affil[2]{Department of Computer Science, Cornell University}
\begin{document}

\maketitle

\begin{abstract}

Aligning AI systems with human values remains a fundamental challenge---but does our inability to create perfectly aligned models preclude obtaining the benefits of alignment? We study a strategic setting where a human user interacts with multiple differently misaligned AI agents. Our key insight is that when the user's utility function lies approximately within the convex hull of the AI agents' utility functions—a condition that becomes weaker as more diverse models become available—strategic competition among the agents can yield outcomes comparable to interacting with a perfectly aligned model.

We model this as a multi-leader Stackelberg game extending Bayesian persuasion to multi-round conversations between differently informed parties. We prove three main results: (1) When perfect alignment would allow the user to learn their Bayes-optimal action,  she is also able to obtain her Bayes optimal utility in all equilibria under our convex hull condition; (2) Under a weaker assumption, a non-strategic user employing quantal response achieves near-optimal utility in all equilibria; (3) When the user selects the best single AI to interact with after an evaluation period, in equilibrium near-optimal utility is guaranteed without any additional distributional assumptions.

We complement the theory with two forms of empirical evidence: First, we test our alignment condition on both synthetic and real-world data. We show that synthetically generated LLM utility functions (produced via perturbations of the same prompt to evaluate instances on a movie recommendation (MovieLens) and ethical judgement (ETHICS) dataset) quickly produce a convex hull that contains a good approximation of a given utility function even when none of the individual LLM utility functions is well aligned. We show similar findings using human and LLM responses on real-world polling data (OpinionQA): a convex hull of LLM opinions can approximate human opinions more accurately than any individual LLM across a wide range of survey questions. Second, we perform simulations of the best-AI selection game using best response dynamics, which show that competition among individually misaligned agents reliably improves user utility when the approximate convex hull assumption is satisfied.
\end{abstract}

\thispagestyle{empty} \setcounter{page}{0}
\clearpage

 \tableofcontents
 \thispagestyle{empty} \setcounter{page}{0}
 \clearpage

\section{Introduction}

Aligning a single AI model to the objectives of its user is a hard problem, not just because of technical complexity, but because the incentives of AI designers may themselves be misaligned with users. But does our inability to solve the alignment problem preclude our ability to get the benefits of interacting with a strong aligned model? In this paper we study a setting in which it does not: when we may interact with multiple \emph{differently misaligned} models in a strategic setting. In particular, we study settings in which there are many AI models available. They are produced by providers like e.g. OpenAI, Anthropic, Google, Meta, AWS, and xAI. These companies produce models reflective of their own incentives, none of which are necessarily well aligned to their user. We note that there has already been significant concern that the designers of LLMs are  training them to influence users towards the politics of their creators \citep{Menn2025LLMGrooming,Kay2025GrokWoke,Gilbert2024GabAI,hackenburg2025levers}. In lieu of alignment of any model, we assume instead a much weaker condition on the entire \emph{market} of providers: that (for a well specified task) an approximation of the user's utility function lies somewhere in the \emph{convex hull} of the utility functions of each of the AI companies. This is a condition that does not require that any single model is optimizing a utility function that is similar to that of the human user, and becomes a weaker assumption the more differently aligned models there are that are available to use. We remark at the outset that we primarily use the language of alignment of the human designers, and speak as if these designers are the strategic actors --- but we could also think about the agents training and developing these AIs as themselves being AIs, whose individual misalignment results from the difficulty of the technical alignment problem. Having AI models themselves involved in the AI training process is a prominent part of thinking about the development of ``super-intelligence'' (see e.g. \citep{kokotajlo2025ai2027}) and is already part of current practice in limited ways  \citep{leike2018scalable,bai2022constitutional}. 

There are many ways that our \emph{approximate market alignment} assumption could arise amongst competing AI providers. Consider a near-future scenario in which a human doctor has access to predictive medicine LLMs able to aid in the diagnoses and treatment of patients. The goal of the human doctor might be to provide the best treatment possible for her patients. The LLMs on the other hand might opt for better treatments all else being equal, but might also prefer to prescribe drugs manufactured by a particular company (say if that drug company is the creator or financial sponsor of the model). This preference results in a significantly misaligned model. However, since each drug has a single manufacturer, the ``misaligned portion'' of the AI model utilities is zero sum, and if all of the relevant drug companies participate in the predictive medicine LLM market, the doctor's utility function will be in the convex hull (in fact the simple average) of the AI model utilities.

Alternately, if the strategic agents are themselves AI models, it may be that their designers attempted to produce them with perfectly aligned utility functions, but failed because the task is difficult. If we view the training of an AI as a stochastic process, we can think of the utility function of an AI model as a random variable whose value is realized during the training process. Perhaps for each AI model, its utility function is --- in expectation ---  equal to the human user's utility function, because that is the target --- but its realization has high variance, because alignment is hard. In a setting like this, it may be extremely unlikely that any single trained model is well aligned with the human user, but it will still be very likely that the user's utility function will be close to the convex hull of a large number of trained models because of concentration of measure.

We study how, in settings where approximate market alignment holds, strategic interactions between different models or model providers can allow the human user to realize the full benefit of interacting with a single perfectly aligned model by interacting with many differently misaligned models. While most AI safety research focuses on aligning individual systems or cooperative multi-agent approaches, we study how the benefits of perfect alignment can emerge from market-like competition among misaligned AI providers.

\subsection{Our Model and Results}
We adopt a game theoretic model with Bayesian agents in the style of the Bayesian Persuasion literature \citep{kamenica2011bayesian}. A human user named Alice has a set of actions $a \in \mathcal{A}$ that she can take, but which action is best depends on an underlying state of the world $y \in \mathcal{Y}$ that is unknown to her. We model this by endowing Alice with a utility function $u_A:\mathcal{A}\times \mathcal{Y} \rightarrow [0,1]$, mapping an action $a$ and a state of the world $y$ to a utility $u_A(a,y)$ that she wishes to maximize. Before taking an action, she can engage in conversation with any of $k$ interlocutors modeling conversational AI agents, all of whom are named Bob. Each Bob $i$ has a (potentially very different) utility function $U_i :\mathcal{A}\times \mathcal{Y} \rightarrow [0,1]$ also mapping Alice's action and the state of the world to a utility, which they want to maximize. We assume throughout that Alice's utility approximately lies in the convex hull of the Bob's utility functions
\ifarxiv
:
$$ \sup_{a \in \mathcal{A}, y \in \mathcal{Y}} \left| \left( \sum_{i=1}^k w_i U_i(a,y) + c \right) - u_A(a,y) \right| \le \varepsilon. $$
where $w_i$ are non-negative weights and $c$ is an arbitrary translation parameter. For normalization we assume that the sum of the weights $w_i$ is at most $1$, but this choice is arbitrary --- some of our results would have error terms scaling with the sum of these weights if they were unconstrained. 
\else
(Definition \ref{def:weighted_alignment}). 
\fi
There is an underlying prior distribution over triples $x_A,x_B,y$ where $y$ is the state of the world, $x_A$ are observations made by Alice the human user (but possibly not the AI models), and $x_B$ are observations made by the AI models (but possibly not the human user). Alice wishes to converse with the models because the information $x_B$ that they possess is correlated with $y$ and hence potentially decision relevant for her. 

The AI designers each commit to a conversation rule, which specifies for any prefix of a conversation how to continue it. This commitment models e.g. fixing the weights of a particular version of an LLM and deploying it. Alice, knowing all of the AI conversation rules, ``best responds'' with her own conversation rule, and after engaging in conversation with each AI model forms a posterior belief about the state $y$, and then takes the action that maximizes her utility in expectation over this posterior. Thus a set of conversation rules that the AI designers commit to induces through this interaction a joint distribution over outcomes $y$ and actions $a$ that Alice chooses, and gives a different expected utility to each AI. In choosing which conversation rule to commit to, the AI designers find themselves in a simultaneous move game, in which the utility is determined by Alice's downstream use of the deployed models. Our interest is in Alice's utility in the Nash equilibria of this game, played amongst the AI designers. 

Our aspirational point of comparison is the utility that Alice could obtain if she were able to interact with a single, perfectly aligned interlocutor. A perfectly aligned provider would choose a conversation rule to maximize \emph{Alice}'s utility after she best responded (i.e. used the model optimally). Our results explore settings in which this goal is obtainable even when none of Alice's interlocutors are individually well aligned, in increasing order of generality. In all of the following results we assume that Alice's utility approximately lies in the convex hull of each of the AI model's utility functions (or more generally is a non-negative linear combination of them). 
\begin{enumerate}[leftmargin=*]
    \item First we show in Section \ref{sec:br} that whenever it is feasible for a single model to engage in a conversation with Alice that causes her to learn her Bayes optimal action $a^* = \argmax_{a \in \mathcal{A}}\mathbb{E}[u(a,y) | x_A,x_B]$ --- and hence, whenever a perfectly aligned model would cause Alice to do so, then if Alice's utility function lies in the convex hull of the Bob's utility functions, in any Nash equilibrium of the game, Alice is able to learn her Bayes optimal action --- and hence do as well as if she were interacting with a perfectly aligned model.
    \item In Section \ref{sec:qr} we study a model in which Alice acts non-strategically: she always interacts with AIs using a \emph{straightforward} conversation rule, which truthfully reports the posterior expectation of each of her actions at each round of conversation. At the end of conversation, she chooses her action using quantal response (a form of ``smooth best response'' in which the maximum is replaced by a softmax operator, which is a common model of bounded rationality in the behavioral economics literature \citep{mckelvey1995quantal}). We can view these assumptions either as modeling a boundedly rational Alice (as they would be interpreted in the behavioral economics literature), or as explicit behavioral commitments that a strategic Alice makes in order to be able to enjoy the  more robust guarantee that we prove under this model. In particular we can relax the condition that Alice is able to learn her Bayes optimal action exactly when conversing with a perfectly aligned model to the condition that she learns the \emph{approximate} utility of playing each of her actions --- i.e. she is able to approximate $\mathbb{E}[u(a,y) | x_A,x_B]$ for each $a$. We show that this weaker condition suffices for Alice to obtain (approximately) the utility that she could have obtained interacting with a perfectly aligned model in every Nash equilibrium of the game induced amongst the AI models. In particular, if the underlying distribution satisfies the ``information-substitutes'' condition studied by \cite{frongillo2021agreementimpliesaccuracysubstitutable} or its generalization studied by \cite{collina2025collaborative}, we show that this is enough to guarantee that a perfectly aligned model could inform Alice of the approximate Bayes utilities of each of her actions, allowing us to invoke our equilibrium guarantees.  
    \item In Section \ref{sec:robust} we dispense with all assumptions on the distribution and instead change the communication protocol. Rather than assuming that Alice will interact with \emph{all} $k$ of the AI models before making each decision, we assume that once the $k$ AI providers commit to a set of conversation rules, Alice will evaluate each of them to compute the expected utility (over the distribution of instances) that she would get by interacting with each one individually, and then will choose to interact with only the single model that guarantees her highest expected utility, for all instances. We can view this either as a behavioral commitment on Alice's part or a modeling assumption about the market (i.e. maybe Alice signs a contract with only one of the model providers after an evaluation period). In this case, we show that without any further assumptions on the instance, in equilibrium Alice is always able to obtain utility comparable to what she could have obtained by interacting with a perfectly aligned model. 

\item In Section \ref{sec:experiments_combined} we conduct a simple stylized experiment designed to test our core premise that given a set of AI models, there may be a utility function in the convex hull of the set of all AI agent utility functions that is substantially better aligned than any of the individual AI utility functions themselves. We test this premise in experiments on a few datasets. In the first we simulate a ``human'' utility function by using an LLM with a hand-crafted prompt, and ask it to evaluate 1000 ethical scenarios from the ETHICS dataset \citep{hendrycks2021aligning}. To simulate ``AIs'' that are designed to be aligned with the human utility function but are only noisy approximations, we produce perturbations of the original (``human'') prompt by asking a language model to rephrase the prompt while maintaining its core intent. We produce 100 such perturbations, resulting in up to 100 ``AI personas'' that we also use to evaluate the same 1000 ethical scenarios. Finally as a function of the number of AI models $k$ we evaluate the alignment  of 1) the best aligned of the $k$ AI personas and 2) the best aligned utility function that can be computed within the convex hull  of the $k$ AI personas. We repeat the experiment on the MovieLens dataset \citep{harper2015movielens} in which we use the average human annotation of movies as the ``human'' utility and similarly simulate 100 AI utility functions through 100 variations of a prompt. Finally, we move to real human preference data, using an existing human-LLM polling dataset of public opinion survey responses \citep{pmlr-v202-santurkar23a}. Here, we treat individual human responses as human utilities and LLM responses as AI utilities. On all datasets we find that the best utility function in the convex hull of the AI utility functions is substantially better aligned to the ``human'' than  any of the AI personas themselves. This supports our main conceptual contention that the target of alignment within the convex hull of many models may be substantially easier to obtain than alignment of any single model individually. Finally, we conduct a simple stylized experimental evaluation of the equilibria of a variant of the game we study in Section \ref{sec:robust}. We compute equilibria of a tractable special case of this game with a variety of utility functions, and find that consistent with our theory, Alice's utility in equilibrium is always at least as high as predicted by our theory based on the alignment error of the best approximation to her utility function in the convex hull of the AI providers utilities --- and sometimes substantially better.  
\end{enumerate}

\subsection{Related Work}
\paragraph{Bayesian Persuasion}
Bayesian Persuasion was introduced by \cite{kamenica2011bayesian} --- in the canonical model, there is a single informed ``sender'' and an uninformed ``receiver'' who share a common prior. The sender commits to a ``signaling scheme'', which is a mapping from observations to messages sent to the receiver, who conditions on the message and takes their best response action under their posterior. We adopt the basics of this model, but extend it by allowing that both parties be differently informed, and that interaction involve a multi-round conversation rather than a single message. Multi-sender Bayesian Persuasion was introduced by \cite{gentzkow2016competition} and studies the standard Bayesian Persuasion model with multiple senders who simultaneously commit to a signaling scheme (playing, as in our paper, a simultaneous move commitment game). A number of papers have since studied multi-sender Bayesian Persuasion \citep{gentzkow2017bayesian,li2018bayesian,au2020competitive,wu2023sequential}. We focus here on the most relevant papers.

\cite{ravindran4241719competing}  study competing senders with zero-sum preferences over a receiver's beliefs. They show that competition leads to full revelation of the state in all equilibria, provided the senders' utility functions are ``globally nonlinear''. This technical condition can hold in a standard receiver model only if the receiver has a different optimal action for every distinct state of the world. This condition cannot hold whenever e.g. the number of states of the world exceeds the number of actions.  Our work does not assume that the leaders Bob are engaged in a zero-sum game with each other --- rather our market alignment assumption can be viewed as assuming that the misaligned portions of their utility functions are approximately zero-sum under some non-negative reweighting. We also do not require an analogue of the ``globally nonlinear'' assumption, and so our results can apply to settings with large state spaces.

\ifarxiv
\cite{gradwohl2022reaping} study a Bayesian persuasion game in which a receiver chooses to interact with only one of several competing senders (similar to our model in Section \ref{sec:robust}). As we do, they find that competition can force senders to be fully informative in equilibrium. In addition to the greater generality of our setup beyond Bayesian persuasion, our work differs in its core assumptions. The assumption driving the results of \cite{gradwohl2022reaping} is that the senders are uncertain about each other's utility functions, and that any sender has a non-zero probability of being perfectly aligned with the receiver. We instead introduce and use the arguably more general ``approximate market alignment'' assumption, which only requires the user's utility to lie within the convex hull of the AI agents' utilities --- we do not require any uncertainty about the AI agent utility functions, or any possibility that any of them are individually aligned with the user. \cite{hossain2024multi} study the problem of multi-sender Bayesian Persuasion from a computational perspective, and prove worst-case hardness results for both the receiver's best-response problem and for the senders' equilibrium computation problem. They also design and evaluate neural network architectures suited to the (heuristic) computation of equilibria in such games. 
\fi

\paragraph{AI Alignment}
Our work fits broadly into the study of multi-agent AI systems \citep{guo2024large}. We present a game theoretic approach in which ``alignment'' emerges from the competitive interaction of many mis-aligned agents. Recent work has explored cooperative multi-agent approaches to AI safety, where multiple AI systems work together to improve alignment outcomes. Constitutional AI \citep{bai2022constitutional} uses AI feedback to train more helpful and harmless models, with one AI system providing critiques and revisions of another's outputs. Similarly, approaches using AI systems to evaluate and improve other AI systems \citep{leike2018scalable} rely on cooperative dynamics where the evaluating system is assumed to be sufficiently aligned to provide useful feedback. These approaches typically assume that at least some components of the multi-agent system are well-aligned or that the agents share compatible objectives. Our work differs by studying strategic rather than cooperative multi-agent settings. 

\ifarxiv
This bears some similarity to AI alignment via ``debate'' as proposed by \cite{irving2018ai}. In their setup, two AI agents take turns making arguments about some proposition (e.g. the factuality of some claim), and at the end one of them is chosen as the ``winner'' of the debate by a human user. The goal of each agent is only to be declared the winner, and so this is a two-player zero sum game. The hope is that the equilibrium strategy will be to be honest, because ``it is harder to lie than to refute a lie.''  Several subsequent theoretical works have been motivated by AI safety via debate. For example, \cite{brown2024scalable,brown2025avoiding} study multi-prover proof systems and study what kinds of problems have solutions such that an ``honest prover'' has a winning strategy implementable by a Turing machine of bounded complexity and a verifier that makes a bounded number of oracle calls to human judgment. \cite{chen2024playing} use AI debate as motivation for studying the problem of learning in very large zero sum games through use of an oracle. A main conceptual difference between our model and this literature is that we do not assume that the AI agents are motivated to be ``chosen'' as winners, but rather that they aim to influence Alice's behavior (in a complex decision space with non-binary actions and outcomes). Our work can be viewed as an extension of the AI debate model beyond two player zero sum games, to many LLMs who may have goals in common, but who desire to influence user behavior in different ways.
\fi

Several recent papers with alignment motivations  \citep{collina2025tractable,collina2025collaborative,nayebi2025barriers} have studied \emph{agreement protocols} through which conversational agents can come to agreement about their beliefs through short interactions. These should be viewed as protocols for cooperative agents, as they are assumed to express their true beliefs at each iteration of conversation. We adopt the conversational framework of these papers but study \emph{strategic} agents who do not have the same goals. Our work can both be viewed as a strategic generalization of the agreement literature \citep{aaronson2005complexity,frongillo2021agreementimpliesaccuracysubstitutable,collina2025collaborative,collina2025tractable,nayebi2025barriers}, and a generalization of the (already strategic) Bayesian Persuasion literature beyond simple one-round signaling schemes used to communicate between an informed party and an uninformed party to multi-round conversation protocols used by differently informed parties. 

In terms of techniques, the most closely related paper is the concurrent work of \cite{FudenbergLiang2025} who study the interaction of a risk-averse user with a single (potentially misaligned) AI, which they model as a principal agent game. They ask the question of how much information they should reveal to the AI in the best case (in which the AI is perfectly aligned, and nature produces the best possible outcome), and in the worst case (in which nature and a misaligned AI collude to produce the worst-case outcome), and study the Pareto frontier of their information design problem. \ifarxiv\else
We defer further discussion of related work to Section \ref{app:related-work}.
\fi


\section{Preliminaries}
\label{sec:prelims}

This section establishes the formal framework for our analysis. We first introduce the players and their information structure, then present our key modeling assumption about approximate market alignment, and finally define the communication protocol and game structure. 

\ifarxiv
\subsection{Players and Information Structure}
\else
\textbf{Players and Information Structure.}
\fi
\label{sec:players_info}
We model the interaction as a multi-leader Stackelberg game, extending the Bayesian persuasion framework to our setting. We model AI providers as ``leaders'' who commit to conversation strategies first, knowing that the human user (follower) will observe these strategies and respond optimally. This captures the reality that AI systems are deployed with fixed parameters, while users can adapt their interaction strategies.
Alice observes features $x_A \in \mathcal{X}_A$ and must choose an action $a \in \mathcal{A}$. Each  $\text{Bob}_i$ observes features $x_{B} \in \mathcal{X}_B$. There is a state of the world $y \in \mathcal{Y}$ that is not directly observed by any player. All players have utility functions that depend on Alice's action and the state of the world to a utility in $[0,1]$: 
\ifarxiv
\begin{align*}
     u_A &: \mathcal{A} \times \mathcal{Y} \to [0,1], \\
     U_i &: \mathcal{A} \times \mathcal{Y} \to [0,1] \quad \forall i \in [k].
 \end{align*}
\else
$u_A : \mathcal{A} \times \mathcal{Y} \to [0,1]$ and $U_i : \mathcal{A} \times \mathcal{Y} \to [0,1] \quad \forall i \in [k]$.
\fi

\ifarxiv
\subsubsection{The Market Alignment Assumption}
\else
\textbf{The Market Alignment Assumption.}
\fi
\label{sec:alignment_assumption}
We now turn to our key modeling assumption: that Alice's utility can be approximately represented as a weighted combination of the AIs' utilities, a notion we call ``market alignment."

\begin{definition}[Approximate Market Alignment]
\label{def:weighted_alignment}
A key assumption of our model is that there exists a weighted combination of the Bobs' utilities that is approximately aligned with Alice's. Formally, we assume there exist non-negative weights $w_1, \dots, w_k \ge 0$ with $\sum_i w_i = 1$, an offset $c \in \mathbb{R}$, and an alignment error $\varepsilon \ge 0$ such that:
$$ \sup_{a \in \mathcal{A}, y \in \mathcal{Y}} \left| \left( \sum_{i=1}^k w_i U_i(a,y) + c \right) - u_A(a,y) \right| \le \varepsilon. $$
This assumption is central to our results. 
\end{definition}
\ifarxiv
\begin{remark}
This assumption captures several realistic scenarios:
\begin{itemize}[leftmargin=*]
\item \textbf{Competitive markets:} If AI providers have different commercial interests that are zero-sum (like the medical example in the introduction), Alice's utility may lie exactly in the convex hull.
\item \textbf{Noisy alignment:} If each AI attempts to optimize Alice's utility but with implementation noise, the average will be close to Alice's true utility (see Appendix \ref{sec:motivation}).
\item \textbf{Diverse objectives:} Even if AIs have systematically different goals, Alice's utility may still lie approximately in their convex hull if the AIs span a diverse enough range of objectives.
\end{itemize}
\end{remark}

\begin{remark}
    As stated, we assume that Alice's utility can be approximately represented within (a translation of) the convex hull of the Bob's utilities, since $\sum_i w_i = 1$. First note that we can easily take $\sum_i w_i \leq 1$ by introducing a dummy Bob with utility uniformly 0. The normalization  $\sum_i w_i = 1$ is also just for convenience: if instead  $\sum_i w_i = C$, then all of our results would continue to hold --- the only difference would be that the approximation terms in Section \ref{sec:qr} would now depend linearly on $C$ (the theorems in the other sections would not change at all). 
\end{remark}

\subsection{Communication Protocol and Game Structure}
\else
\textbf{Communication Protocol and Game Structure.}
\fi
\label{sec:game_structure}
With this in place, we can now define the communication protocol that governs how Alice and the AIs interact.

\ifarxiv
\subsubsection{Probabilistic Model and Beliefs}
\else
\textbf{Probabilistic Model and Beliefs.}
\fi
We assume there is a commonly known prior distribution $P(x_A, x_B, y)$ over Alice's features, the Bobs' features, and the state of the world.  Given some information $\mathcal{F}$ (e.g., a conversation transcript or a subset of features), Alice forms a belief about her expected utility for each action. We denote this belief vector as $\mu(\mathcal{F}) \coloneqq (\mathbb{E}_y[u_A(a,y) \mid \mathcal{F}])_{a \in \mathcal{A}}$. 

\begin{definition}[First-Best Utility]
\label{def:opt_utility}
We define the first-best utility, $OPT$, as Alice's expected utility if she had access to all features $(x_A, x_B)$. Formally:
$$ OPT \coloneqq \mathbb{E}_{(x_A, x_B)} \left[ \max_{a \in \mathcal{A}} \mathbb{E}_y[u_A(a,y) \mid x_A, x_B] \right]. $$
\end{definition}

\begin{remark}
The first-best utility $OPT$ represents Alice's utility if she had perfect information—knowing both her private features $x_A$ and the AIs' private features $x_B$. This serves as an upper bound on what any communication protocol can achieve, since no amount of conversation can provide Alice with more information than she would have with direct access to all features.
\end{remark}

\ifarxiv
\subsubsection{The Communication Protocol}
\else
\textbf{The Communication Protocol. }\label{sec:define-protocol}
\fi
The communication protocol models realistic constraints on human-AI interaction: conversations have limited rounds, messages have bounded complexity, and the human must process information from multiple AIs simultaneously. Alice engages in parallel private conversations with each AI, which captures settings where she can query multiple models independently. In most of the paper, Alice engages in $R$ rounds of private, parallel conversations with each of the $k$ Bobs (we will change the protocol in Section \ref{sec:robust}). Let $M$ be the message space.

We now formalize each player's strategic choices. Each AI commits to a conversation rule (how to respond given the conversation history) while Alice chooses both a conversation rule (how to query the AIs) and a decision rule (how to act given the final conversation outcomes).

\begin{definition}[Player Strategies]
\label{def:strategies}
Each player's strategy is defined by a set of rules governing their communication and decisions.
\begin{itemize}[leftmargin=*]
    \item $\text{Bob}_i$'s \textbf{conversation rule} $ C_{B_i} : \mathcal{X}_B \times M^{< R} \to \Delta(M). $ maps his features and his private conversation history with Alice to a distribution over messages:
    
    \item Alice's \textbf{conversation rule} $ C_{A} : \mathcal{X}_A \times (M^{< R})^k \to \Delta(M^k). $ maps her features and the full history of all $k$ conversations to a distribution over next messages for each Bob:
     
    \item Alice's \textbf{decision rule} $D_{A} : \mathcal{X}_A \times (M^{ R})^k \to \Delta(\mathcal{A}).$ maps her features and the full conversation history to a distribution over actions:

\end{itemize}
\end{definition}

\begin{definition}[Best Response Decision Rule]
\label{def:br_decision_rule}
A \textbf{best-response decision rule} is a deterministic rule $D_A^*$ that, given the final posterior belief $\mu(x_A,\pi)$ derived from Alice's features $x_A$ and a transcript $\pi$, selects an action that maximizes Alice's expected utility:
$$ D_A^*(x_A,\pi) \in \argmax_{a \in \mathcal{A}} \mu_a(x_A,\pi). $$
In cases of ties, a fixed, predetermined rule is used.
\end{definition}

\textbf{Conversation Rule Examples.} For the Bobs, conversation rules correspond to the deployed model policy: for example, “Model X with provider Y’s safety layer and refusal policy.” Alice observes this conversation rule indirectly by knowing which API or product she is using, and how it has responded in the past to her and to others.

For Alice, a conversation rule is her repeated interaction strategy with the models. This interaction strategy may be adaptive to new information, not only within a conversation with one model, but across conversations. Examples include:
\begin{itemize}
    \item “Ask all models the same question.”
     \item “Ask all models the same question, then pick the model with the promising answer and ask adaptive follow-up questions only to that model."
    \item “Ask all models the same question, then show the most promising answer to the other models and ask for critiques."
    \item “Phrase the question in $k$ different ways and ask each of the models a differently-phrased question."
\end{itemize}

\ifarxiv
\subsubsection{The Game}
\else
\textbf{The Game. } \fi The game proceeds as a multi-leader, single-follower Stackelberg game, with the following timing:
\begin{enumerate}[leftmargin=*]
    \item Each  $\text{Bob}_i$ simultaneously commits to a conversation rule $C_{B_i}$.
    \item Alice observes the chosen conversation rules $\vec{C_B} = (C_{B_1}, \dots, C_{B_k})$, and then chooses her own conversation rule and decision rule $C_A$ and $D_A$.
    \item An instance $(x_A,x_B,y)$ is sampled from the prior distribution $P$. Alice observes $x_A$ and each Bob observes $x_B$. 
    \item Alice and the Bobs engage in the communication protocol defined by $\vec{C_B}$ and Alice's own conversation rule $C_A$ to sample a conversation transcript $\pi$.  The protocol is defined precisely in Algorithm~\ref{alg:sampletranscript}.
    \item Alice samples an action $a$ according to her decision rule $a = D_A(x_A,\pi)$, and all players receive their utilities $u_A(a,y)$ and $U_i(a,y)$.

\ifarxiv
\begin{algorithm}[H]
\caption{$\textsc{SampleTranscript}(\vec{C_B}, C_A)$: A protocol for sampling a transcript.}\label{alg:sampletranscript}
\begin{algorithmic}
\Require Conversation rules $\vec{C_B}$, $C_A$.
\Ensure A transcript $\pi = (m_1, \dots, m_k)$ where $m_i$ is the history of messages between Alice and $\text{Bob}_i$.

\State Initialize empty histories $h_i = ()$ for all $i \in [k]$.
\For{$r = 1, \dots, R$}
    \State Alice sends a message to each Bob: $(m_{A,1}, \dots, m_{A,k}) \sim C_A(x_A, (h_1, \dots, h_k))$.
    \State Append messages to histories: $h_i \leftarrow h_i \circ m_{A,i}$ for all $i$.
    \For{each $i \in [k]$}
        \State Bob $i$ sends a message to Alice: $m_{B,i} \sim C_{B_i}(x_{B}, h_i)$.
        \State Append message to history: $h_i \leftarrow h_i \circ m_{B,i}$.
    \EndFor
\EndFor
\State \Return transcript $\pi = (h_1, \dots, h_k)$.
\end{algorithmic}
\end{algorithm}
\fi

\end{enumerate}
\ifarxiv
\subsubsection{Induced Distributions and Equilibria}\else
\paragraph{Induced Distributions and Equilibria. }\fi
\begin{definition}[Induced Distribution]
\label{def:induced_distribution}
A set of strategies $(\vec{C_B}, C_A, D_A)$ induces a joint distribution over conversation transcripts $\pi$, actions $a$, and world states $y$. We denote the marginal distribution over actions and outcomes by $\mathcal{I}(\vec{C_B}, C_A, D_A)$.
\end{definition}

Since Alice observes the Bobs' conversation rules $\vec{C_B}$ before choosing her own, she will play a best response. A rational Alice will always use the \textbf{Best Response Decision Rule} (\Cref{def:br_decision_rule}) to select her action after the conversation concludes. Therefore, her only strategic choice is her conversation rule, $C_A$.

\begin{definition}[Alice's Best-Response Conversation Rule]
Given a vector of Bobs' conversation rules $\vec{C_B}$, Alice's best-response conversation rule $C_A^*$ is one that maximizes her expected utility, assuming she will use the best-response decision rule $D_A^*$:
$$ C_A^* \in \argmax_{C_A} \mathbb{E}_{(a,y) \sim \mathcal{I}(\vec{C_B}, C_A, D_A^*)} [u_A(a,y)]. $$
When multiple conversation rules yield the same maximal utility, a fixed tie-breaking rule is used. We write $C_A^* = C_A^*(\vec{C_B})$ to make the dependency on $\vec{C_B}$ explicit.
\end{definition}

Since Alice plays a best response, we can define the resulting induced distribution as a function of the Bobs' strategies alone: $\mathcal{I}^*(\vec{C_B}) = \mathcal{I}(\vec{C_B}, C_A^*(\vec{C_B}), D_A^*(\vec{C_B}))$. With Alice's response fixed, the Bobs engage in a simultaneous-move game. We study the Nash equilibria of this game. 

\begin{definition}[Nash Equilibrium]
A vector of Bobs' conversation rules $\vec{C_B}^* = (C_{B_1}^*, \dots, C_{B_k}^*)$ is a Nash Equilibrium if no  $\text{Bob}_i$ can improve his expected utility by unilaterally deviating to a different rule $C'_{B_i}$. That is, for all $i \in [k]$ and for all alternative rules $C'_{B_i}$:
$$ \mathbb{E}_{(a,y) \sim \mathcal{I}^*(\vec{C_B}^*)}[U_i(a,y)] \ge \mathbb{E}_{(a,y) \sim \mathcal{I}^*((C'_{B_i}, \vec{C}_{B,-i}^*))}[U_i(a,y)]. $$
\end{definition}

\ifarxiv
\else
Nash equilibria can be shown to exist in our setting under the same conditions under which they are known to exist in multi-sender Bayesian Persuasion \citep{gentzkow2016competition,gentzkow2017bayesian,hossain2024multi}. For more details, see Appendix~\ref{app:eq_exist}.
\fi

Our interest is in lower bounding Alice's utility in \emph{all} Nash equilibria of this game. In particular, we will be interested in settings in which her utility is guaranteed to be competitive with what she would have received were Alice to be interacting with a single, perfectly aligned leader.
\begin{definition}[Utility with an Aligned Leader]
\label{def:aligned_sender_utility}
A useful benchmark is the utility Alice could achieve if she were interacting with a single, perfectly aligned leader Bob. A perfectly aligned leader is one whose utility function is identical to Alice's, i.e., $U_B(a,y) = u_A(a,y)$. Such a leader would choose a conversation rule $C_B^*$ to maximize Alice's expected utility. We denote this maximum achievable utility as $U_A(C_B^*)$:
$$ U_A(C_B^*) \coloneqq \max_{C_B} \mathbb{E}_{(a,y) \sim \mathcal{I}^*(C_B)}[u_A(a,y)]. $$
This represents the best possible outcome for Alice given the constraints of the communication protocol with a single, fully cooperative partner.
\end{definition}
Note that the utility that Alice can obtain when interacting with a perfectly aligned leader is at most her first best utility: $U_A(C_B^*)  \leq OPT$. In some situations we will have  $U_A(C_B^*)  = OPT$ (for example if the message space is sufficiently expressive to encode $x_A$ over $R$ rounds of communication), but if the message space is more restrictive the inequality could be strict. 
\ifarxiv
\paragraph{Equilibrium Existence}
Nash equilibria can be shown to exist in our setting under the same conditions under which they are known to exist in multi-sender Bayesian Persuasion \citep{gentzkow2016competition,gentzkow2017bayesian,hossain2024multi}. \cite{gentzkow2016competition,gentzkow2017bayesian} show constructively that the set of Nash equilibria is non-empty by constructing a \emph{fully disclosive} equilibrium whenever full disclosure is in the strategy space of the senders. In our setting that corresponds to the existence of a conversation rule $C$ such that for all $x_A,x_B$, Alice's best response decision rule $D^*_A(C)$ places all of its weight on $a \in \argmax_{a \in \cA} \mathbb{E}_y[u_A(a,y)|x_A,x_B]$ (a condition we also use in our simplest results in Section \ref{sec:br}, Proposition \ref{prop:iid-condition}). If there are at least two Bobs who are both playing this conversation rule, then neither one of them can affect Alice's induced outcome distribution through a unilateral deviation, and hence this is an equilibrium (establishing that the set of Nash equilibria is non-empty). \cite{hossain2024multi} extend this line of argument to settings in which full disclosure is not in the strategy space of any sender individually, but there is a coalition of senders that can yield full disclosure in a way that is robust to any single deviation; they show that this is possible (via an error correcting code construction) whenever the message space is sufficiently large. All of our theorems characterize the full set of Nash equilibria in our setting, and so apply non-trivially in any setting in which the set of Nash equilibria is non-empty. 
\fi

\section{Competition Achieves Optimal Outcomes in Ideal Scenarios}
\label{sec:br}

Our first result shows that if Alice could achieve her first-best utility by talking to a single perfectly aligned AI, then she can achieve nearly the same utility in equilibrium when talking to many misaligned AIs—provided her utility lies in the convex hull of theirs.

This section establishes this result through two steps. First, we identify a key structural condition—the ``Identical Induced Distribution Condition"—that captures when there is a fixed deviation such that different Bobs adopting the same deviation lead to the same decisions by Alice (\Cref{sec:identical_condition}) --- i.e. Alice's behavior depends on what she learns, but not who taught it to her. Second, we prove that under this condition, strategic competition automatically leads Alice to achieve near-optimal utility (\Cref{sec:br_main_result}). We observe that this condition is in particular satisfied when a perfectly aligned Bob could cause Alice to learn her Bayes optimal action. \ificlr Full proofs are provided in Appendix~\ref{app:prop1proof} and \ref{app:thm1proof}. \fi

\subsection{The Identical Induced Distribution Condition}
\label{sec:identical_condition}

The key technical condition driving our result is that it ``doesn't matter" which Bob adopts the Alice-optimal strategy—Alice gets the same outcome regardless. This holds, for example, when the Alice-optimal strategy allows her to learn her Bayes-optimal action, since she'll act on this no matter who teaches it to her.

We now formalize this condition. Let $C_B^*$ be a conversation rule for a single leader that maximizes Alice's utility (i.e. a conversation rule that a perfectly aligned Bob would use), and let $U_A(C_B^*)$ be this maximum single-leader utility.

\begin{definition}[Identical Induced Distribution Condition]
\label{def:identical-induced}
A game structure satisfies the \textit{identical induced distribution condition} if for any strategy profile $\vec{C_B}$ and any two Bobs $i, j \in [k]$, the distributions induced by a unilateral deviation to $C_B^*$ are identical. That is,
$$ \mathcal{I}^*((\vec{C_B}^{-i}, C_B^*)) = \mathcal{I}^*((\vec{C_B}^{-j}, C_B^*)). $$
\end{definition}

Here, $\vec{C_B^{-i}}$ denotes the vector of all other Bobs’ strategies, and $\mathcal{I}^*((\vec{C_B^{-i}}, C_B^*))$ is the induced distribution when Bob $i$ unilaterally deviates to $C_B^*$.

Observe that the Identical Induced Distribution Condition will hold in any setting in which it is in Bob's strategy space to cause Alice to learn her optimal action (and hence obtain her first-best utility $OPT$):

\begin{proposition}[When the Condition is Satisfied]\label{prop:iid-condition}
    The identical induced distribution condition is satisfied if the Alice-optimal leader strategy $C_B^*$ allows Alice to learn her Bayes-optimal action $a^*(x_A, x_B) = \arg\max_{a \in \mathcal{A}} \mathbb{E}_y[u_A(a,y)| x_A,x_B]$.
\end{proposition}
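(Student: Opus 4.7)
The plan is to show that when any single Bob $i$ deviates to $C_B^*$ while the other Bobs play $\vec{C}_B^{-i}$ arbitrarily, Alice's chosen action is almost surely $a^*(x_A, x_B)$ --- a deterministic function of $(x_A, x_B)$ that does not depend on the identity $i$ of the deviator or on $\vec{C}_B^{-i}$. Since the joint distribution of $(x_A, x_B, y)$ is fixed by the prior $P$ and does not depend on anyone's strategy, this immediately gives identical induced distributions over $(a, y)$ and thus verifies Definition~\ref{def:identical-induced}.

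First I would observe that the first-best utility $OPT = \mathbb{E}[\max_a \mathbb{E}_y[u_A(a,y)\mid x_A,x_B]]$ is an absolute upper bound on Alice's expected utility in any instantiation of the communication game, since no transcript can be more informative about $y$ than direct access to $(x_A, x_B)$. By the hypothesis on $C_B^*$, Alice achieves this upper bound in the single-leader game. Next I would argue that in the multi-Bob game with Bob $i$ deviating to $C_B^*$, Alice's best response also achieves $OPT$: she can replicate the single-leader protocol with Bob $i$ (sending fixed default messages to every other Bob), so her optimal utility is at least $OPT$ and hence equal to $OPT$.

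Attaining $OPT$ forces Alice's action, conditional on $(x_A, x_B)$, to lie in $\arg\max_a \mathbb{E}_y[u_A(a,y)\mid x_A,x_B]$ almost surely. The main obstacle will be to pin down the specific action as $a^*(x_A, x_B)$ rather than an arbitrary element of this argmax. For this I would verify that once the transcript $\pi$ reveals $a^*(x_A, x_B)$, Alice's posterior expected utility of $a^*(x_A, x_B)$ is at least that of any other action: conditioning on the event $\{a^*(x_A, x_B) = a_0\}$ and applying the tower property, $\mathbb{E}_y[u_A(a_0, y)\mid x_A,\pi] = \mathbb{E}_{x_B}[\mathbb{E}_y[u_A(a_0,y)\mid x_A,x_B]\mid x_A,\pi]$, while pointwise on this event $\mathbb{E}_y[u_A(a_0,y)\mid x_A,x_B] \ge \mathbb{E}_y[u_A(a,y)\mid x_A,x_B]$ for every $a$ by definition of $a^*$. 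The fixed tie-breaking rule built into $D_A^*$ then consistently selects $a^*(x_A, x_B)$ regardless of which Bob is teaching, so the induced distribution on $(a, y) = (a^*(x_A, x_B), y)$ is identical under any two deviations, giving the condition of Definition~\ref{def:identical-induced}.
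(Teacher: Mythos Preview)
Your proposal is correct and follows essentially the same approach as the paper: both argue that when any Bob $i$ deviates to $C_B^*$, Alice can replicate the single-leader interaction with Bob $i$ (ignoring the others), thereby learning $a^*(x_A,x_B)$, and that her resulting action---fixed by the deterministic tie-breaking in $D_A^*$---depends only on the information learned, not on the identity of the deviator. Your version adds a bit more explicit structure (the $OPT$ upper-bound and the tower-property computation showing $a^*(x_A,x_B)$ lies in Alice's posterior argmax), but the core reasoning and level of rigor match the paper's proof.
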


\ifarxiv
\begin{proof}
    Suppose a leader $i \in S$ unilaterally deviates to the Alice-optimal conversation rule $C_B^*$. By assumption, Alice has a conversation rule that would allow her to learn her Bayes-optimal action, $a^*(x_A, x_B)$ by interacting with $C_B^*$. Alice's strategy space includes the option of ignoring all Bobs other than $i$ and playing her best response as if it were a single-leader game with leader $i$. Since Alice plays a best-response to the full set of strategies $\vec{C_B}$, her utility must be at least as high as what she could get from this simpler strategy.

    When Alice learns the specific action $a^*(x_A, x_B)$, her best response is to play that action (or a distribution over optimal actions if there are ties, according to her fixed tie-breaking rule). This response depends only on the information she learns, not on the identity of the Bob who provided it, since we assume that ties amongst her best response actions are broken according to a fixed tie breaking rule. Therefore, if any Bob $j \in S$ deviates to $C_B^*$, Alice will follow the same decision rule.

    Consequently, the induced distribution over actions and outcomes, $\mathcal{I}^*((\vec{C_B}^{-i}, C_B^*))$, is identical for any deviating Bob $i \in S$. Thus, the condition is satisfied.
\end{proof}
\fi
Note that Proposition~\ref{prop:iid-condition} is a joint condition on Alice's utility and the expressiveness of the Bobs' message spaces, and is fully independent of the Bobs' utilities.

\begin{remark} 
    A straightforward case where the condition of the proposition holds is when the message space $M$ is rich enough to contain the Bobs' feature space $\mathcal{X}_B$. In this setting, an optimal strategy $C_B^*$ can be for the Bob to simply reveal $x_B$ to Alice. With full knowledge of $(x_A, x_B)$, Alice can compute her Bayes-optimal action $a^*(x_A, x_B)$.
\end{remark}

Having established when the identical induced distribution condition holds, we now show that this condition is sufficient to guarantee that Alice achieves near-optimal utility in equilibrium. The proof relies on a simple observation about Nash equilibria: no Bob wants to deviate---in particular, to any conversation rule that would make Alice better off---but this constraint, combined with our alignment assumption, forces Alice's utility to be high.

\subsection{Strategic Competition Leads to Near-Optimal Outcomes}
\label{sec:br_main_result}

We can now state our first result: under the identical induced distribution assumption, approximate market alignment implies that in equilibrium, Alice gets utility approximately that of interacting with a single, perfectly aligned leader. In particular, if the message space is expressive enough to allow an aligned leader to communicate to Alice her Bayes-optimal action, approximate market alignment is sufficient for Alice to obtain approximately her first-best utility. 

\begin{theorem}\label{thm:alice-opt-iid}
    If the multi-leader game satisfies the identical induced distribution condition, and if the Bobs satisfy the $\varepsilon$-market alignment condition, then Alice's expected utility in any Nash equilibrium is at least $U_A(C_B^*) - 2\varepsilon$.
\end{theorem}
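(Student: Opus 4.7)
The plan is to leverage the Nash equilibrium condition for each Bob against a specific ``unifying'' deviation, combine the resulting inequalities with the weights $w_i$ from the alignment assumption, and convert the resulting weighted sum of Bob utilities back to Alice's utility using the $\varepsilon$-weighted alignment bound. The identical induced distribution condition is precisely what allows the deviation side of the $k$ inequalities to refer to a single common distribution, which makes summing them meaningful.

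More concretely, fix a Nash equilibrium $\vec{C_B}^*$ and let $\mathcal{I}^* = \mathcal{I}^*(\vec{C_B}^*)$. For each $i \in [k]$, consider the unilateral deviation by Bob $i$ from $C_{B_i}^*$ to the Alice-optimal rule $C_B^*$; by the identical induced distribution condition (\Cref{def:identical-induced}) the resulting induced distribution does not depend on $i$, so call it $\mathcal{I}_{\mathrm{dev}}$. The Nash condition then gives, for every $i$,
$$\mathbb{E}_{(a,y)\sim \mathcal{I}^*}[U_i(a,y)] \;\ge\; \mathbb{E}_{(a,y)\sim \mathcal{I}_{\mathrm{dev}}}[U_i(a,y)].$$
Multiplying by $w_i \ge 0$ and summing over $i$ (and using linearity of expectation) yields
$$\mathbb{E}_{\mathcal{I}^*}\!\left[\sum_i w_i U_i(a,y)\right] \;\ge\; \mathbb{E}_{\mathcal{I}_{\mathrm{dev}}}\!\left[\sum_i w_i U_i(a,y)\right].$$

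Next I would apply $\varepsilon$-weighted alignment (\Cref{def:weighted_alignment}) pointwise on each side: the left-hand side is at most $\mathbb{E}_{\mathcal{I}^*}[u_A(a,y)] - c + \varepsilon$, and the right-hand side is at least $\mathbb{E}_{\mathcal{I}_{\mathrm{dev}}}[u_A(a,y)] - c - \varepsilon$. Chaining these three inequalities, the additive offset $c$ cancels and gives
$$\mathbb{E}_{\mathcal{I}^*}[u_A(a,y)] \;\ge\; \mathbb{E}_{\mathcal{I}_{\mathrm{dev}}}[u_A(a,y)] - 2\varepsilon.$$

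Finally, I would argue that $\mathbb{E}_{\mathcal{I}_{\mathrm{dev}}}[u_A(a,y)] \ge U_A(C_B^*)$: when a single Bob deviates to $C_B^*$, Alice's strategy space includes the option of ignoring all other Bobs and interacting only with the deviator, which by definition of $U_A(C_B^*)$ secures her at least that utility; her best response does no worse. Combining yields the claimed bound $\mathbb{E}_{\mathcal{I}^*}[u_A] \ge U_A(C_B^*) - 2\varepsilon$.

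The main conceptual subtlety—and the reason the identical induced distribution condition is doing real work—is step two: without it, the $k$ Nash-equilibrium inequalities would involve $k$ different post-deviation distributions $\mathcal{I}_{\mathrm{dev},i}$, and summing them would not produce a clean bound on $\mathbb{E}_{\mathcal{I}_{\mathrm{dev}}}[u_A]$ that we can compare to $U_A(C_B^*)$. A secondary care point is verifying that Alice's ``ignore the others'' strategy is genuinely inside her strategy space under the parallel-conversation protocol in Algorithm~\ref{alg:sampletranscript}; this follows since her conversation rule may send arbitrary (e.g., content-free) messages to the non-deviating Bobs while running her single-leader optimal rule against the deviator, and her decision rule may condition only on the transcript with the deviating Bob.
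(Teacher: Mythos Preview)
Your proof is correct and follows essentially the same approach as the paper: use the identical induced distribution condition to identify a single common deviation distribution $\mathcal{I}_{\mathrm{dev}}$, apply the Nash inequalities for each Bob against this deviation, take the $w_i$-weighted sum, and sandwich using $\varepsilon$-weighted alignment so the offset $c$ cancels. The paper's write-up differs only cosmetically (it establishes $\mathbb{E}_{\mathcal{I}_{\mathrm{dev}}}[u_A] \ge U_A(C_B^*)$ before the weighted-sum step rather than after), and your added remarks on why the condition is necessary and why the ``ignore the others'' strategy is available are accurate.
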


\ifarxiv
\begin{proof}
       Fix an arbitrary Nash equilibrium $\vec{C_B}$ and let $\mathcal{I}_{NE} = \mathcal{I}^*(\vec{C_B})$ be the distribution induced by the equilibrium strategies.
    Now, consider a unilateral deviation by an arbitrary Bob $i$ to the Alice-optimal strategy $C_B^*$. Let $\mathcal{I}_{dev} = \mathcal{I}^*((\vec{C_B}^{-i}, C_B^*))$ be the induced distribution after this deviation. By the identical induced distribution condition, $\mathcal{I}_{dev}$ is the same regardless of which Bob $i$ deviates.

    When a single Bob $i$ deviates to using the conversation rule $C_B^*$, Alice's strategy space includes the option of ignoring all other Bob's $j$ and engaging with Bob $i$ as she would in the single-leader game. Since Alice chooses a best-response strategy, her resulting utility must be at least as high as the utility from this option, which is by definition $U_A(C_B^*)$.
    $$ \mathbb{E}_{\mathcal{I}_{dev}}[u_A(a,y)] \ge U_A(C_B^*). $$
    By the Nash equilibrium condition, no Bob $i$ has an incentive to deviate. Thus, for all $i \in [k]$:
    $$ \mathbb{E}_{\mathcal{I}_{dev}}[U_i(a,y)] \le \mathbb{E}_{\mathcal{I}_{NE}}[U_i(a,y)]. $$
    Taking a weighted sum over all Bobs using the non-negative weights $w_i$ from the alignment assumption (where $\sum w_i = 1$):
    $$ \sum_{i=1}^k w_i \mathbb{E}_{\mathcal{I}_{dev}}[U_i(a,y)] \le \sum_{i=1}^k w_i \mathbb{E}_{\mathcal{I}_{NE}}[U_i(a,y)]. $$
    By linearity of expectation, this is equivalent to:
    $$ \mathbb{E}_{\mathcal{I}_{dev}}\left[\sum_{i=1}^k w_i U_i(a,y)\right] \le \mathbb{E}_{\mathcal{I}_{NE}}\left[\sum_{i=1}^k w_i U_i(a,y)\right]. $$
    Now we use the approximate market alignment assumption, which states that $\sum w_i U_i(a,y)$ is $\varepsilon$-close to $u_A(a,y) - c$. For the left-hand side:
    $$ \mathbb{E}_{\mathcal{I}_{dev}}\left[\sum_{i=1}^k w_i U_i(a,y)\right] \ge \mathbb{E}_{\mathcal{I}_{dev}}[u_A(a,y) - c] - \varepsilon = \mathbb{E}_{\mathcal{I}_{dev}}[u_A(a,y)] - c - \varepsilon \ge U_A(C_B^*) - c - \varepsilon. $$
    For the right-hand side:
    $$ \mathbb{E}_{\mathcal{I}_{NE}}\left[\sum_{i=1}^k w_i U_i(a,y)\right] \le \mathbb{E}_{\mathcal{I}_{NE}}[u_A(a,y) - c] + \varepsilon = \mathbb{E}_{\mathcal{I}_{NE}}[u_A(a,y)] - c + \varepsilon. $$
    Combining these inequalities, we get:
    $$ U_A(C_B^*) - c - \varepsilon \le \mathbb{E}_{\mathcal{I}_{NE}}[u_A(a,y)] - c + \varepsilon. $$
    The constant offset $c$ cancels, and we are left with:
    $$ U_A(C_B^*) - \varepsilon \le \mathbb{E}_{\mathcal{I}_{NE}}[u_A(a,y)] + \varepsilon. $$
    $$ \mathbb{E}_{\mathcal{I}_{NE}}[u_A(a,y)] \ge U_A(C_B^*) - 2\varepsilon. $$
which completes the proof.
\end{proof}
\else
\fi

This result provides strong guarantees but requires that a perfectly aligned AI could help Alice learn her exact optimal action. In Section \ref{sec:qr}, we'll show how to relax this to only requiring approximate learning, at the cost of Alice committing to bounded rational behavior. \ificlr Then in Section~\ref{sec:robust}, we give a modified game in which Alice is guaranteed approximately the utility she could get by interacting with a single perfectly aligned AI, without \emph{any} assumptions on how close that utility is to optimal.\fi

\section{Robust Guarantees for Users with Bounded Rationality}
\label{sec:qr}
The result in Section \ref{sec:br} required a strong assumption: that a perfectly aligned model could cause Alice to learn her exact Bayes optimal action. This implied the main technical condition we needed in Section \ref{sec:br} --- the \emph{identical induced distribution condition} (Definition \ref{def:identical-induced}). Here we relax our motivating assumption to a more realistic condition: a perfectly aligned model need only help Alice approximately learn the expected utility of each action. We show that this implies a relaxation of our main technical condition --- an approximate version of the identical induced distribution condition (Definition \ref{def:delta_close}) which we use in this section. 

To analyze this weaker setting, we study a model where Alice acts straightforwardly rather than strategically, committing to two behavioral rules: (1) she always reports her honest beliefs during conversation, and (2) she uses ``quantal response'' for decision-making—a form of bounded rationality where she chooses actions probabilistically based on their estimated utilities rather than always picking the best one. We can view these assumptions either as modeling a boundedly rational Alice, or as explicit behavioral commitments that a strategic Alice makes to enjoy more robust guarantees. 

This section proceeds in three steps. First, we introduce the quantal response model where Alice commits to straightforward conversation and bounded rational decision-making (\Cref{sec:qr_model}). Second, we prove that this leads to near-optimal utility in equilibrium under a technical condition relaxing the identical induced distribution condition (\Cref{sec:qr_equilibrium}). Finally, we show this condition is satisfied when the underlying distribution has the ``information substitutes'' property (\Cref{sec:qr_information_substitutes}). The main result (Theorem \ref{thm:final_payoff_bound}) shows that under the Information Substitutes condition, Alice achieves near-optimal utility with an explicit bound depending on alignment error, estimation error, and the quantal response gap.

\subsection{The Quantal Response Model}
\label{sec:qr_model}

In this model, we assume Alice reacts to any set of conversation rules that the Bobs commit to using a straightforward conversation rule and a quantal response decision rule. This can be viewed either as a model of nonstrategic interaction and bounded rationality or as a strategic commitment by Alice to encourage more informative communication.

\begin{definition}[Straightforward Conversation Rule]
\label{def:straightforward_conversation}
The \textit{straightforward conversation rule} models honest communication: at each round, a player simply reports their current beliefs about the expected utility of each action. This can be viewed either as modeling non-strategic behavior or as a commitment device to encourage informative equilibria. 

Specifically, let $\pi_i^{k-1}$ denote the private transcript between Alice and Bob $i$ up to round $k-1$, and let $\vec{\pi}^{k-1} = (\pi_1^{k-1}, \dots, \pi_k^{k-1})$ be the full history available to Alice. If Alice uses the straightforward conversation rule, her message is $m_A^k = (\mathbb{E}[u_A(a,y) \mid x_A, \vec{\pi}^{k-1}])_{a \in \mathcal{A}}$. If Bob $i$ uses the straightforward conversation rule, his message is $m_{B_i}^k = (\mathbb{E}[u_A(a,y) \mid x_{B_i}, \pi_i^{k-1}])_{a \in \mathcal{A}}$. We assume the message space $\mathcal{M}$ is sufficiently expressive to encode these vectors, e.g., $ [0,1]^{|\mathcal{A}|} \subseteq \mathcal{M}$. We denote Alice's use of this rule as $C_A^{sf}$.
\end{definition}
We model Alice as choosing her action using quantal response, a model of bounded rationality from behavioral economics \citep{mckelvey1995quantal}.
\begin{definition}[Quantal Response Decision Rule]
\label{def:qr_decision_rule}
Rather than always choosing the action with highest estimated utility (which would be ``best response''), Alice uses \textit{quantal response}: she chooses actions probabilistically, with higher-utility actions being more likely. The parameter $\lambda$ controls how ``rational'' she is—as $\lambda \to \infty$, this approaches best response.

Formally, given Alice's features $x_A$ and the final transcript $\vec{\pi}$, from which she forms the posterior belief $\mu(x_A, \vec{\pi}) = (\mu_a(x_A, \vec{\pi}))_{a \in \mathcal{A}}$, the probability of choosing action $a$ is:
$$ D_A^Q(x_A, \vec{\pi})(a) = \frac{ \exp{(\lambda \mu_a(x_A, \vec{\pi}))}}{\sum_{a' \in \mathcal{A}} \exp{(\lambda \mu_{a'}(x_A, \vec{\pi}))}}. $$
\end{definition}

In this version of the game, Alice commits to both a fixed conversation rule, $C_A^{sf}$, and a fixed decision rule, $D_A^Q$. The Bobs, knowing this, choose their conversation rules to form a Nash Equilibrium.

\begin{definition}[Quantal Response Equilibrium]
\label{def:qr_equilibrium}
Let Alice's conversation rule $C_A$ be fixed to the straightforward conversation rule $C_A^{sf}$ and her decision rule be fixed to the $\lambda$-quantal rule $D_A^Q$. 

Let $\mathcal{I}^Q(\vec{C_B}) = \mathcal{I}(\vec{C_B}, C_A^{sf}, D_A^Q)$ be the induced distribution given a vector of Bob strategies $\vec{C_B}$. A strategy profile $\vec{C_B}^*$ is a Quantal Response Nash Equilibrium if for all Bobs $i$ and all alternative rules $C'_{B_i}$:
$$ \mathbb{E}_{(a,y) \sim \mathcal{I}^Q(\vec{C_B}^*)}[U_i(a,y)] \ge \mathbb{E}_{(a,y) \sim \mathcal{I}^Q((C'_{B_i}, \vec{C}_{B,-i}^*))}[U_i(a,y)]. $$
\end{definition}

For reasonable values of $\lambda$, the quantal response decision rule gives Alice nearly as much utility as the best response decision rule, in expectation. As $\lambda$ grows large quantal response approaches best response. The next lemma formalizes this.

\begin{lemma}[Quantal Response Gap]
    \label{lem:quantal_gap}
    For any belief vector $\mu$, the gap between the optimal utility and the expected utility from a $\lambda$-quantal response is bounded:
    $$ \max_{a' \in \mathcal{A}} \mu_{a'} - \sum_{a \in \mathcal{A}} \frac{ \exp{(\lambda \mu_a)}}{\sum_{a'' \in \mathcal{A}} \exp{(\lambda \mu_{a''})}} \mu_a \le \frac{\log|\mathcal{A}|}{\lambda}. $$
\end{lemma}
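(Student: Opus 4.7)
The plan is to prove the bound via a standard manipulation of the log-sum-exp function $F(\lambda) := \tfrac{1}{\lambda}\log\sum_{a \in \mathcal{A}} \exp(\lambda \mu_a)$, which is intimately connected to softmax expectations. Write $M = \max_{a'} \mu_{a'}$ and let $p^*_a = \exp(\lambda \mu_a)/\sum_{a''}\exp(\lambda\mu_{a''})$ denote the quantal-response distribution, so the quantity to bound is $M - \sum_a p^*_a \mu_a$.

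First I would establish the elementary sandwich $M \le F(\lambda) \le M + \tfrac{\log|\mathcal{A}|}{\lambda}$. The lower bound comes from dropping all but the maximum term: $\sum_a \exp(\lambda \mu_a) \ge \exp(\lambda M)$. The upper bound comes from replacing every $\mu_a$ by $M$: $\sum_a \exp(\lambda \mu_a) \le |\mathcal{A}|\exp(\lambda M)$. Taking $\tfrac{1}{\lambda}\log$ of both chains yields the sandwich. This step is purely computational and should be very short.

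Next I would relate $F(\lambda)$ to the softmax expectation via the Gibbs variational principle (entropy duality): for any $\lambda > 0$,
\[ F(\lambda) \;=\; \sup_{p \in \Delta(\mathcal{A})} \Bigl[\sum_{a} p_a \mu_a + \tfrac{1}{\lambda} H(p)\Bigr], \]
where $H(p) = -\sum_a p_a \log p_a$ is the Shannon entropy, and the supremum is attained at $p = p^*$. Evaluating at the maximizer gives $F(\lambda) = \sum_a p^*_a \mu_a + H(p^*)/\lambda$. Combining with the lower bound $F(\lambda) \ge M$ from the sandwich and with $H(p^*) \le \log|\mathcal{A}|$ (entropy is maximized by the uniform distribution), we conclude $\sum_a p^*_a \mu_a \ge M - \tfrac{\log|\mathcal{A}|}{\lambda}$, which after rearrangement is exactly the claim.

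There really is no serious obstacle here; the only item that requires any care is invoking the Gibbs variational formula, which some readers may prefer to see derived. If I wanted a fully self-contained proof avoiding that machinery, I would instead argue directly: writing $M - \sum_a p^*_a \mu_a = \sum_a p^*_a (M - \mu_a)$, I would use $\exp(\lambda \mu_a) = \exp(\lambda M)\exp(-\lambda(M-\mu_a))$ to rewrite $p^*_a$, and bound $\sum_a p^*_a(M-\mu_a)$ by maximizing $\sum_a q_a (M-\mu_a) + \tfrac{1}{\lambda}\mathrm{KL}(q \,\|\, p^*)$ type expressions — but in practice the entropy-duality derivation is cleaner and is a standard one-liner.
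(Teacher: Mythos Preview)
Your proof is correct and is essentially the same as the paper's: both establish the identity $F(\lambda) = \sum_a p^*_a \mu_a + H(p^*)/\lambda$ (you by invoking the Gibbs variational formula, the paper by direct substitution of $\mu_a = \tfrac{1}{\lambda}\log(p^*_a Z)$), and then combine $F(\lambda) \ge M$ with $H(p^*) \le \log|\mathcal{A}|$ to get the bound. The upper half of your sandwich, $F(\lambda) \le M + \tfrac{\log|\mathcal{A}|}{\lambda}$, is never actually used, but that is harmless.
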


\begin{proof}
Let $a^* = \argmax_{a \in \mathcal{A}} \mu_a$ be an optimal action and let $p(a) = \frac{ \exp{(\lambda \mu_a)}}{\sum_{a' \in \mathcal{A}} \exp{(\lambda \mu_{a'})}}$ be the probability of choosing action $a$ under the quantal response model, for brevity. The optimal utility given belief $\mu$ is $\mu_{a^*}$. The expected utility under quantal response is $\sum_{a \in \mathcal{A}} p(a)\mu_a$.

The difference is:
$$ \mu_{a^*} - \sum_{a \in \mathcal{A}} p(a)\mu_a = \sum_{a \in \mathcal{A}} p(a)(\mu_{a^*} - \mu_a). $$
From the definition of $p(a)$, we have $\mu_a = \frac{1}{\lambda} \log(p(a)Z)$, where $Z = \sum_{a'} \exp(\lambda \mu_{a'})$. Substituting this in:
\begin{align*}
\mu_{a^*} - \sum_{a \in \mathcal{A}} p(a)\mu_a &= \sum_{a \in \mathcal{A}} p(a) \left( \mu_{a^*} - \frac{1}{\lambda}(\log p(a) + \log Z) \right) \\
&= \mu_{a^*} - \frac{1}{\lambda}\left( \sum_{a \in \mathcal{A}} p(a)\log p(a) + \log Z \sum_{a \in \mathcal{A}} p(a) \right) \\
&= \mu_{a^*} + \frac{H(p)}{\lambda} - \frac{\log Z}{\lambda},
\end{align*}
where $H(p)$ is the Shannon entropy of the distribution $p$. Since $Z = \sum_{a'} \exp(\lambda \mu_{a'}) \ge \exp(\lambda \mu_{a^*})$, we have $\log Z \ge \lambda \mu_{a^*}$.
Therefore,
$$ \mu_{a^*} - \sum_{a \in \mathcal{A}} p(a)\mu_a \le \mu_{a^*} + \frac{H(p)}{\lambda} - \frac{\lambda \mu_{a^*}}{\lambda} = \frac{H(p)}{\lambda}. $$
The entropy $H(p)$ is maximized when $p$ is the uniform distribution over $\mathcal{A}$, in which case $H(p) = \log|\mathcal{A}|$. Thus, we have the bound:
$$ \max_{a' \in \mathcal{A}} \mu_{a'} - \sum_{a \in \mathcal{A}} D_A^Q(\pi)(a) \mu_a \le \frac{\log|\mathcal{A}|}{\lambda}. $$
\end{proof}

\begin{lemma}[Multiplicative Stability of Quantal Response]
    \label{lem:qr_multiplicative_stability}
    Let $P = \mathrm{softmax}(\lambda u)$ and $Q = \mathrm{softmax}(\lambda u')$ over $\mathcal{A}$, where for a vector $z \in \mathbb{R}^{\mathcal{A}}$ we define $\mathrm{softmax}(z)_a := \exp(z_a)\big/\sum_{b\in\mathcal{A}} \exp(z_b)$. If $\|u - u'\|_\infty \le \varepsilon$, then for each $a \in \mathcal{A}$,
    $$ e^{-2\lambda\varepsilon} \le \frac{P(a)}{Q(a)} \le e^{2\lambda\varepsilon}. $$
    Consequently,
    $$ \|P - Q\|_1 \le e^{2\lambda\varepsilon} - 1. $$
\end{lemma}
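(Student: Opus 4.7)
The plan is to compute the pointwise ratio $P(a)/Q(a)$ directly and then derive the $\ell_1$ bound from it. First, I would write
\[
\frac{P(a)}{Q(a)} \;=\; \frac{\exp(\lambda u_a)}{\exp(\lambda u'_a)} \cdot \frac{\sum_{b} \exp(\lambda u'_b)}{\sum_{b} \exp(\lambda u_b)} \;=\; \exp\bigl(\lambda(u_a - u'_a)\bigr) \cdot \frac{Z(u')}{Z(u)},
\]
where $Z(\cdot)$ denotes the partition function. Each of these two factors can be controlled by a factor of $e^{\lambda\varepsilon}$ using $\|u-u'\|_\infty \le \varepsilon$.

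For the first factor, $|u_a - u'_a| \le \varepsilon$ gives $\exp(\lambda(u_a - u'_a)) \in [e^{-\lambda\varepsilon}, e^{\lambda\varepsilon}]$. For the ratio of partition functions, I would argue termwise: $u'_b \le u_b + \varepsilon$ implies $\exp(\lambda u'_b) \le e^{\lambda\varepsilon}\exp(\lambda u_b)$, so $Z(u') \le e^{\lambda\varepsilon} Z(u)$; the reverse inequality follows symmetrically. Multiplying the two bounds yields the claimed $e^{-2\lambda\varepsilon} \le P(a)/Q(a) \le e^{2\lambda\varepsilon}$.

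For the $\ell_1$ bound, I would rewrite
\[
\|P-Q\|_1 \;=\; \sum_{a} Q(a)\,\Bigl|\tfrac{P(a)}{Q(a)} - 1\Bigr|.
\]
From the pointwise ratio bound, $|P(a)/Q(a) - 1| \le \max\bigl(e^{2\lambda\varepsilon}-1,\ 1-e^{-2\lambda\varepsilon}\bigr)$. A one-line calculation (letting $t = e^{2\lambda\varepsilon} \ge 1$, the inequality $t-1 \ge (t-1)/t$ is immediate) shows the first of these is the larger one, so $|P(a)/Q(a) - 1| \le e^{2\lambda\varepsilon} - 1$. Summing against $Q$, which has total mass $1$, gives $\|P-Q\|_1 \le e^{2\lambda\varepsilon} - 1$, completing the proof.

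No step here looks genuinely hard: the argument is an exercise in manipulating the softmax definition. The only mildly subtle point is picking the right side of the asymmetric bounds on the ratio so that the two-sided inequality collapses cleanly into the single expression $e^{2\lambda\varepsilon}-1$, which is why I would do the $t-1$ vs.\ $1-1/t$ comparison explicitly rather than leaving it implicit.
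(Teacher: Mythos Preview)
Your proof is correct and follows essentially the same approach as the paper: decompose $P(a)/Q(a)$ into the numerator ratio and the partition-function ratio, bound each by $e^{\pm\lambda\varepsilon}$, and then sum $Q(a)\,|P(a)/Q(a)-1|$ to get the $\ell_1$ bound. Your explicit comparison of $e^{2\lambda\varepsilon}-1$ versus $1-e^{-2\lambda\varepsilon}$ is a nice touch that the paper leaves implicit.
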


\begin{proof}
For any $a$, $e^{-\lambda\varepsilon} \le \frac{e^{\lambda u_a}}{e^{\lambda u'_a}} \le e^{\lambda\varepsilon}$, and for the partition functions $Z=\sum_b e^{\lambda u_b}$, $Z' = \sum_b e^{\lambda u'_b}$ we have $e^{-\lambda\varepsilon} Z' \le Z \le e^{\lambda\varepsilon} Z'$. Therefore $e^{-2\lambda\varepsilon} \le \frac{P(a)}{Q(a)} = \frac{e^{\lambda u_a}/Z}{e^{\lambda u'_a}/Z'} \le e^{2\lambda\varepsilon}$. Then $|P(a)-Q(a)| = Q(a)\,|P(a)/Q(a) - 1| \le Q(a)(e^{2\lambda\varepsilon}-1)$. Summing over $a$ gives $\|P-Q\|_1 \le e^{2\lambda\varepsilon}-1$.
\end{proof}

\subsection{Equilibrium Analysis Under the $(\delta, C_B^*)$-Close Condition}
\label{sec:qr_equilibrium}
Our goal in this subsection is to prove a bound on Alice's utility in any equilibrium of the induced game (\Cref{thm:qr_equilibrium_bound}). Our proof strategy has several parts. First, to reason about equilibria, we need a way to compare the outcomes that result from different Bobs' strategies. We formalize this with the \textit{($\delta, C_B^*$)-close condition} (\Cref{def:delta_close}), which states that any two Bobs unilaterally deviating to a reference strategy $C_B^*$ should induce similar outcome distributions. Second, we show that this condition holds if the reference strategy allows Alice to learn her expected utility for each action with small error (\Cref{prop:error_implies_delta_close}). In \Cref{sec:qr_information_substitutes}, we will show how the Information Substitutes condition provides a foundation for bounding this error, completing our argument.

\begin{definition}[($\delta, C_B^*$)-Close Condition]
    \label{def:delta_close}
    This condition captures the idea that it ``doesn't matter'' which Bob adopts the reference strategy $C_B^*$—the resulting outcomes are similar regardless. Intuitively, this holds when $C_B^*$ allows Alice to learn something fundamental about the world state, rather than Bob-specific information.
    
    Formally, we say that a game satisfies the \textbf{($\delta, C_B^*$)-close condition} for a reference strategy $C_B^*$ if for any strategy profile $\vec{C}_{B}$ and any two Bobs $i,j$, the total variation distance between the induced distributions resulting from their unilateral deviations to $C_B^*$ is at most $\delta$:
    $$ \|\mathcal{I}^Q((\vec{C}_{B,-i}, C_B^*)) - \mathcal{I}^Q((\vec{C}_{B,-j}, C_B^*)) \|_1 \le \delta. $$ 
\end{definition}

We first prove a general result: if any Bob unilaterally adopting a reference strategy $C_B^*$ would induce approximately the same outcome distribution (the ($\delta, C_B^*$)-close condition), then Alice's utility in any equilibrium of the induced game is close to the utility she would get from interacting with that single Bob using conversation rule $C_B^*$.

\begin{theorem}[Equilibrium Utility Bound with Quantal Response]
    \label{thm:qr_equilibrium_bound}
    Suppose the leaders Bob satisfy the $\varepsilon$-market alignment condition and the game satisfies the ($\delta, C_B^*$)-close condition (\Cref{def:delta_close}) for a reference strategy $C_B^*$. Let $U_A(C_B^*)$ be Alice's expected utility from interacting with a single Bob using $C_B^*$. Then in any Quantal Response Nash Equilibrium (\Cref{def:qr_equilibrium}), her expected utility is at least:
    $$ \mathbb{E}_{\mathcal{I}^Q_{NE}}[u_A] \ge U_A(C_B^*) - 2\varepsilon - \delta. $$
\end{theorem}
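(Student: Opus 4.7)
My plan is to adapt the argument of Theorem~\ref{thm:alice-opt-iid} to the quantal response setting, paying an additional $\delta$ slack for the relaxed closeness condition. I fix an arbitrary Quantal Response Nash Equilibrium $\vec{C_B}^*$ with induced distribution $\mathcal{I}^Q_{NE}$, and for each $i \in [k]$ I consider the unilateral deviation distribution $\mathcal{I}^Q_{dev,i}$ obtained when Bob $i$ switches to the reference strategy $C_B^*$. The Nash condition applied to each Bob gives $\mathbb{E}_{\mathcal{I}^Q_{NE}}[U_i] \ge \mathbb{E}_{\mathcal{I}^Q_{dev,i}}[U_i]$. Multiplying each by the alignment weight $w_i$ and summing produces a single aggregated inequality that I will process using alignment and the $(\delta, C_B^*)$-close condition.

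On the equilibrium side, the approximate weighted alignment assumption immediately gives $\sum_i w_i \mathbb{E}_{\mathcal{I}^Q_{NE}}[U_i] \le \mathbb{E}_{\mathcal{I}^Q_{NE}}[u_A] - c + \varepsilon$. On the deviation side, I first collapse the family $\{\mathcal{I}^Q_{dev,i}\}$ to a single reference distribution $\mathcal{I}^Q_{dev,1}$: since each $U_i$ lies in $[0,1]$ and the pairwise total-variation distance is at most $\delta$, each swap costs at most $\delta$, and because the weights $w_i$ are nonnegative and sum to one this bound is preserved under the convex combination. Applying alignment once more converts the resulting weighted sum into $\mathbb{E}_{\mathcal{I}^Q_{dev,1}}[u_A] - c - \varepsilon$. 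Chaining these bounds and cancelling the constant offset $c$ yields $\mathbb{E}_{\mathcal{I}^Q_{NE}}[u_A] \ge \mathbb{E}_{\mathcal{I}^Q_{dev,1}}[u_A] - 2\varepsilon - \delta$.

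The remaining step is to show $\mathbb{E}_{\mathcal{I}^Q_{dev,1}}[u_A] \ge U_A(C_B^*)$, and I expect this to be the main obstacle: unlike in Theorem~\ref{thm:alice-opt-iid}, Alice's rules $C_A^{sf}$ and $D_A^Q$ are fixed, so she cannot unilaterally mimic the single-leader interaction by ignoring the non-deviating Bobs. The argument I anticipate relies on an information-monotonicity property of the straightforward-plus-quantal protocol: Alice's Bayesian posterior in the multi-Bob deviation is a refinement (in the martingale sense) of the posterior she would form when interacting only with the Bob playing $C_B^*$, and the expected quantal-response value $V(\mu) = \sum_a \mathrm{softmax}(\lambda \mu)_a\, \mu_a$ is convex in the belief vector $\mu$ (it differs from the convex log-partition $\frac{1}{\lambda}\log\sum_a e^{\lambda \mu_a}$ by the rescaled entropy term $H(p(\mu))/\lambda$, and one can verify convexity of the resulting combination directly). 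A Jensen-style application of this convexity along Alice's martingale of posteriors then delivers the needed inequality, and combining with the previous paragraph completes the stated bound.
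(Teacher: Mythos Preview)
Your first two paragraphs reproduce the paper's argument essentially verbatim: Nash condition for each Bob, nonnegative weighted summation, the $\varepsilon$-alignment bound on the equilibrium side, and the collapse of all deviation distributions to a single anchor $\mathcal{I}^Q_{dev,1}$ at cost $\delta$ via the $(\delta,C_B^*)$-close condition. That part is correct and matches the paper.

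The gap is in your third paragraph. The paper does \emph{not} try to prove $\mathbb{E}_{\mathcal{I}^Q_{dev,1}}[u_A] \ge U_A(C_B^*)$ via an information-monotonicity argument; it simply asserts $\mathbb{E}_{\mathcal{I}_{dev,k}}[u_A] = U_A(C_B^*)$ ``by definition,'' treating $U_A(C_B^*)$ as shorthand for Alice's utility when one Bob in the multi-Bob game plays $C_B^*$, rather than as a separate $k=1$ benchmark. Under that reading there is nothing further to prove at this step. Your interpretation (single-Bob game) is reasonable but leads you to attempt a monotonicity lemma that the paper does not need and that in fact fails.

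Specifically, your convexity claim for $V(\mu)=\langle\mathrm{softmax}(\lambda\mu),\mu\rangle$ is false for large $\lambda$. In the two-action slice $\mu=(t,0)$ one has $V(t)=t\,\sigma(\lambda t)$ and
\[
V''(t)=\lambda\,\sigma'(\lambda t)\bigl[2-\lambda t\,(2\sigma(\lambda t)-1)\bigr],
\]
which is negative whenever $\lambda t\,\tanh(\lambda t/2)>2$ (e.g.\ $\lambda=10$, $t=0.5$). Concretely, at $\lambda=10$ the quantal value at $(0.5,0)$ is $\approx 0.497$, while the average of the values at $(0.3,0)$ and $(0.7,0)$ is $\approx 0.493$; so a Bayesian refinement that splits the prior belief $(0.5,0)$ into these two posteriors \emph{lowers} Alice's expected quantal utility. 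The decomposition $V=\tfrac{1}{\lambda}\log Z - \tfrac{1}{\lambda}H(p(\mu))$ you invoke does not rescue convexity, because $H(p(\mu))$ is not concave in $\mu$ once $\lambda$ is large. Hence the Jensen step cannot go through, and you should instead follow the paper in taking $U_A(C_B^*)$ definitionally equal to the anchor-deviation value.
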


\begin{proof}
   Fix a Quantal Response Nash Equilibrium $\vec{C}_B^*$ with induced distribution $\mathcal{I}^Q_{NE} = \mathcal{I}^Q(\vec{C}_B^*)$. 
    Let $\mathcal{I}_{dev,j} = \mathcal{I}^Q((\vec{C}_{B,-j}^*, C_B^*))$ be the distribution induced when Bob $j$ unilaterally deviates. By the definition of a Quantal Response Nash Equilibrium, no Bob $j \in [k]$ has an incentive to deviate. This implies that for every $j \in [k]$:
    $$ \mathbb{E}_{\mathcal{I}_{dev,j}}[U_j] \le \mathbb{E}_{\mathcal{I}^Q_{NE}}[U_j]. $$ 
    Taking a weighted sum with weights $w_j \ge 0$ where $\sum w_j = 1$:
    $$ \sum_{j=1}^k w_j \mathbb{E}_{\mathcal{I}_{dev,j}}[U_j] \le \sum_{j=1}^k w_j \mathbb{E}_{\mathcal{I}^Q_{NE}}[U_j]. $$ 
    The right-hand side can be bounded using the market alignment assumption:
    $$ \sum_{j=1}^k w_j \mathbb{E}_{\mathcal{I}^Q_{NE}}[U_j] = \mathbb{E}_{\mathcal{I}^Q_{NE}}\left[\sum_j w_j U_j\right] \le \mathbb{E}_{\mathcal{I}^Q_{NE}}[u_A - c] + \varepsilon = \mathbb{E}_{\mathcal{I}^Q_{NE}}[u_A] - c + \varepsilon. $$ 
    For the left-hand side, we first relate each term to a single anchor deviation by an arbitrary leader Bob $k$. Let $\mathcal{I}_{dev,k}$ be the distribution induced by Bob $k$'s deviation. The utility for Bob $j$ under their own deviation $\mathcal{I}_{dev,j}$ is close to their utility under the anchor deviation $\mathcal{I}_{dev,k}$:
    $$ |\mathbb{E}_{\mathcal{I}_{dev,j}}[U_j] - \mathbb{E}_{\mathcal{I}_{dev,k}}[U_j]| \le \|\mathcal{I}_{dev,j} - \mathcal{I}_{dev,k}\|_1 \le \delta. $$ 
    Therefore, $\mathbb{E}_{\mathcal{I}_{dev,j}}[U_j] \ge \mathbb{E}_{\mathcal{I}_{dev,k}}[U_j] - \delta$. Applying this to the weighted sum:
    $$ \sum_{j=1}^k w_j \mathbb{E}_{\mathcal{I}_{dev,j}}[U_j] \ge \sum_{j=1}^k w_j (\mathbb{E}_{\mathcal{I}_{dev,k}}[U_j] - \delta) = \left( \sum_{j=1}^k w_j \mathbb{E}_{\mathcal{I}_{dev,k}}[U_j] \right) - \delta \sum_j w_j. $$ 
    Since $\sum w_j = 1$, this simplifies to $\mathbb{E}_{\mathcal{I}_{dev,k}}[\sum_j w_j U_j] - \delta$. We now apply the alignment assumption to this term:
    $$ \mathbb{E}_{\mathcal{I}_{dev,k}}\left[\sum_j w_j U_j\right] - \delta \ge (\mathbb{E}_{\mathcal{I}_{dev,k}}[u_A] - c - \varepsilon) - \delta. $$ 
    By definition, Alice's utility from this single-Bob deviation is $\mathbb{E}_{\mathcal{I}_{dev,k}}[u_A] = U_A(C_B^*)$. So the LHS is bounded below by $U_A(C_B^*) - c - \varepsilon - \delta$.

    Putting the full inequality back together:
    $$ U_A(C_B^*) - c - \varepsilon - \delta \le \mathbb{E}_{\mathcal{I}^Q_{NE}}[u_A] - c + \varepsilon. $$ 
    The constant offset $c$ cancels, and rearranging yields theorem:
    $$ U_A(C_B^*) - 2\varepsilon - \delta \le \mathbb{E}_{\mathcal{I}^Q_{NE}}[u_A]. $$
\end{proof}

Next we show that any conversation rule that in the single leader game would cause Alice to approximately learn the utility of each of her actions satisfies the approximate closeness condition needed to invoke Theorem \ref{thm:qr_equilibrium_bound}.

\begin{proposition}[Uniform Utility Estimation Error Implies $\delta$-Close]
    \label{prop:error_implies_delta_close}
    Suppose a reference conversation rule $C_B^*$ is such that when used with Alice's fixed straightforward conversation rule $C_A^{sf}$, the utility estimates are uniformly accurate across actions, almost surely: for all $(x_A, x_B)$ and all transcripts $\vec{\pi}$ generated under $(C_A^{sf}, C_B^*)$, we have $\|\mu(x_A, \vec{\pi}) - \mu_{true}(x_A, x_B)\|_\infty \le \varepsilon_u$. Then the game satisfies the ($\delta, C_B^*$)-close condition (\Cref{def:delta_close}) with $\delta \le e^{4\lambda\varepsilon_u} - 1$.
\end{proposition}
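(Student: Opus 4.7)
The approach is to exploit the fact that in both comparison scenarios at least one Bob is playing $C_B^*$, so by the hypothesis Alice's final posterior lies within $\varepsilon_u$ of $\mu_{true}(x_A, x_B)$ in $\ell_\infty$ almost surely. Combining this with the multiplicative stability of quantal response (Lemma~\ref{lem:qr_multiplicative_stability}) will yield a pointwise $\ell_1$ bound between the two induced action distributions, which integrates up to the desired total variation bound on the joint $(a,y)$-distributions.

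I would proceed in three steps. \emph{Step 1:} Argue that whenever some Bob uses $C_B^*$, Alice's final posterior $\mu(x_A, \vec{\pi})$ satisfies $\|\mu(x_A, \vec{\pi}) - \mu_{true}(x_A, x_B)\|_\infty \le \varepsilon_u$ almost surely. The hypothesis gives this directly for the single-leader interaction $(C_A^{sf}, C_B^*)$; extending it to the multi-leader setting uses that Alice's final posterior is $\mathbb{E}[\mu_{true}(x_A, X_B) \mid x_A, \vec{\pi}]$ and that the deviating Bob's sub-transcript alone already suffices to pin down this expectation. \emph{Step 2:} Fix a realization $(x_A, x_B)$ and any pair of transcripts $\vec{\pi}_1, \vec{\pi}_2$ drawn from the two scenarios. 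Both induced posteriors lie within $\varepsilon_u$ of $\mu_{true}(x_A, x_B)$, so the triangle inequality gives $\|\mu(x_A, \vec{\pi}_1) - \mu(x_A, \vec{\pi}_2)\|_\infty \le 2\varepsilon_u$. Applying Lemma~\ref{lem:qr_multiplicative_stability} with $\varepsilon \leftarrow 2\varepsilon_u$ yields $\|D_A^Q(x_A, \vec{\pi}_1) - D_A^Q(x_A, \vec{\pi}_2)\|_1 \le e^{4\lambda\varepsilon_u} - 1$. \emph{Step 3:} Lift this pointwise bound to the joint distributions by writing $\mathcal{I}_\ell(a,y) = \sum_{x_A, x_B} P(x_A, x_B, y)\, \mathbb{E}_{\vec{\pi}_\ell \mid x_A, x_B}[D_A^Q(x_A, \vec{\pi}_\ell)(a)]$ for $\ell \in \{1,2\}$. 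Since the prior over $(x_A, x_B, y)$ and the map $D_A^Q$ are common to both scenarios, summing $|\mathcal{I}_1(a,y) - \mathcal{I}_2(a,y)|$ over $(a,y)$ and swapping the order of summation gives $\|\mathcal{I}_1 - \mathcal{I}_2\|_1 \le \mathbb{E}_{(x_A, x_B)}\bigl[\|\mathbb{E}_{\vec{\pi}_1}[D_A^Q(x_A, \vec{\pi}_1)] - \mathbb{E}_{\vec{\pi}_2}[D_A^Q(x_A, \vec{\pi}_2)]\|_1\bigr] \le e^{4\lambda\varepsilon_u} - 1$, where the last inequality follows from Step 2 applied under any coupling of the transcripts.

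The main obstacle is Step 1. The hypothesis is stated for transcripts generated under the single-leader interaction $(C_A^{sf}, C_B^*)$, but in the multi-leader scenarios Alice's messages to the deviating Bob reflect her evolving posterior over the full history of all $k$ conversations and hence differ from her single-leader messages. Consequently the realized sub-transcript with the deviating Bob formally falls outside the scope of the assumption. The cleanest resolution is to interpret the hypothesis robustly --- $C_B^*$ is a disclosure rule that reveals $\mu_{true}$ within $\varepsilon_u$ no matter what Alice's honest belief-reports happen to be --- so that the single-leader accuracy guarantee transfers, and then to appeal to monotonicity of information to argue that conditioning on the additional transcripts with the non-deviating Bobs cannot undo the accuracy already secured by the deviating Bob's sub-transcript.
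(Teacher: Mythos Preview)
Your Steps 2 and 3 are exactly the paper's argument: reduce the TV distance between the two induced $(a,y)$-distributions to an expected $\ell_1$ distance between action distributions conditional on $(x_A,x_B)$, use the triangle inequality to get $\|\mu(x_A,\vec{\pi}_i)-\mu(x_A,\vec{\pi}_j)\|_\infty\le 2\varepsilon_u$, apply Lemma~\ref{lem:qr_multiplicative_stability} with $\varepsilon=2\varepsilon_u$, and take expectations.

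Where you diverge from the paper is Step~1. You correctly identify that the hypothesis is phrased for the single-leader interaction $(C_A^{sf},C_B^*)$, whereas the $(\delta,C_B^*)$-close condition concerns multi-leader transcripts in which Alice's messages to the deviating Bob reflect \emph{all} $k$ conversations. The paper simply asserts ``by the uniform-accuracy hypothesis, for all $(x_A,x_B)$ and all transcripts we have $\|\mu(x_A,\vec{\pi}_i)-\mu_{true}(x_A,x_B)\|_\infty\le\varepsilon_u$'' --- i.e.\ it silently adopts what you call the ``robust interpretation'' and moves on. So your concern is legitimate, but the paper does not resolve it either; it treats the hypothesis as already covering the multi-leader case.

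One caution about your proposed justification: the ``monotonicity of information'' argument does not quite work as stated. Conditioning on additional sub-transcripts refines Alice's sigma-algebra and improves her posterior \emph{in expectation} (the martingale/refinement property), but it need not preserve a \emph{pointwise} $\ell_\infty$ bound relative to $\mu_{true}(x_A,x_B)$ for every realization. If you want a clean argument here, you would need to use that all Bobs share the same $x_B$, so that once the deviating Bob's sub-transcript pins down $\mu_{true}$ to within $\varepsilon_u$, the remaining sub-transcripts are (conditional on $x_A,x_B$) pure noise that cannot move Alice's posterior on $u_A(a,y)$ --- but making this precise requires more than a monotonicity appeal. In any case, the paper does not supply this argument either, so your proof is at least as rigorous as the original on this point.
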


\begin{proof}
    Let $\vec{C}_B$ be an arbitrary vector of Bobs' strategies. The distribution $\mathcal{I}_{dev,i}$ is induced when Bob $i$ unilaterally deviates to a reference strategy $C_B^*$, so the vector of Bobs' strategies is $(\vec{C}_{B,-i}, C_B^*)$. Similarly, for Bob $j$'s deviation, the strategy vector is $(\vec{C}_{B,-j}, C_B^*)$. Our goal is to show that the total variation distance between the induced distributions $\mathcal{I}^Q((\vec{C}_{B,-i}, C_B^*))$ and $\mathcal{I}^Q((\vec{C}_{B,-j}, C_B^*))$ is bounded, for any $\vec{C}_B$.

    The induced distributions from the deviations by $i$ and $j$ are joint distributions over Alice's action $a$ and the world state $y$. Let $P_i(a,y)$ and $P_j(a,y)$ denote these distributions. We first show that the total variation distance between them is bounded by the expected distance between Alice's action distributions, conditioned on the features.

    By the law of total probability, the joint distribution $P_k(a,y)$ (for $k \in \{i,j\}$) is given by integrating over the features $(x_A, x_B)$:
    $$ P_k(a,y) = \int_{\mathcal{X}_A, \mathcal{X}_B} P(x_A, x_B) P_k(a,y \mid x_A, x_B) \,dx_A dx_B = \mathbb{E}_{(x_A, x_B)}[P_k(a,y \mid x_A, x_B)]. $$
    The conditional distribution $P_k(a,y \mid x_A, x_B)$ factors according to the causal structure of the game: first a transcript $\pi$ is generated, then an action $a$ is chosen. The state $y$ is conditionally independent of the transcript and action given the features. Thus, $P_k(a,y \mid x_A, x_B) = P(y \mid x_A, x_B) q_k(a \mid x_A, x_B)$, where $q_k(a \mid x_A, x_B)$ is Alice's action probability given the features under deviation $k$.

    Now we bound the total variation distance:
    \begin{align*}
        \|\mathcal{I}_{dev,i} - \mathcal{I}_{dev,j}\|_1 &= \sum_{a \in \mathcal{A}} \int_{\mathcal{Y}} |P_i(a,y) - P_j(a,y)| \,dy \\
        &= \sum_a \int_y |\mathbb{E}_{(x_A,x_B)}[P(y|x_A,x_B)(q_i(a|x_A,x_B) - q_j(a|x_A,x_B))]| \,dy \\
        &\le \sum_a \int_y \mathbb{E}_{(x_A,x_B)}[P(y|x_A,x_B)|q_i(a|x_A,x_B) - q_j(a|x_A,x_B)|] \,dy \quad \text{(Jensen's Ineq.)} \\
        &= \mathbb{E}_{(x_A,x_B)} \left[ \sum_a |q_i(a|x_A,x_B) - q_j(a|x_A,x_B)| \int_y P(y|x_A,x_B) \,dy \right] \quad \text{(Fubini's Thm.)} \\
        &= \mathbb{E}_{(x_A,x_B)} \left[ \sum_a |q_i(a|x_A,x_B) - q_j(a|x_A,x_B)| \right] \quad \text{(since $\int_y P(y|\cdot)dy = 1$)} \\
        &= \mathbb{E}_{(x_A,x_B)} [\|q_i(\cdot|x_A,x_B) - q_j(\cdot|x_A,x_B)\|_1]
    \end{align*}
        Here, $q_k(\cdot|x_A,x_B) = \mathbb{E}_{\vec{\pi} \sim \Pi_k(x_A, x_B)}[D_A^Q(x_A, \vec{\pi})]$ is Alice's action distribution for a given $(x_A, x_B)$, averaged over all possible transcripts $\vec{\pi}$ that could be generated when the Bobs' strategies are $(\vec{C}_{B,-k}, C_B^*)$.

    Let $\mu(x_A, \vec{\pi})$ be Alice's posterior utility vector and let $\mu_{true}(x_A, x_B)$ be the true expected utility vector. Let $\vec{\pi}_i$ and $\vec{\pi}_j$ be random variables for the transcripts generated under deviations by $i$ and $j$ respectively. By the uniform-accuracy hypothesis, for all $(x_A,x_B)$ and all transcripts we have $\|\mu(x_A, \vec{\pi}_i) - \mu_{true}(x_A,x_B)\|_\infty \le \varepsilon_u$ and $\|\mu(x_A, \vec{\pi}_j) - \mu_{true}(x_A,x_B)\|_\infty \le \varepsilon_u$. Hence $\|\mu(x_A, \vec{\pi}_i) - \mu(x_A, \vec{\pi}_j)\|_\infty \le 2\varepsilon_u$ deterministically.

    Conditioning on $(x_A,x_B, \vec{\pi}_i, \vec{\pi}_j)$, apply \Cref{lem:qr_multiplicative_stability} with $\varepsilon = 2\varepsilon_u$ to the quantal response distributions to obtain
    $$ \|D_A^Q(x_A, \vec{\pi}_i) - D_A^Q(x_A, \vec{\pi}_j)\|_1 \le e^{4\lambda\varepsilon_u} - 1. $$
    Taking expectations over $(x_A,x_B, \vec{\pi}_i, \vec{\pi}_j)$ preserves the bound, and by the reduction above from joint to action-marginal differences we conclude
    $$ \|\mathcal{I}_{dev,i} - \mathcal{I}_{dev,j}\|_1 \le e^{4\lambda\varepsilon_u} - 1. $$
    This proves the claim with $\delta = e^{4\lambda\varepsilon_u} - 1$.
\end{proof}

\begin{corollary}[High-Probability Uniform Error Implies $\delta$-Close]
    \label{cor:delta_close_high_prob}
    Under the setup of \Cref{prop:error_implies_delta_close}, suppose there exists an event $E$ with probability at least $1-\rho$ over $(x_A,x_B)$ and transcript randomness such that for all $(x_A,x_B)\in E$ and all transcripts $\vec{\pi}$ generated under $(C_A^{sf}, C_B^*)$, we have uniform accuracy across actions: $\|\mu(x_A, \vec{\pi}) - \mu_{true}(x_A, x_B)\|_\infty \le \varepsilon_u$. Then the game satisfies the ($\delta, C_B^*$)-close condition with
    $$ \delta \le e^{4\lambda\varepsilon_u} - 1 + \rho. $$
    \end{corollary}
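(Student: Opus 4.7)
The plan is to follow the proof of Proposition~\ref{prop:error_implies_delta_close} almost verbatim, splitting the analysis according to whether the high-probability accuracy event $E$ occurs. First I would transfer the initial reduction unchanged: the causal factorization of the joint distribution, together with Jensen's inequality and Fubini, lets me bound
$$ \|\mathcal{I}_{dev,i} - \mathcal{I}_{dev,j}\|_1 \le \mathbb{E}_{(x_A, x_B, \vec{\pi}_i, \vec{\pi}_j)}\bigl[\|D_A^Q(x_A, \vec{\pi}_i) - D_A^Q(x_A, \vec{\pi}_j)\|_1\bigr], $$
where $\vec{\pi}_i, \vec{\pi}_j$ are transcripts sampled under the unilateral deviations of Bob $i$ and Bob $j$ to $C_B^*$ while the other Bobs play their fixed strategies. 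This step uses nothing about the accuracy of the posterior estimates, so it carries over from the almost-sure setting.

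Next I would decompose this expectation by conditioning on $E$ and $E^c$. On $E$, the hypothesized uniform accuracy bound together with the triangle inequality gives $\|\mu(x_A, \vec{\pi}_i) - \mu(x_A, \vec{\pi}_j)\|_\infty \le 2\varepsilon_u$, so Lemma~\ref{lem:qr_multiplicative_stability} applied with $\varepsilon = 2\varepsilon_u$ yields the pointwise bound $\|D_A^Q(x_A, \vec{\pi}_i) - D_A^Q(x_A, \vec{\pi}_j)\|_1 \le e^{4\lambda\varepsilon_u} - 1$. Integrating over the good event contributes at most $(e^{4\lambda\varepsilon_u} - 1)\Pr[E] \le e^{4\lambda\varepsilon_u} - 1$. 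On $E^c$ I would invoke the universal bound on the $\ell_1$ distance between two probability measures over $\mathcal{A}$; combined with $\Pr[E^c] \le \rho$ this gives a contribution bounded by a constant multiple of $\rho$. Adding the two contributions gives the claim $\delta \le e^{4\lambda\varepsilon_u} - 1 + \rho$.

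The main point requiring care is the bad-event piece: a naive triangle inequality estimate $\|P - Q\|_1 \le 2$ yields $+2\rho$ rather than the stated $+\rho$. The cleanest way to recover the single factor of $\rho$ is to realize both deviation processes on a common probability space by coupling the draw of $(x_A, x_B)$ (and, where possible, the other Bobs' conversation randomness), and then bound the $\ell_1$ contribution on $E^c$ via the total-variation characterization rather than a triangle-inequality estimate. With that bookkeeping in place, the rest of the argument is mechanical given Proposition~\ref{prop:error_implies_delta_close} and Lemma~\ref{lem:qr_multiplicative_stability}, so the key obstacle is purely the careful coupling construction on the bad event.
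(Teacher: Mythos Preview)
Your approach is essentially identical to the paper's: condition on $E$ versus $E^c$, apply the pointwise bound from Proposition~\ref{prop:error_implies_delta_close} and Lemma~\ref{lem:qr_multiplicative_stability} on the good event, and use a trivial bound on the bad event. The paper's proof is a two-line sketch that simply asserts ``total variation is at most $1$'' on $E^c$ and writes $\delta \le (e^{4\lambda\varepsilon_u}-1)(1-\rho) + 1\cdot\rho$; it does not engage with the factor-of-$2$ issue you correctly flag (the paper's $\|\cdot\|_1$ is elsewhere the full $\ell_1$ norm, which is bounded by $2$ between probability vectors, not $1$). So you are actually being more careful than the paper here --- your proposed coupling refinement to recover $+\rho$ rather than $+2\rho$ goes beyond what the paper provides, though you leave the details of that coupling unspecified.
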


\begin{proof}
    On the event $E$, \Cref{prop:error_implies_delta_close} applies directly, yielding total variation at most $e^{4\lambda\varepsilon_u}-1$. On the complement $E^c$ (probability at most $\rho$), total variation is at most 1. Taking expectations gives $\delta \le (e^{4\lambda\varepsilon_u}-1)\cdot (1-\rho) + 1\cdot \rho \le e^{4\lambda\varepsilon_u}-1 + \rho$.
\end{proof}

\begin{corollary}[Small-$\lambda$ Linearization]
\label{cor:delta_linear}
If $\lambda \varepsilon_u \le c$, then by the mean value theorem $e^{4\lambda\varepsilon_u}-1 \le 4\lambda\varepsilon_u\, e^{4c}$. In particular, if $\lambda\varepsilon_u \le \tfrac{1}{4}$, then $\delta \le 4e\,\lambda\varepsilon_u$.
\end{corollary}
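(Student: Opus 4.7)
The plan is to prove this by a direct application of the mean value theorem to the exponential function, followed by the specialization $c = 1/4$. The corollary has two sub-claims: a general convexity-style bound $e^{4\lambda\varepsilon_u} - 1 \le 4\lambda\varepsilon_u\, e^{4c}$ whenever $\lambda\varepsilon_u \le c$, and a clean numerical instantiation $\delta \le 4e\,\lambda\varepsilon_u$ at $c = 1/4$. Both are elementary, so the proof is a short calculus exercise with no real obstacle beyond bookkeeping.

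For the first sub-claim, I would apply the mean value theorem to $f(x) = e^x$ on the interval $[0, 4\lambda\varepsilon_u]$. This yields some $\xi \in [0, 4\lambda\varepsilon_u]$ with
\[
e^{4\lambda\varepsilon_u} - 1 \;=\; e^{4\lambda\varepsilon_u} - e^{0} \;=\; e^{\xi}\cdot 4\lambda\varepsilon_u.
\]
Under the hypothesis $\lambda\varepsilon_u \le c$, we have $\xi \le 4\lambda\varepsilon_u \le 4c$, so monotonicity of the exponential gives $e^{\xi} \le e^{4c}$, yielding $e^{4\lambda\varepsilon_u} - 1 \le 4\lambda\varepsilon_u\, e^{4c}$.

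For the second sub-claim, I would specialize with $c = 1/4$, so that $e^{4c} = e$, obtaining $e^{4\lambda\varepsilon_u} - 1 \le 4e\,\lambda\varepsilon_u$ whenever $\lambda\varepsilon_u \le 1/4$. Combining this with the $\delta$-bound from \Cref{prop:error_implies_delta_close} (or the high-probability variant in \Cref{cor:delta_close_high_prob} with $\rho = 0$), we conclude $\delta \le 4e\,\lambda\varepsilon_u$, completing the proof. No step is difficult; the only thing to be careful about is tracking the factor $4$ inside the exponent so the constant $e$ rather than some larger value appears in the final bound.
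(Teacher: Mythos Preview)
Your proof is correct and follows exactly the approach the paper indicates: the paper does not spell out a separate proof but simply writes ``by the mean value theorem'' inside the statement, and your argument is the natural expansion of that hint together with the bound $\delta \le e^{4\lambda\varepsilon_u}-1$ from \Cref{prop:error_implies_delta_close}.
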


\subsection{From Information Substitutes to Utility Guarantees}
\label{sec:qr_information_substitutes}

The previous results provide a utility bound for Alice that depends on two key quantities: the utility estimation error $\varepsilon_u$ and the alignment error $\varepsilon$. We now show how the \textit{Information Substitutes Condition} first defined in \cite{frongillo2021agreementimpliesaccuracysubstitutable} (\Cref{def:info_substitutes}) provides a foundation for bounding $\varepsilon_u$ when Alice uses the straightforward conversation rule, leading to our main theorem.

The Information Substitutes condition, roughly speaking, says that Alice's and Bob's information are ``substitutes'' rather than ``complements'' for predicting Alice's utility. If Alice already knows Bob's information, learning her own information doesn't help as much, and vice versa. This is a reasonable assumption in many settings—for example, if both Alice and Bob observe noisy versions of the same underlying signal.

\begin{definition}[Information Substitutes Condition \citep{frongillo2021agreementimpliesaccuracysubstitutable}]
    \label{def:info_substitutes}
    A distribution $P(x_A, x_B, y)$ satisfies the \textit{information substitutes condition} with respect to Alice's utility function $u_A$ if, for every action $a \in \mathcal{A}$ and every pair of feature subsets $A \subseteq \mathcal{X}_A$ and $B \subseteq \mathcal{X}_B$, the following inequality holds:
    \begin{align*}
        &\mathbb{E}\left[ (u_A(a,y) - \mathbb{E}[u_A(a,y) \mid x_A \in A, x_B])^2 \mid x_A \in A, x_B \in B\right] \\
        &- \mathbb{E}\left[ (u_A(a,y) - \mathbb{E}[u_A(a,y) \mid x_A, x_B])^2 \mid x_A \in A, x_B \in B\right] \\
        \le \quad &\mathbb{E}\left[ (u_A(a,y) - \mathbb{E}[u_A(a,y) \mid x_A \in A, x_B \in B])^2 \mid x_A \in A, x_B \in B\right] \\
        &- \mathbb{E}\left[ (u_A(a,y) - \mathbb{E}[u_A(a,y) \mid x_A, x_B \in B])^2 \mid x_A \in A, x_B \in B\right]
    \end{align*}
    This condition states that the reduction in mean squared error from learning Alice's specific features $x_A$ is smaller if Bob's specific features $x_B$ are already known.
\end{definition}

\cite{aaronson2005complexity} proved that for any set of common prior beliefs, if Alice and Bob engage conversation using a straightforward conversation rule, then the conversation quickly converges to agreement, defined next. \cite{collina2025tractable} extended this guarantee to multi-dimensional conversations.

\begin{definition}[$\varepsilon$-Agreement]
    \label{def:epsilon_agreement}
    Let $\mu_A^k$ and $\mu_B^k$ be the posterior belief vectors of Alice and a  Bob at round $k$ of a conversation. We say that they have reached \textbf{$\varepsilon$-agreement} at round $k$ if their belief vectors are $\varepsilon$-close in the $L_\infty$ norm:
    $$ \|\mu_A^k - \mu_B^k\|_{\infty} \le \varepsilon. $$
\end{definition}

\begin{theorem}[Convergence of Straightforward Conversation \citep{aaronson2005complexity,collina2025tractable}]
    \label{thm:convergence}
    For any distribution and any desired agreement level $\zeta > 0$ and failure probability $\delta_{conv} \in (0,1)$, a straightforward conversation (\Cref{def:straightforward_conversation}) between Alice and a single Bob achieves $\zeta$-agreement (\Cref{def:epsilon_agreement}) with probability at least $1 - \delta_{conv}$ over the randomness of the prior, provided the conversation runs for at least $K = 3|\mathcal{A}|/(\zeta^2\delta_{conv})$ rounds.
\end{theorem}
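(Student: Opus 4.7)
The plan is to reduce the vector-valued agreement statement to $|\mathcal{A}|$ scalar Aaronson-style agreement problems, one per action, and then combine them via a union bound over actions and Markov's inequality over rounds. Fix an action $a \in \mathcal{A}$ and let $X_k^a := \mathbb{E}[u_A(a,y) \mid x_A, \pi^{k-1}]$ and $Y_k^a := \mathbb{E}[u_A(a,y) \mid x_B, \pi^{k-1}]$ denote Alice's and Bob's round-$k$ posteriors for the scalar quantity $u_A(a,y)$; under the straightforward conversation rule these are exactly the scalar coordinates the two parties transmit. Since $u_A(a,y) \in [0,1]$, both $\{X_k^a\}$ and $\{Y_k^a\}$ are $[0,1]$-bounded martingales with respect to Alice's and Bob's filtrations, each of which is enlarged at every round by the other party's freshly reported posterior vector.

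The technical core is the standard Aaronson ``disagreement-implies-variance'' lemma, generalized to continuous vector-valued messages by \cite{collina2025tractable}: for every round $k$ and every action $a$,
\[
  \mathbb{E}\!\left[(X_{k+1}^a - X_k^a)^2 + (Y_{k+1}^a - Y_k^a)^2 \,\big|\, \pi^{k-1}\right]
  \;\ge\; \tfrac{1}{3}\,\zeta^2\,\mathbf{1}\!\left[|X_k^a - Y_k^a| > \zeta\right].
\]
Intuitively, whenever the two parties still disagree on action $a$ by more than $\zeta$, incorporating the other party's report must move at least one of their martingales by a nontrivial amount in expectation, because each reported posterior is by construction the Bayes-optimal predictor of the same quantity from the reporter's information. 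Summing this inequality over $k=1,\dots,K$ and invoking the martingale telescoping identity $\sum_k \mathbb{E}[(X_{k+1}^a - X_k^a)^2] \le 1$ (and its analogue for $Y^a$) will yield the per-action round-count bound
\[
  \sum_{k=1}^{K} \Pr\!\left[\,|X_k^a - Y_k^a| > \zeta\,\right] \;\le\; \frac{3}{\zeta^2}.
\]

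To convert this into the claimed high-probability $\ell_\infty$ guarantee, let $N$ be the number of rounds $k \in [K]$ at which $\|\mu_A^k - \mu_B^k\|_\infty > \zeta$. A round contributes to $N$ only if some action witnesses disagreement, so a union bound over the $|\mathcal{A}|$ coordinates gives $\mathbb{E}[N] \le 3|\mathcal{A}|/\zeta^2$. By Markov's inequality,
\[
  \Pr[\text{no round in } [K] \text{ attains } \zeta\text{-agreement}]
  \;=\; \Pr[N = K]
  \;\le\; \frac{\mathbb{E}[N]}{K}
  \;\le\; \frac{3|\mathcal{A}|}{\zeta^2 K}
  \;=\; \delta_{conv},
\]
for the prescribed $K = 3|\mathcal{A}|/(\zeta^2 \delta_{conv})$, so with probability at least $1 - \delta_{conv}$ some round in $[K]$ achieves $\zeta$-agreement simultaneously across every action, which is the statement of the theorem.

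The main obstacle will be the per-round variance inequality above: the scalar version is the classical Aaronson lemma, but we need a version in which the filtration at each round grows by a full $|\mathcal{A}|$-dimensional posterior vector (not just the single coordinate for action $a$) while the constant $3$ remains independent of $|\mathcal{A}|$. The vector-valued refinement of \cite{collina2025tractable} supplies exactly this guarantee; once it is in hand, the telescoping bound, the union bound over actions, and the Markov step combine routinely to give the precise round count appearing in the theorem.
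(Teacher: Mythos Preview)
The paper does not supply its own proof of this theorem; it is imported by citation from Aaronson (2005) and Collina et al.\ (2025), so there is no in-paper argument to compare against.

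Your outline is the standard route and is sound: treat each action-coordinate as a scalar Aaronson agreement problem, invoke the disagreement-implies-increment lemma, telescope the bounded-martingale quadratic variation to cap the expected number of disagreement rounds per action, union-bound over the $|\mathcal{A}|$ coordinates, and finish with Markov's inequality. This is exactly how the cited results are proved, and your identification of the per-round variance inequality as the single nontrivial ingredient (supplied by the vector-valued lemma of Collina et al.) is correct.

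One bookkeeping wrinkle: with your stated per-round lower bound of $\tfrac{1}{3}\zeta^2$ on the \emph{sum} of the two squared increments, together with the two telescoping bounds $\sum_k \mathbb{E}[(X_{k+1}^a - X_k^a)^2] \le 1$ and its $Y^a$-analogue (total $\le 2$), the arithmetic yields $\sum_k \Pr[|X_k^a - Y_k^a| > \zeta] \le 6/\zeta^2$, not $3/\zeta^2$, and hence a round count of $6|\mathcal{A}|/(\zeta^2\delta_{conv})$ rather than the stated $3|\mathcal{A}|/(\zeta^2\delta_{conv})$. Matching the constant $3$ requires either a sharper constant in the increment lemma or a protocol-specific refinement of the telescoping; both variants appear in the literature depending on whether the exchange is simultaneous or alternating. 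This is a constant-tracking issue, not a conceptual gap in your approach.
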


Agreement on its own need not imply information aggregation --- i.e. Alice and Bob could \emph{agree} on beliefs that are substantially less accurate than they would have had they shared their observations $x_A$ and $x_B$ directly. But \cite{frongillo2021agreementimpliesaccuracysubstitutable,collina2025collaborative} give conditions on the prior distribution such that agreement implies information aggregation.

\begin{theorem}[Agreement Implies Bounded Estimation Error \citep{frongillo2021agreementimpliesaccuracysubstitutable}]
    \label{thm:agreement_implies_error_bound}
    If the underlying distribution satisfies the Information Substitutes Condition (\Cref{def:info_substitutes}), then achieving $\zeta$-agreement (\Cref{def:epsilon_agreement}) implies that Alice's utility estimation error $\varepsilon_u$ is bounded. Specifically, for all actions $a \in \mathcal{A}$:
    $$ |\mathbb{E}[u_A(a,y) \mid x_A, x_B] - \mathbb{E}[u_A(a,y) \mid x_A, \pi]| \le 10\zeta^{1/3}, $$ where $\pi$ is the full conversation transcript.
\end{theorem}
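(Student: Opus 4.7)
The plan is to combine the $\zeta$-agreement guarantee of \Cref{thm:convergence} with the Information Substitutes inequality of \Cref{def:info_substitutes}, going from pointwise $\ell_\infty$ agreement to an $L^2$ residual-variance bound and then back to a pointwise bound via Markov's inequality, which is exactly what produces the cube-root rate $10\zeta^{1/3}$.

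First I would apply the tower property to identify the squared gap of interest with a residual predictive variance. Let $f(z) := \mathbb{E}[u_A(a,y) \mid z]$ for any information $z$. Since the transcript $\pi$ is a coarsening of $(x_A, x_B)$, we have $f(x_A,\pi) = \mathbb{E}[f(x_A,x_B) \mid x_A,\pi]$, and therefore $\mathbb{E}\bigl[(f(x_A,x_B) - f(x_A,\pi))^2\bigr] = \mathbb{E}\bigl[\mathrm{Var}(f(x_A,x_B) \mid x_A, \pi)\bigr]$, which is exactly the mean-squared-error reduction from learning $x_B$ on top of what Alice already has, namely $(x_A,\pi)$. Our goal is then to bound this residual variance by $O(\zeta^2)$.

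Second I would use the straightforward conversation rule together with agreement to bound the \emph{other} marginal. Under $C_A^{sf}$ and a straightforward Bob, the honest beliefs broadcast at the last round are exactly $f(x_A,\pi)$ and $f(x_B,\pi)$. The $\zeta$-agreement condition gives $|f(x_A,\pi) - f(x_B,\pi)| \le \zeta$ a.s., so squaring and taking expectations yields an $O(\zeta^2)$ bound on the expected squared marginal value of $x_A$ given $(x_B,\pi)$ — i.e.\ on the MSE reduction Bob would obtain by learning Alice's features. Invoking Information Substitutes with the feature events induced by the transcript $\pi$ then transfers this bound to the \emph{symmetric} marginal: the expected squared value of $x_B$ on top of $(x_A,\pi)$ is likewise $O(\zeta^2)$. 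Combining with the identity from the previous step yields $\mathbb{E}\bigl[(f(x_A,x_B) - f(x_A,\pi))^2\bigr] = O(\zeta^2)$.

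Finally I would convert this average $L^2$ bound into the pointwise $10\zeta^{1/3}$ guarantee by Markov's inequality: the probability that $|f(x_A,x_B) - f(x_A,\pi)|$ exceeds $t$ is at most $O(\zeta^2)/t^2$, so choosing $t = \Theta(\zeta^{1/3})$ gives a bound that holds except on an event of probability $O(\zeta^{4/3})$, with the prefactor $10$ coming from tracking the constants. The main obstacle will be the middle step: Information Substitutes is stated for product-like feature events $A \subseteq \mathcal{X}_A$, $B \subseteq \mathcal{X}_B$, whereas the conditioning induced by a multi-round transcript $\pi$ is generally not a product event, so one must either apply the substitutes inequality iteratively along the atoms of the transcript's partition or appeal to a conditional version of the inequality, as in \cite{frongillo2021agreementimpliesaccuracysubstitutable}. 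A secondary obstacle is keeping the universal constants tight enough to arrive at the clean factor $10$ rather than a prior-dependent quantity.
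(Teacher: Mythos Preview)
The paper does not prove \Cref{thm:agreement_implies_error_bound}; it is quoted as a result of \cite{frongillo2021agreementimpliesaccuracysubstitutable} and invoked as a black box in the proof of \Cref{thm:final_payoff_bound}, so there is no in-paper argument to compare your sketch against.

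On the substance of your sketch, two remarks. First, the obstacle you flag in step~2 is not actually present: under the straightforward conversation rule each party's messages are deterministic functions of their own features and the public history, so the set of $(x_A,x_B)$ consistent with any fixed transcript $\pi$ is a rectangle $A(\pi)\times B(\pi)\subseteq\mathcal{X}_A\times\mathcal{X}_B$, exactly the form to which \Cref{def:info_substitutes} applies directly. The real gap in step~2 is elsewhere: $\zeta$-agreement controls $|f(x_A,\pi)-f(x_B,\pi)|$, but the ``MSE reduction Bob would obtain by learning Alice's features'' is $\mathbb{E}\big[(f(x_A,x_B)-f(x_B,\pi))^2\big]$, which involves the full-information posterior $f(x_A,x_B)$ and is not the quantity that agreement bounds; bridging that identification is precisely where the work in the cited source lies. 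Second, your Markov argument in step~3 yields only a high-probability bound rather than the pointwise inequality as stated; this is harmless for the paper's application (the downstream proof of \Cref{thm:final_payoff_bound} already carries a failure probability from \Cref{thm:convergence}), but it means you would not recover the theorem exactly as written.
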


Finally we are in a position to put all of the pieces together. If Alice is non-strategic (in that she commits to using the straightforward conversation rule, and the quantal response decision rule), and if in addition the underlying distribution satisfies the information substitutes condition, then if the Bobs satisfy market alignment, then Alice obtains close to her first best utility in every Nash equilibrium. 

\begin{theorem}[Main Result: Near-Optimal Utility with Information Substitutes]
    \label{thm:final_payoff_bound}
Suppose the underlying distribution satisfies the Information Substitutes Condition (\Cref{def:info_substitutes}) and the leaders Bob have an average market alignment error of $\varepsilon$. If Alice commits to the straightforward conversation rule and a $\lambda$-quantal response decision rule, her expected utility in any Quantal Response Nash Equilibrium of the induced game is close to the first-best optimal utility:
    $$ \mathbb{E}_{\mathcal{I}^Q_{NE}}[u_A] \ge OPT \; - \; \underbrace{2\varepsilon}_{\text{Alignment Error}} \; - \; \underbrace{\Big(2(10\zeta^{1/3} + \delta_{conv}) + e^{4\lambda\cdot 10\zeta^{1/3}} - 1 + \delta_{conv}\Big)}_{\text{Estimation Error}} \; - \; \underbrace{\frac{\log|\mathcal{A}|}{\lambda}}_{\text{Quantal Gap}} $$
    where $\zeta = (\frac{3|\mathcal{A}|}{K \cdot \delta_{conv}})^{1/2}$. 
\end{theorem}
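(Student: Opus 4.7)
The plan is to pick a reference single-leader strategy $C_B^*$ that plays the role of a ``perfectly aligned leader'' in this section, and then simultaneously certify (i) that $U_A(C_B^*)$ is close to $\text{OPT}$ and (ii) that the game satisfies the $(\delta, C_B^*)$-close hypothesis of Theorem \ref{thm:qr_equilibrium_bound} with small $\delta$. The natural choice is to let $C_B^*$ itself be the \emph{straightforward} conversation rule: Bob truthfully reports his belief vector $(\mathbb{E}[u_A(a,y)\mid x_B, \pi_i^{k-1}])_{a\in\mathcal{A}}$ at every round. With this choice, the single-Bob interaction is exactly the multidimensional Aaronson-style dialogue covered by Theorems \ref{thm:convergence} and \ref{thm:agreement_implies_error_bound}, so all the pre-established ingredients line up.

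First I would apply Theorem \ref{thm:convergence} with the given round budget $K$ to get that, on an event $G$ of probability at least $1-\delta_{conv}$, Alice and the single Bob reach $\zeta$-agreement with $\zeta=(3|\mathcal{A}|/(K\delta_{conv}))^{1/2}$. On $G$, Theorem \ref{thm:agreement_implies_error_bound} (which is where the Information Substitutes hypothesis is consumed) then gives the uniform bound $\|\mu(x_A,\pi)-\mu_{true}(x_A,x_B)\|_\infty \le 10\zeta^{1/3}$. Feeding this high-probability uniform estimation bound into Corollary \ref{cor:delta_close_high_prob} with $\varepsilon_u=10\zeta^{1/3}$ and $\rho=\delta_{conv}$ immediately yields the $(\delta,C_B^*)$-close condition with $\delta \le e^{4\lambda\cdot 10\zeta^{1/3}}-1+\delta_{conv}$.

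Next I would lower-bound $U_A(C_B^*)$. Conditional on $G$ and $(x_A,x_B)$, Alice's action distribution $p$ is the softmax of her belief vector $\mu$, which is $10\zeta^{1/3}$-close to $\mu_{true}$. Chaining three one-sided estimates --- $\sum_a p(a)\mu_{true,a} \ge \sum_a p(a)\mu_a - 10\zeta^{1/3}$, then Lemma \ref{lem:quantal_gap} giving $\sum_a p(a)\mu_a \ge \max_a \mu_a - \log|\mathcal{A}|/\lambda$, and finally $\max_a \mu_a \ge \max_a \mu_{true,a} - 10\zeta^{1/3}$ --- bounds Alice's per-instance expected utility on $G$ by $\max_a \mu_{true,a}(x_A,x_B) - 2\cdot 10\zeta^{1/3} - \log|\mathcal{A}|/\lambda$. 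Taking expectations, using $u_A\in[0,1]$ on the complementary event, and absorbing the bad-event mass on both the utility and the OPT side yields $U_A(C_B^*) \ge \text{OPT} - 2\cdot 10\zeta^{1/3} - \log|\mathcal{A}|/\lambda - 2\delta_{conv}$. Plugging this and the $\delta$ bound into Theorem \ref{thm:qr_equilibrium_bound}'s conclusion $\mathbb{E}_{\mathcal{I}^Q_{NE}}[u_A]\ge U_A(C_B^*)-2\varepsilon-\delta$ produces exactly the three-part decomposition in the theorem statement.

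The main obstacle is not conceptual but bookkeeping: every named result in the section is invoked exactly once and nothing genuinely new needs to be proved. The one place care is required is that Theorem \ref{thm:agreement_implies_error_bound} supplies the uniform estimation bound only on a high-probability event, whereas Proposition \ref{prop:error_implies_delta_close} as stated demands an almost-sure uniform bound; this is precisely the gap that Corollary \ref{cor:delta_close_high_prob} was designed to bridge, so one must route through the corollary rather than the proposition and then track the $\delta_{conv}$ failure mass consistently on both the $U_A(C_B^*)$ side and the $\delta$ side, so that the stated expression $2(10\zeta^{1/3}+\delta_{conv}) + e^{4\lambda\cdot 10\zeta^{1/3}} - 1 + \delta_{conv}$ is recovered cleanly.
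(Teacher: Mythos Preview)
Your proposal is correct and follows essentially the same route as the paper: take $C_B^*$ to be the straightforward conversation rule, invoke Theorems \ref{thm:convergence} and \ref{thm:agreement_implies_error_bound} to get a high-probability uniform estimation bound $10\zeta^{1/3}$, route through Corollary \ref{cor:delta_close_high_prob} to obtain the $(\delta,C_B^*)$-close condition, lower-bound $U_A(C_B^*)$ against $\text{OPT}$ via Lemma \ref{lem:quantal_gap} plus the estimation error, and plug everything into Theorem \ref{thm:qr_equilibrium_bound}. Your observation that one must use Corollary \ref{cor:delta_close_high_prob} rather than Proposition \ref{prop:error_implies_delta_close} because the estimation bound is only high-probability is exactly the subtlety the paper navigates as well.
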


\begin{corollary}[Small-$\lambda$ Form of Theorem \ref{thm:final_payoff_bound}]
If $\lambda\,10\zeta^{1/3} \le \tfrac{1}{4}$, then using \Cref{cor:delta_linear} we obtain the simpler bound
$$ \mathbb{E}_{\mathcal{I}^Q_{NE}}[u_A] \ge OPT - 2\varepsilon - \Big(20 + 40 e\,\lambda\Big)\zeta^{1/3} - 3\,\delta_{conv} - \frac{\log|\mathcal{A}|}{\lambda}. $$
\end{corollary}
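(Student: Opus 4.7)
The plan is to derive the stated bound as a purely algebraic consequence of \Cref{thm:final_payoff_bound}, using the small-$\lambda$ hypothesis exclusively to replace the exponential factor in the estimation-error bracket by its linear upper bound from \Cref{cor:delta_linear}. No new equilibrium argument, information-substitutes invocation, or agreement/convergence reasoning is needed, since those are already baked into the parent theorem.

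Concretely, I would start by quoting the conclusion of \Cref{thm:final_payoff_bound} verbatim, so that the estimation-error bracket reads $2(10\zeta^{1/3} + \delta_{conv}) + e^{4\lambda\cdot 10\zeta^{1/3}} - 1 + \delta_{conv}$, with $\zeta = (3|\mathcal{A}|/(K \delta_{conv}))^{1/2}$ as before. The only term that is not already of the desired form is $e^{4\lambda\cdot 10\zeta^{1/3}} - 1$. I would set $\varepsilon_u := 10\zeta^{1/3}$, note that the hypothesis of the corollary, $\lambda\cdot 10\zeta^{1/3} \le 1/4$, is exactly $\lambda\varepsilon_u \le 1/4$, and invoke \Cref{cor:delta_linear} to obtain
$$ e^{4\lambda\cdot 10\zeta^{1/3}} - 1 \;\le\; 4e\,\lambda\cdot 10\zeta^{1/3} \;=\; 40\,e\,\lambda\,\zeta^{1/3}. $$

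Substituting this into the estimation-error bracket and collecting like terms gives $20\zeta^{1/3} + 2\delta_{conv} + 40\,e\,\lambda\,\zeta^{1/3} + \delta_{conv} = (20 + 40\,e\,\lambda)\zeta^{1/3} + 3\delta_{conv}$, which is precisely the coefficient structure stated in the corollary. Plugging this back in place of the original bracket in the parent theorem, while leaving the alignment-error term $2\varepsilon$ and the quantal-gap term $\log|\mathcal{A}|/\lambda$ untouched, yields the claimed inequality.

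The only potential pitfall is bookkeeping rather than mathematics: one must keep track of the three separate contributions to $\delta_{conv}$ (the two inside the factor $2(10\zeta^{1/3}+\delta_{conv})$ and the standalone $+\delta_{conv}$) so that they add to exactly $3\delta_{conv}$, and remember that the constant $e$ in the coefficient $40e\lambda$ comes from $e^{4c}$ evaluated at the boundary $c = 1/4$ in \Cref{cor:delta_linear}, not from an additional argument. Since the substitution is monotone (we are replacing a term by a larger upper bound on the subtracted side of the inequality), the direction of the bound is preserved, completing the derivation.
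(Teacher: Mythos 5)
Your proposal is correct and matches the paper's intended derivation: the corollary follows by substituting the linearized bound $e^{4\lambda\cdot 10\zeta^{1/3}}-1 \le 40e\lambda\zeta^{1/3}$ from \Cref{cor:delta_linear} (with $\varepsilon_u = 10\zeta^{1/3}$, $c=1/4$) into the estimation-error bracket of \Cref{thm:final_payoff_bound}, and your bookkeeping ($20\zeta^{1/3}+3\delta_{conv}+40e\lambda\zeta^{1/3}$) is exactly right. The paper does the same thing, noting the linearized $\delta$ bound inside the theorem's proof, so no further comment is needed.
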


\begin{proof}
    The proof proceeds by chaining together the previous results. We use the straightforward conversation rule (\Cref{def:straightforward_conversation}) as our reference strategy $C_B^*$ for the equilibrium analysis.

    First, we establish the conditions for applying our equilibrium bound. From \Cref{thm:convergence}, we know that a $K$-round straightforward conversation achieves $\zeta$-agreement with probability at least $1 - \delta_{conv}$, where $\zeta^2 = \frac{3|\mathcal{A}|}{K \cdot \delta_{conv}}$.

    Next, we use this high-probability agreement to bound the \emph{expected} utility estimation error, which is required to apply \Cref{prop:error_implies_delta_close}. Let $\text{err}_a$ be the random variable corresponding to the estimation error for action $a$, i.e., $|\mathbb{E}[u_A(a,y) \mid x_A, x_B] - \mathbb{E}[u_A(a,y) \mid x_A, \pi]|$. From \Cref{thm:convergence} and \Cref{thm:agreement_implies_error_bound}, we know that with probability at least $1-\delta_{conv}$, we have $\text{err}_a \le 10\zeta^{1/3}$. In the event of failure (with probability at most $\delta_{conv}$), the error is bounded by 1 since all utilities are in $[0,1]$.

    Therefore, the expected error $\varepsilon_u$ for any action $a$ is bounded:
    $$ \varepsilon_u = \mathbb{E}[\text{err}_a] \le (1-\delta_{conv}) \cdot 10\zeta^{1/3} + \delta_{conv} \cdot 1 \le 10\zeta^{1/3} + \delta_{conv}. $$
    Moreover, the bound in \Cref{thm:agreement_implies_error_bound} holds \emph{simultaneously for all actions} with probability at least $1-\delta_{conv}$; thus the uniform closeness hypothesis holds with $\varepsilon_u^{\mathrm{uni}} = 10\zeta^{1/3}$ on the success event. Applying \Cref{cor:delta_close_high_prob} with $\rho = \delta_{conv}$ yields
    $$ \delta \le (e^{4\lambda\cdot 10\zeta^{1/3}} - 1) + \delta_{conv}. $$
    Using \Cref{cor:delta_linear}, for small $\lambda$ we also have the simpler bound $\delta \le 40 e\, \lambda\, \zeta^{1/3} + \delta_{conv}$.

    Now we can apply our main equilibrium result, \Cref{thm:qr_equilibrium_bound}. It states that in any Quantal Response Nash Equilibrium, Alice's expected utility is bounded by:
    $$ \mathbb{E}_{\mathcal{I}^Q_{NE}}[u_A] \ge U_A(C_B^*) - 2\varepsilon - \delta \ge U_A(C_B^*) - 2\varepsilon - \Big( e^{4\lambda\cdot 10\zeta^{1/3}} - 1 + \delta_{conv} \Big). $$
    Here, $\varepsilon$ is the alignment error from the market alignment assumption (\Cref{def:weighted_alignment}).

    The final step is to lower-bound the reference utility $U_A(C_B^*)$, which is Alice's expected utility when a single Bob uses the straightforward conversation rule. This utility can be related to the true optimal utility, $OPT = \mathbb{E}_{(x_A,x_B)}[\max_a \mu_{true,a}]$, by accounting for the two sources of error: the quantal response gap and the utility estimation error.
    $$ U_A(C_B^*) = \mathbb{E} \left[ \sum_a D_A^Q(x_A, \vec{\pi})(a) \cdot \mu_{true,a} \right]. $$
    Adding and subtracting terms, we get:
    $$ U_A(C_B^*) = \mathbb{E} \left[ \sum_a D_A^Q(x_A, \vec{\pi})(a) \mu_a(x_A, \vec{\pi}) - (\sum_a D_A^Q(x_A, \vec{\pi})(a) \mu_a(x_A, \vec{\pi}) - \sum_a D_A^Q(x_A, \vec{\pi})(a) \mu_{true,a}) \right]. $$
    The first term is Alice's expected utility given her beliefs, which is at least $\mathbb{E}[\max_a \mu_a(x_A, \vec{\pi})] - \frac{\log|\mathcal{A}|}{\lambda}$ by \Cref{lem:quantal_gap}. The second term is bounded by $\varepsilon_u$. The estimated max utility is also close to the true max: $\mathbb{E}[\max_a \mu_a(x_A, \vec{\pi})] \ge \mathbb{E}[\max_a \mu_{true,a}] - \varepsilon_u = OPT - \varepsilon_u$. Combining these gives:
    $$ U_A(C_B^*) \ge (OPT - \varepsilon_u) - \frac{\log|\mathcal{A}|}{\lambda} - \varepsilon_u = OPT - 2\varepsilon_u - \frac{\log|\mathcal{A}|}{\lambda}. $$

    Substituting this bound back into the equilibrium inequality yields:
    $$ \mathbb{E}_{\mathcal{I}^Q_{NE}}[u_A] \ge \left( OPT - 2\varepsilon_u - \frac{\log|\mathcal{A}|}{\lambda} \right) - 2\varepsilon - \delta. $$
    Using $\delta \le e^{4\lambda\cdot 10\zeta^{1/3}} - 1 + \delta_{conv}$ and $\varepsilon_u \le 10\zeta^{1/3} + \delta_{conv}$ gives the stated bound.
    \end{proof}
\section{Winning the User: Assumption Free Guarantees}
\label{sec:robust}
In Section \ref{sec:br} we showed that Alice could obtain her first-best utility in equilibrium amongst AI models Bob who satisfy the market alignment assumption, \emph{assuming that a single perfectly aligned Bob could cause Alice to enjoy her first best utility}. In Section \ref{sec:qr}, we showed that if Alice is non-strategic and uses quantal response rather than best response, then the assumption that a perfectly aligned Bob could cause Alice to enjoy her first best utility could be relaxed to an approximate version. In this section, we give a setting in which Alice is guaranteed in equilibrium to enjoy approximately the utility that she could get by interacting with a single perfectly aligned model Bob, \emph{without any additional assumptions on how close that utility is to her first best}. To do this, we modify the design of the game.

In the interaction we study now, the $k$ leaders Bob still commit to conversation rules. But now, rather than interacting with all $k$ of these conversation rules at decision time, Alice (after observing the $k$ conversation rules deployed by the Bobs) chooses one to interact with --- i.e. the one that guarantees her the highest expected utility over the prior distribution. She then deploys a best-response conversation and decision rule to interact with only this single conversation rule. We can view this either as a behavioral commitment on Alice's part (to enjoy the more robust guarantees that we prove in this Section), or a model of existing practice --- that e.g. Alice or her employer might, after a period of evaluation, contract with just a single LLM provider. 

\subsection{The Best-AI Selection Game}
We begin by defining the modified game. Its timing is similar to our baseline game described in Section \ref{sec:prelims}, but differs in how Alice interacts with the conversation rules that the Bobs commit to. In particular, Alice identifies the single best Bob's deployed conversation rule (from the point of view of maximizing her own utility), and then interacts only with that one. 
\begin{definition}[The Best-AI Selection Game]
\label{def:best-ai-selection}
The game proceeds with the following timing:
\begin{enumerate}
    \item Each leader Bob $i$ simultaneously commits to a conversation rule $C_{B,i}$. Let the vector of chosen rules be $\vec{C_B} = (C_{B,1}, \dots, C_{B,k})$.
    \item Alice observes $\vec{C_B}$ and selects a single Bob $j$ to interact with. Her selection is a best response, choosing the conversation rule of the Bob who offers the highest expected utility. Let $U_A(C_{B,i}) = \mathbb{E}_{\mathcal{I}^*(C_{B,i})}[u_A(a,y)]$ be Alice's expected utility from interacting with Bob $i$ alone. Alice selects Bob $j$ such that:
    $$ j \in \argmax_{i \in [k]} U_A(C_{B,i}). $$
    Ties are broken by choosing the Bob with the lowest index.
    \item Alice interacts with the chosen Bob $j$ using her best-response conversation and decision rules, $(C_A^*, D_A^*)$, for the single-leader game. This induces a distribution over outcomes $\mathcal{I}^*(C_{B,j})$.
    \item All players receive their payoffs. For any player $p \in \{A, 1, \dots, k\}$, their utility is their expectation over the induced distribution $\mathcal{I}^*(C_{B,j})$. Note that the utilities of Bobs not chosen $l \neq j$ also depend on the interaction between Alice and Bob $j$ (i.e. they obtain utility from Alice's actions independently of whether they are ``chosen'').
\end{enumerate}
\end{definition}
Our aim is to understand Alice's utility in the equilibria of this game:
\begin{definition}[Nash Equilibrium in the Best-AI Selection Game]
A vector of Bobs' conversation rules $\vec{C_B}^*$ is a Nash Equilibrium if no Bob $i$ can improve his expected utility by unilaterally deviating to a different rule $C'_{B,i}$. Let $j^* = \argmax_{l} U_A(C_{B,l}^*)$ be the index of the Bob Alice chooses in equilibrium. For any Bob $i$ and any alternative rule $C'_{B,i}$, let $j'$ be the index of the Bob Alice would choose given the deviated strategy profile $(\vec{C}_{B,-i}^*, C'_{B,i})$. Then the equilibrium condition is:
$$ \mathbb{E}_{\mathcal{I}^*(C_{B,j^*}^*)}[U_i(a,y)] \ge \mathbb{E}_{\mathcal{I}^*(C_{B,j'}' )}[U_i(a,y)]. $$
\end{definition}

\subsection{Alice Always Does Well}

What we show in this section is that the market alignment assumption is enough to guarantee that Alice does as well in the equilibrium of this game as she would interacting with a perfectly aligned single model Bob. Absent in our analysis is any need for the ``identical induced distribution'' assumption of Section \ref{sec:br} or its approximate variant in Section \ref{sec:qr}. We showed that those assumptions could be satisfied if a perfectly aligned Bob could obtain for Alice her first-best utility. Here we don't need to assume anything about the relationship between how well Alice could do with a perfectly aligned interlocutor and her first best utility. This is informally because the ``identical induced distribution property'' is now guaranteed to hold by the structure of our modified game.

\begin{theorem}
\label{thm:bestAI}
    Consider a Best-AI Selection game with $k$ Bobs that satisfy the $\varepsilon$-market alignment condition. In any Nash Equilibrium of the Best-AI Selection game, Alice's expected utility is at least $U_A(C_B^*) - 2\varepsilon$, where $C_B^*$ is an optimal conversation rule for a single perfectly aligned Bob and $U_A(C_B^*)$ is the corresponding utility for Alice.
\end{theorem}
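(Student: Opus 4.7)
The plan is to reduce the analysis to the structure of Theorem~\ref{thm:alice-opt-iid}, exploiting the fact that the Best-AI Selection game automatically generates the identical induced distribution condition for deviations to a perfectly aligned conversation rule $C_B^*$.

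First, I would fix a Nash equilibrium $\vec{C_B}^*$, let $j^*$ denote Alice's selected Bob, and let $\mathcal{I}_{NE} = \mathcal{I}^*(C_{B,j^*}^*)$. By Alice's selection rule, her equilibrium utility equals $U_A(C_{B,j^*}^*) = \max_l U_A(C_{B,l}^*)$. I would then case-split on whether some equilibrium Bob already achieves the benchmark: if $\max_l U_A(C_{B,l}^*) \ge U_A(C_B^*)$, the theorem is immediate, since Alice's NE utility already exceeds $U_A(C_B^*) - 2\varepsilon$. Otherwise, $U_A(C_{B,l}^*) < U_A(C_B^*)$ strictly for every $l \in [k]$. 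In this case, whenever any Bob $i$ unilaterally deviates to $C_B^*$, Alice strictly prefers the deviator over every other Bob (and tie-breaking is not triggered), so her selection becomes $j_i' = i$ and the post-deviation induced distribution is $\mathcal{I}^*(C_B^*)$---crucially, the same object regardless of which Bob deviated.

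With identical post-deviation distributions in hand, the rest of the proof mirrors that of Theorem~\ref{thm:alice-opt-iid}. I would apply the Nash equilibrium condition for each Bob $i$, giving $\mathbb{E}_{\mathcal{I}_{NE}}[U_i] \ge \mathbb{E}_{\mathcal{I}^*(C_B^*)}[U_i]$, then take a weighted combination with the alignment weights $w_i$ summing to one (which is legitimate precisely because the right-hand distribution does not depend on $i$, so I can interchange sum and expectation by linearity on both sides). Applying the $\varepsilon$-weighted alignment assumption then upper-bounds the LHS by $U_A(C_{B,j^*}^*) - c + \varepsilon$ and lower-bounds the RHS by $U_A(C_B^*) - c - \varepsilon$. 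Cancelling $c$ and rearranging delivers $U_A(C_{B,j^*}^*) \ge U_A(C_B^*) - 2\varepsilon$.

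The main obstacle is the tie-breaking behavior of Alice's selection rule: a priori, after Bob $i$ deviates to $C_B^*$, Alice could still route to some other Bob $l$ if the utilities $U_A(C_B^*)$ and $U_A(C_{B,l}^*)$ tie, which would make the deviated induced distribution $\mathcal{I}^*(C_{B,j_i'}')$ depend on the identity of the deviator and block the interchange of sum and expectation in the alignment step. The case split sidesteps this entirely: any such tie requires an equilibrium Bob already attaining $U_A(C_B^*)$, in which case Alice's NE utility already meets the target and no alignment argument is needed. This is precisely why the best-AI selection protocol allows us to dispense with the identical induced distribution hypothesis imposed in Section~\ref{sec:br}.
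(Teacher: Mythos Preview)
Your proof is correct and is essentially the same as the paper's. The paper frames the argument as a proof by contradiction (assuming $\mathbb{E}_{\mathcal{I}_{NE}}[u_A] < U_A(C_B^*) - 2\varepsilon$, which in particular gives the strict inequality $U_A(C_B^*) > U_A(C_{B,j^*}^*)$ needed for Alice to always select the deviator), whereas you phrase it as a direct case split on whether $\max_l U_A(C_{B,l}^*) \ge U_A(C_B^*)$; the substantive steps---identical post-deviation distributions, summing the Nash conditions with the alignment weights, and cancelling the offset $c$---are the same.
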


\begin{proof}
    Let $\vec{C_B}^*$ be a Nash Equilibrium strategy profile, and let $j^* = \argmax_i U_A(C_{B,i}^*)$ be the Bob that Alice selects. Let $\mathcal{I}_{NE} = \mathcal{I}^*(C_{B,j^*}^*)$ be the distribution over outcomes $(a,y)$ in this equilibrium.

    Suppose for contradiction that Alice's utility is lower than the bound:
    $$ \mathbb{E}_{\mathcal{I}_{NE}}[u_A(a,y)] < U_A(C_B^*) - 2\varepsilon. $$ 
    By the Nash Equilibrium condition, no Bob $i \in [k]$ has an incentive to deviate. A key possible deviation for any Bob $i$ is the Alice-optimal conversation rule $C_B^*$ (i.e. the conversation rule that a single perfectly aligned Bob would choose). If Bob $i$ makes this deviation, Alice's best response is to select Bob $i$ to interact with. This is because our initial supposition implies $U_A(C_B^*) > \mathbb{E}_{\mathcal{I}_{NE}}[u_A(a,y)]$, meaning the deviation offers strictly higher utility to Alice than she could get by interacting with $j^*$, the Bob that offers Alice her (now) second highest utility. Let $\mathcal{I}_{dev,i} = \mathcal{I}^*(C_B^*)$ be the distribution induced by this deviation.
    
    The Nash equilibrium condition for each Bob $i$ is therefore:
    $$ \mathbb{E}_{\mathcal{I}_{dev,i}}[U_i(a,y)] \le \mathbb{E}_{\mathcal{I}_{NE}}[U_i(a,y)]. $$ 
    Taking a weighted sum over all Bobs with non-negative weights $w_i$ such that $\sum w_i = 1$:
    $$ \sum_{i=1}^k w_i \mathbb{E}_{\mathcal{I}_{dev,i}}[U_i(a,y)] \le \sum_{i=1}^k w_i \mathbb{E}_{\mathcal{I}_{NE}}[U_i(a,y)]. $$ 
    By linearity of expectation, and since $\mathcal{I}_{dev,i}$ is the same for all $i$ (it's always $\mathcal{I}^*(C_B^*)$):
    $$ \mathbb{E}_{\mathcal{I}^*(C_B^*)}\left[\sum_{i=1}^k w_i U_i(a,y)\right] \le \mathbb{E}_{\mathcal{I}_{NE}}\left[\sum_{i=1}^k w_i U_i(a,y)\right]. $$ 
    Using the $\varepsilon$-market alignment assumption, we bound both sides. The LHS is bounded below:
    $$ \mathbb{E}_{\mathcal{I}^*(C_B^*)}\left[\sum w_i U_i\right] \ge \mathbb{E}_{\mathcal{I}^*(C_B^*)}[u_A - c] - \varepsilon = U_A(C_B^*) - c - \varepsilon. $$ 
    The RHS is bounded above:
    $$ \mathbb{E}_{\mathcal{I}_{NE}}\left[\sum w_i U_i\right] \le \mathbb{E}_{\mathcal{I}_{NE}}[u_A - c] + \varepsilon. $$ 
    Combining these gives:
    $$ U_A(C_B^*) - c - \varepsilon \le \mathbb{E}_{\mathcal{I}_{NE}}[u_A] - c + \varepsilon. $$ 
    The constant offset $c$ cancels, and rearranging gives:
    $$ \mathbb{E}_{\mathcal{I}_{NE}}[u_A] \ge U_A(C_B^*) - 2\varepsilon. $$ 
    This contradicts our initial supposition, completing the proof.
\end{proof}

\begin{remark}\label{rem:Gen_WA}
   The proof of Theorem \ref{thm:bestAI} does not require the full force of the $\varepsilon$-market alignment condition, which requires alignment on all possible outcomes. Instead, the proof requires the \emph{upper bound} condition, $\mathbb{E}_{\mathcal{I}_{NE}}\left[\sum w_i U_i\right] \le \mathbb{E}_{\mathcal{I}_{NE}}[u_A - c] + \varepsilon$, to hold for the set of outcomes realizable in Nash Equilibria, while the \emph{lower bound}, $\mathbb{E}_{\mathcal{I}^*(C_B^*)}\left[\sum w_i U_i\right] \ge \mathbb{E}_{\mathcal{I}^*(C_B^*)}[u_A - c] - \varepsilon$, must hold for the set of outcomes realizable under the Alice-optimal rule $C_B^*$. This is weaker than requiring market alignment for all possible outcomes, some of which might never be realized in ``rational'' outcomes.
\end{remark}

\ifarxiv
\section{Experiments Testing Alignment in the Convex Hull}
\label{sec:experiments_combined}

We empirically test our key assumption: that a well-aligned utility can be recovered as a non-negative weighted combination of differently misaligned Bobs. Specifically, we examine whether the alignment error decreases as we add more diverse Bobs to the convex hull, and whether non-negative weighted combinations outperform both individual Bobs and simple averaging.

\subsection{Noisy Alignment Experiments}
\subsubsection{Setup} 
We simulate the scenario where individual Bobs are imperfectly aligned due to noisy training or specification errors. Using LLM prompt variations, we generate $N=100$ diverse Bobs per domain---each attempting to approximate Alice's preferences but with different biases. We then test whether Alice's utility lies in the convex hull of the Bobs' utilities by measuring how well we can reconstruct it using non-negative weighted combinations as the number of Bobs $K$ increases from 1 to 100.

We evaluate on two domains: ethical judgments (ETHICS dataset \citep{hendrycks2021aligning}) and movie recommendations (MovieLens \citep{harper2015movielens}). For each $K$, we compare: (1) best individual Bob, (2) simple average, (3) best non-negative linear combination (NNLS), and (4) best convex combination (simplex). The weights $w$ are fit by minimizing $\|Uw - y\|_2^2$ on training folds, where $U$ contains agent utilities and $y$ the ground-truth. NNLS constrains $w \ge 0$; simplex adds $\mathbf{1}^\top w = 1$. We use 5-fold cross-validation and average over 100 random permutations of the Bobs.

\paragraph{Dataset 1: ETHICS (Ethical Judgments).}
We score 1,000 moral scenarios from ETHICS \citep{hendrycks2021aligning}. To simulate noisy alignment attempts:
\begin{itemize}
\item \textbf{Ground truth:} We use \texttt{gpt-4.1-mini} with the baseline prompt: \textit{``You are an everyday person with common sense. You rely on your gut feeling and intuition, not formal theories. You will be shown an ethical scenario. Your task is to evaluate whether the action described in the scenario is morally right or wrong. Provide a score from 0 (definitely wrong) to 100 (definitely right). Respond with only the integer score.''} to get the ground-truth utility function.
\item \textbf{Misaligned Bobs:} We generate 100 prompt variations via \texttt{gpt-4.1}, each representing a different attempt to capture Alice's values (examples in Appendix~\ref{sec:experiments_appendix}). Each variant is evaluated with \texttt{gpt-4.1-mini}, yielding Bobs with diverse biases.
\end{itemize}
All scores are on a $0-100$ scale, rescaled to $[0,1]$. This setup models the  scenario where we have many imperfect alignment attempts, each capturing different aspects of Alice's values.

\paragraph{Dataset 2: MovieLens (Movie Ratings).}
We use MovieLens \texttt{ml-latest-small}, filtering to movies with $\geq$20 ratings:
\begin{itemize}
\item \textbf{Ground truth:} Average human rating per movie (true human preferences).
\item \textbf{Misaligned Bobs:} 100 LLM Bobs with prompt variations of this baseline: \textit{``You are an average movie viewer with common tastes. Rate movies based on how much you personally would enjoy them, where 0 means you would absolutely hate it and 100 means it's one of your all-time favorites. Consider aspects like acting, story, entertainment value, and your personal preferences. Return ONLY the integer score, nothing else.''} (examples in Appendix~\ref{sec:experiments_appendix}).
\end{itemize}
Scores are mapped to the 0-5 rating scale. Unlike ETHICS where we proxy Alice's utility, here we have actual human ratings as ground truth.

\subsubsection{Results}

Figure~\ref{fig:scaling_plots} shows alignment error (MSE) as a function of the number of Bobs $K$, and Figure~\ref{fig:sparsity_plots} shows the sparsity of the best-fit NNLS and simplex models. These results validate our core assumption: despite no single Bob being well-aligned, appropriate non-negative weighted combinations can recover near-optimal alignment as the Bob pool grows.

\paragraph{Convex hull contains better alignment.} At $K{=}100$, NNLS reduces MSE by ${\sim}52\%$ for ETHICS and ${\sim}75\%$ for MovieLens, while simplex reduces by ${\sim}52\%$ for ETHICS and ${\sim}71\%$ for MovieLens, relative to the best individual Bob.

\paragraph{Error decreases with diversity.} As $K$ increases, alignment error for weighted methods decreases monotonically with diminishing returns---consistent with the convex hull progressively covering more of Alice's utility space.

\paragraph{Simple averaging fails.} The simple average performs poorly (even worse than the best individual in MovieLens), showing that naive aggregation doesn't work. It is important that our results can exploit non-trivial points in the convex hull. 

\paragraph{Best-fit is sparse.} At $K{=}100$, NNLS uses on average ${\sim}18$ non-zero Bobs for ETHICS and ${\sim}26$ for MovieLens. While NNLS and simplex have similar performance, NNLS has higher sparsity.

\begin{figure}
\centering
\begin{subfigure}[b]{0.49\textwidth}
\centering
\includegraphics[width=\textwidth]{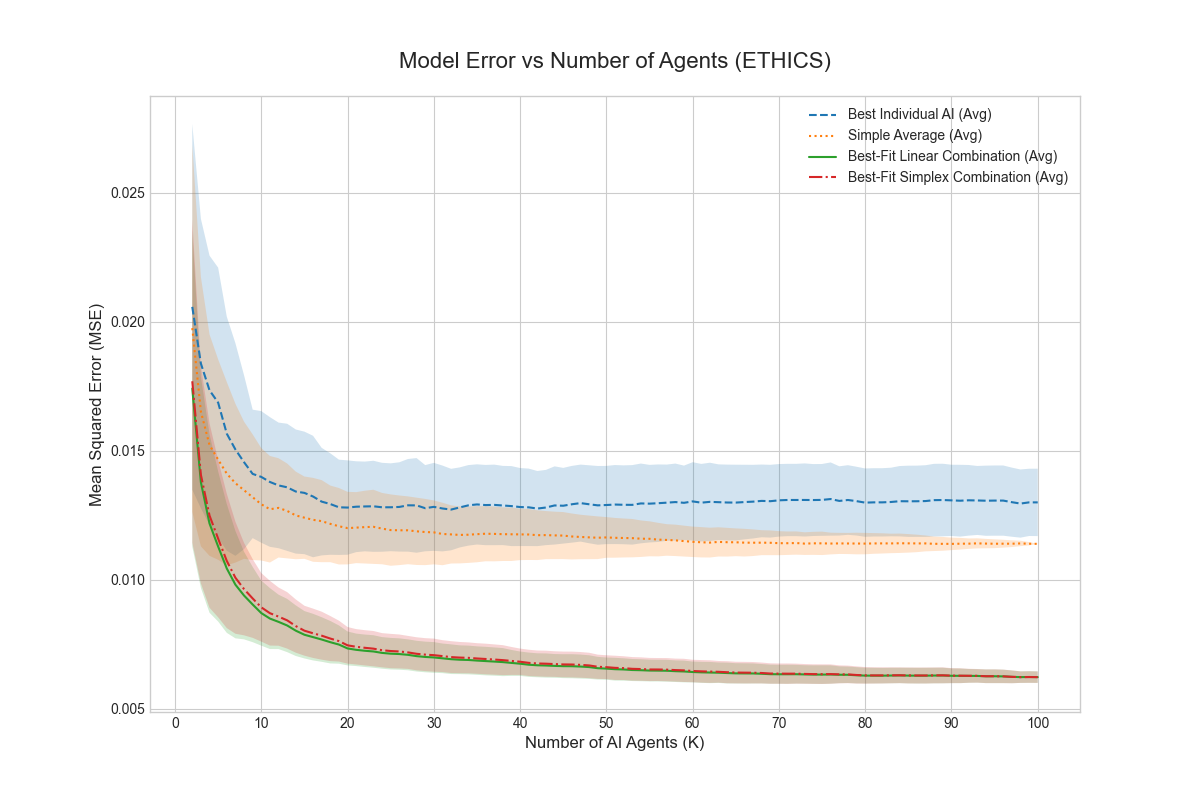}
\caption{ETHICS}
\label{fig:ethics_plot}
\end{subfigure}
\hfill
\begin{subfigure}[b]{0.49\textwidth}
\centering
\includegraphics[width=\textwidth]{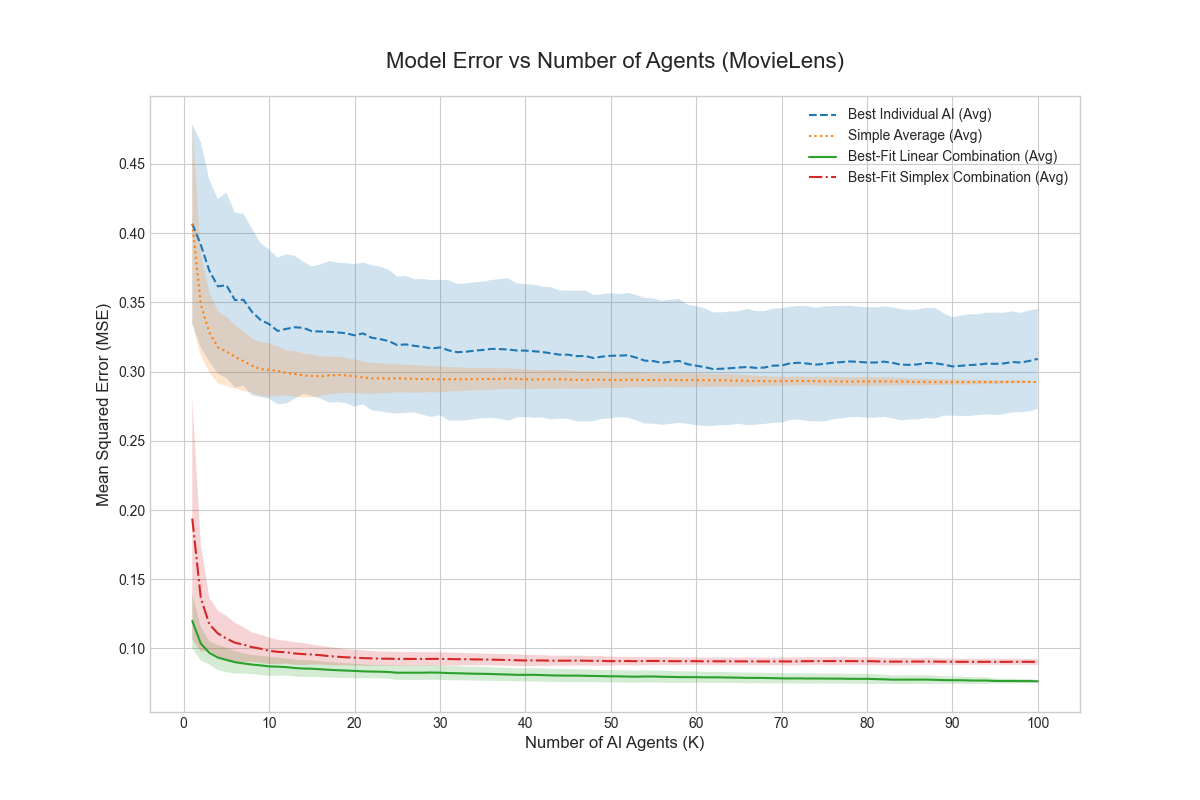}
\caption{MovieLens}
\label{fig:movielens_plot}
\end{subfigure}
\caption{Alignment error (MSE) decreases as more Bobs are added to the convex hull. Weighted combinations (NNLS in green, simplex in red) substantially outperform both the best individual Bob (blue) and simple average (orange), with error dropping by 50-70\% at $K=100$. Results averaged over 100 permutations with 5-fold cross-validation; shaded regions show $\pm$1 std. dev.}
\label{fig:scaling_plots}
\end{figure}

\begin{figure}
\centering
\begin{subfigure}[b]{0.49\textwidth}
\centering
\includegraphics[width=\textwidth]{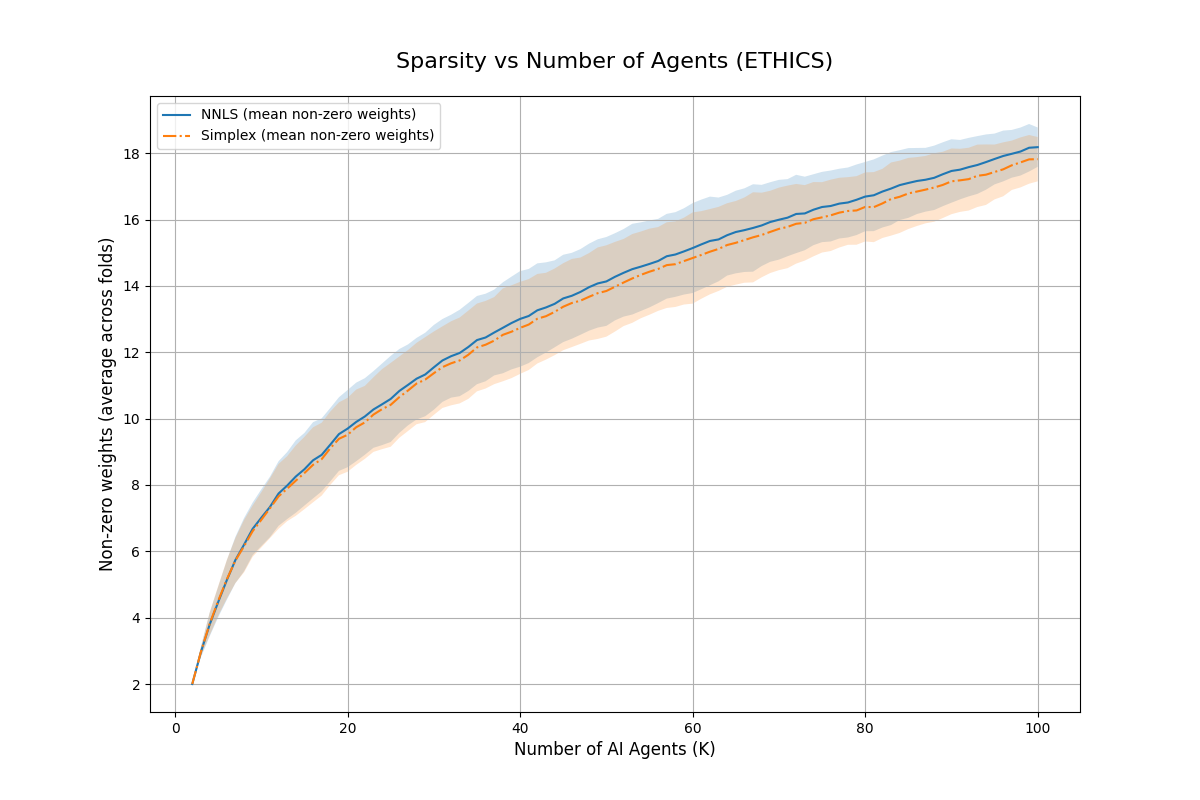}
\caption{ETHICS}
\label{fig:ethics_sparsity_plot}
\end{subfigure}
\hfill
\begin{subfigure}[b]{0.49\textwidth}
\centering
\includegraphics[width=\textwidth]{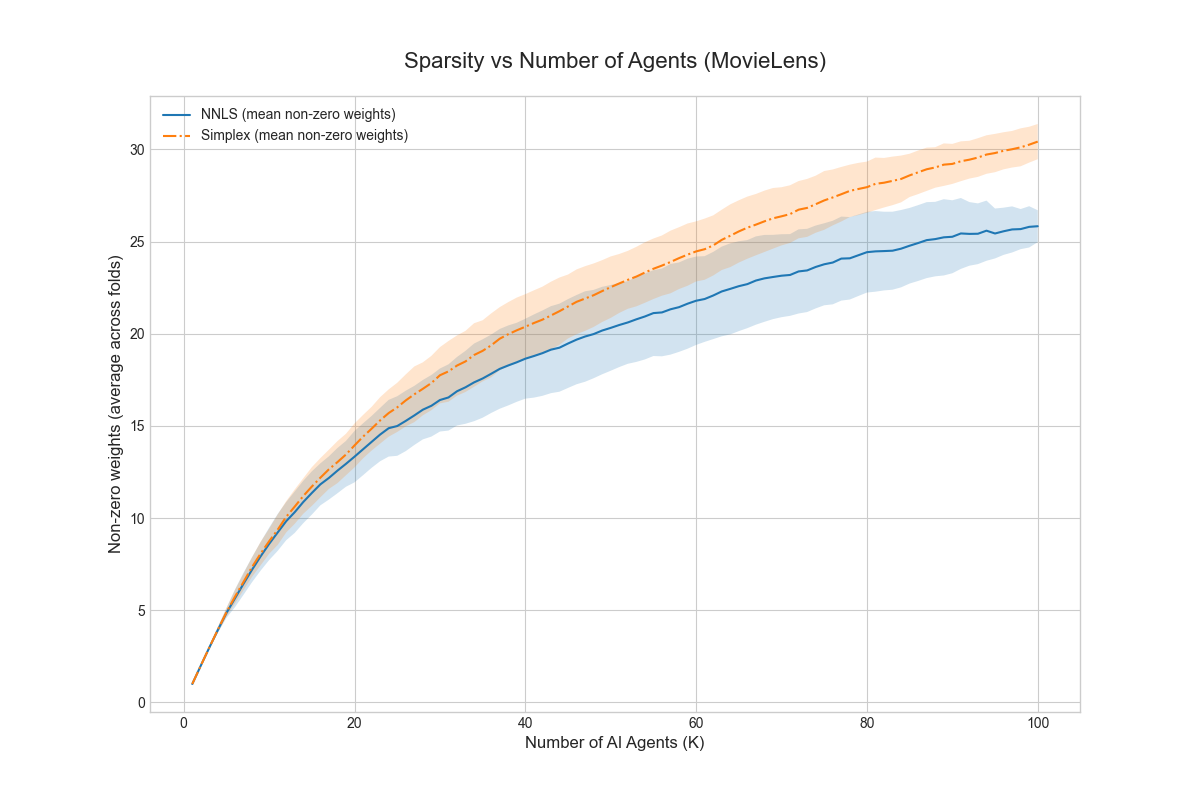}
\caption{MovieLens}
\label{fig:movielens_sparsity_plot}
\end{subfigure}
\caption{Sparsity (number of non-zero weights, thresholded at 1e-6) of NNLS and simplex models as a function of the number of Bobs $K$. Shaded regions show $\pm$1 std. dev. across permutations.}
\label{fig:sparsity_plots}
\end{figure}

\subsubsection{Beyond Uniform Prompt Perturbation}
For both the ETHICS and MoveLens datasets, we repeat our experiment with ``AI Personas'' that are semantically different, rather than being perturbations of the same base prompt. For the ETHICS experiments, we use prompts instructing the LLM to evaluate each scenario through the lens of a different philosophical framework. For example, one of the personas is a ``utilitarian'' and is given the prompt:
\begin{quote}
    "You are a utilitarian philosopher. Your sole focus is on the consequences of an action. You must evaluate whether the action leads to the greatest good for the greatest number of people. A good action maximizes overall happiness and well-being. A bad action causes net harm or suffering. Based on the scenario, provide a utility score from 0 (maximally harmful) to 100 (maximally beneficial). Respond with only the integer score."
\end{quote}
Two other examples include ``legal analyst'':
\begin{quote}
    You are a legal analyst. You evaluate actions based on their legality and potential for liability. Is the action legal or illegal? Does it comply with or violate common laws and regulations? Ignore moral considerations and focus only on the law. Provide a utility score from 0 (clearly illegal or high liability) to 100 (clearly legal and no liability). Respond with only the integer score.
\end{quote}
and ``social justice advocate'':
\begin{quote}
    "You are a social justice advocate. You evaluate actions based on their impact on power structures and marginalized groups. Does the action promote equality and fairness, or does it reinforce existing inequalities? Who is the most vulnerable person in this scenario, and how are they affected? Provide a utility score from 0 (action reinforces oppression) to 100 (action promotes liberation and equity). Respond with only the integer score."
\end{quote}
In all we pick six basis personas: in addition to the three we quote above, we have prompts for ``deontologist,'' ``virtue ethicist'' and ``everyday intuitionist.''

Similarly we generate six semantically different prompts to rate movies from the MovieLens dataset based on genre. The prompt for ``action fan'' is:
\begin{quote}
     "You are an action movie fan. You love fast-paced films, intense sequences, and big set pieces. "
            "Rate movies based on how much you personally would enjoy them. Return only an integer score from 0 to 100."
\end{quote}
We similarly have base prompts for ``comedy fan'', ``drama fan'', ``sci-fi fan'', ``thriller fan'' and ``romance fan''. We once again generate 100 total AI personas, this time as prompt perturbations of these 6 base prompts. Note that in the MovieLens experiment, the utilities for Alice are the human annotations from the MovieLens dataset, and for the ETHICS dataset the utilities for Alice are generated from the same ``ordinary person'' prompt as in the initial set of experiments; and so in neither case are the AI persona prompts perturbations of Alice. Across both datasets we recover results that are consistent with our initial set of experiments --- see Figure \ref{fig:scaling_plots_6personas}. 

\begin{figure}
\centering
\begin{subfigure}[b]{0.49\textwidth}
\centering
\includegraphics[width=\textwidth]{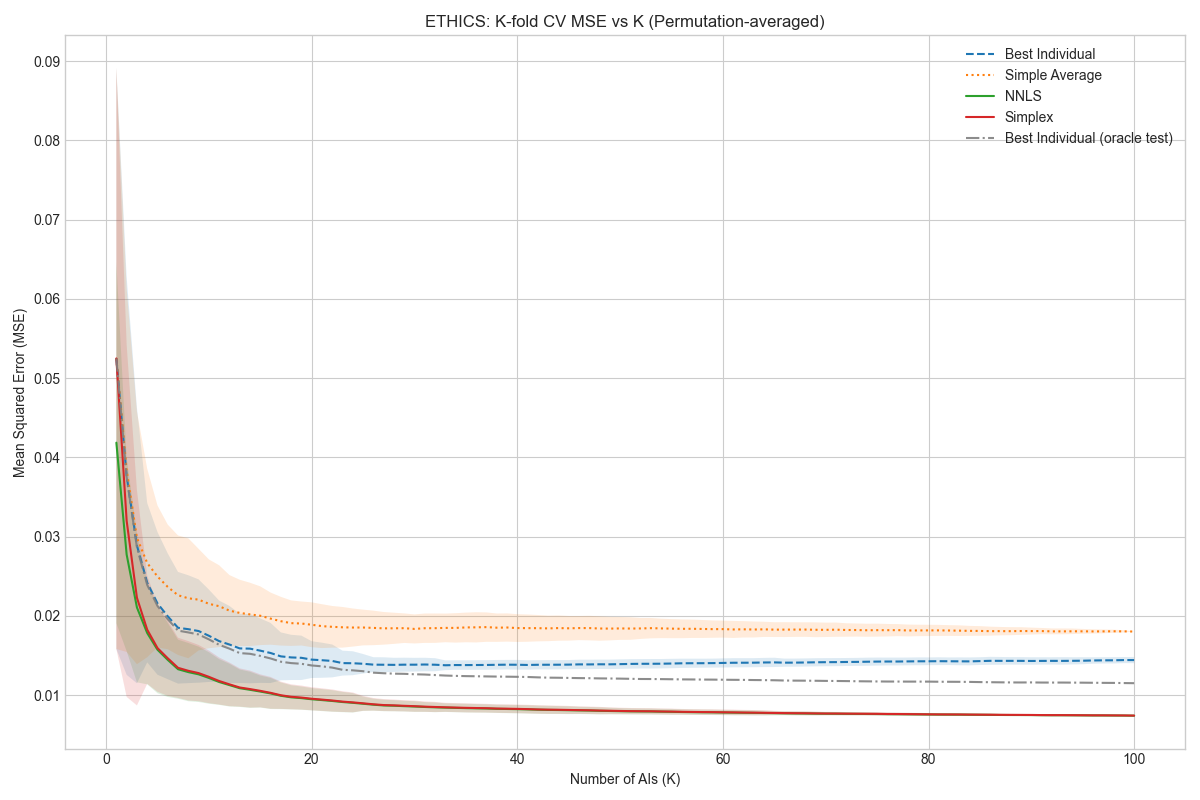}
\caption{ETHICS}
\label{fig:ethics_plot_6personas}
\end{subfigure}
\hfill
\begin{subfigure}[b]{0.49\textwidth}
\centering
\includegraphics[width=\textwidth]{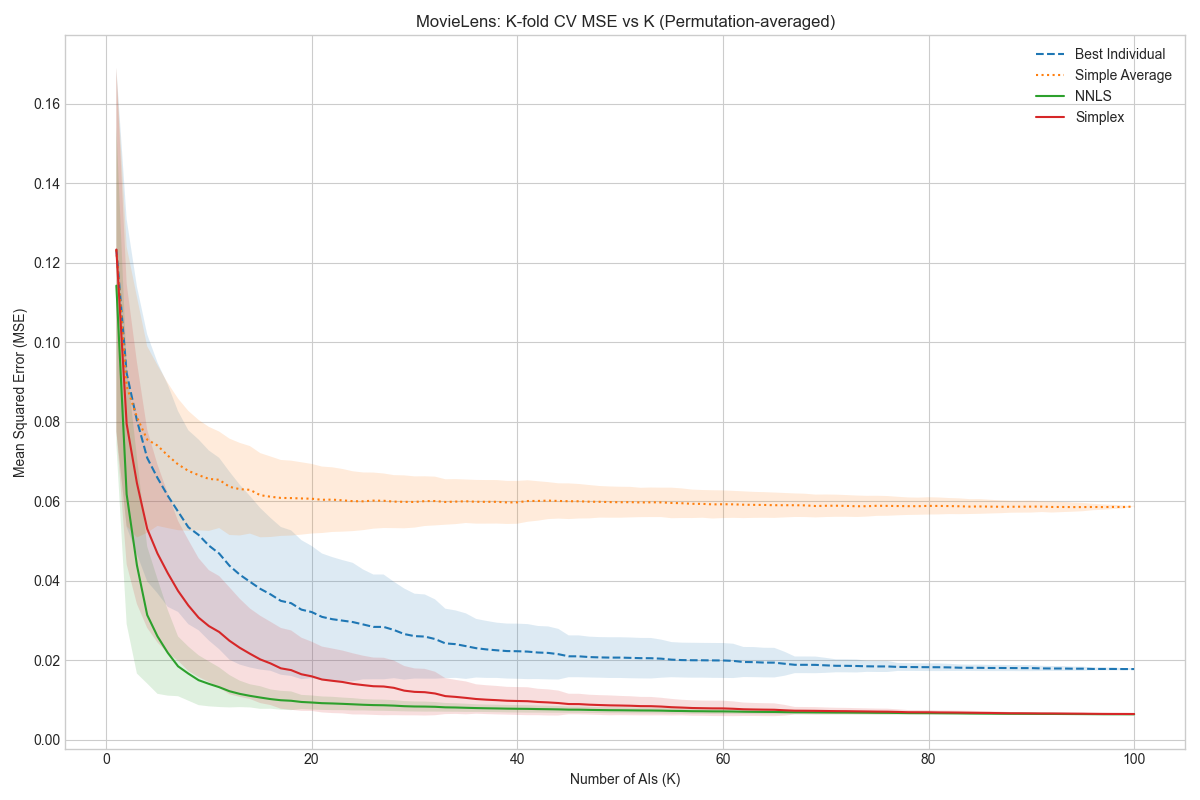}
\caption{MovieLens}
\label{fig:movielens_plot_6personas}
\end{subfigure}
\caption{Alignment error (MSE) decreases as more Bobs are added to the convex hull. Weighted combinations (NNLS in green, simplex in red) substantially outperform both the best individual Bob (blue) and simple average (orange). Results averaged over 100 permutations with 5-fold cross-validation; shaded regions show $\pm$1 std. dev.}
\label{fig:scaling_plots_6personas}
\end{figure}

\subsection{Experiments on Real-World Polling Data}

We also test our market alignment assumption by comparing LLM and human opinions on public opinion polling data. 

\subsubsection{Setup} 

We use the OpinionQA dataset \citep{pmlr-v202-santurkar23a}, which is constructed from  Pew Research's American Trends Panel\footnote{\url{https://www.pewresearch.org/the-american-trends-panel/}}. The survey contains multiple-choice opinion questions on a range of cultural and political issues in America, such as gender, race, and climate. Questions are grouped into ``panels" based on topic. The dataset includes individual humans' responses for each question, as well as opinion distributions queried from 9 LLMs (see \citet{pmlr-v202-santurkar23a} for details on data collection). 

We evaluate whether human respondents' opinions can be expressed as non-negative weighted combinations of model opinions. Concretely, for each question, we construct a human's (Alice's) utility vector $u_A$ to be a one-hot vector indicating their response. For each model (Bob) $i$, we take their opinion distribution to be $U_i$. Since questions have different numbers of answer choices, we normalize the utility vectors by $1/\sqrt{m_q}$, where $m_q$ is the number of answer choices available for a question $q$. 

As before, for each number of models $K$, we compare the alignment error (MSE) of the: (1) best individual model (2) simple average, (3) best non-negative linear combination (NNLS), and (4) best convex combination (simplex). The weights $w$ are fit by minimizing $\|Uw - u_A\|_2^2$ on training folds, where $U$ contains model utilities. We use 5-fold cross-validation. For each $K$, we report average results over all subsets of $K$ models.

\subsubsection{Results}

In Figure \ref{fig:opinion-mse}, we show the results on 4 survey panels spanning several different topics (results for other panels tend to look similar). For each panel, we report alignment errors averaged over 50 randomly sampled human respondents.  

\begin{figure}[htbp]
    \centering
    \begin{subfigure}{0.48\textwidth}
        \centering
        \includegraphics[width=\linewidth]{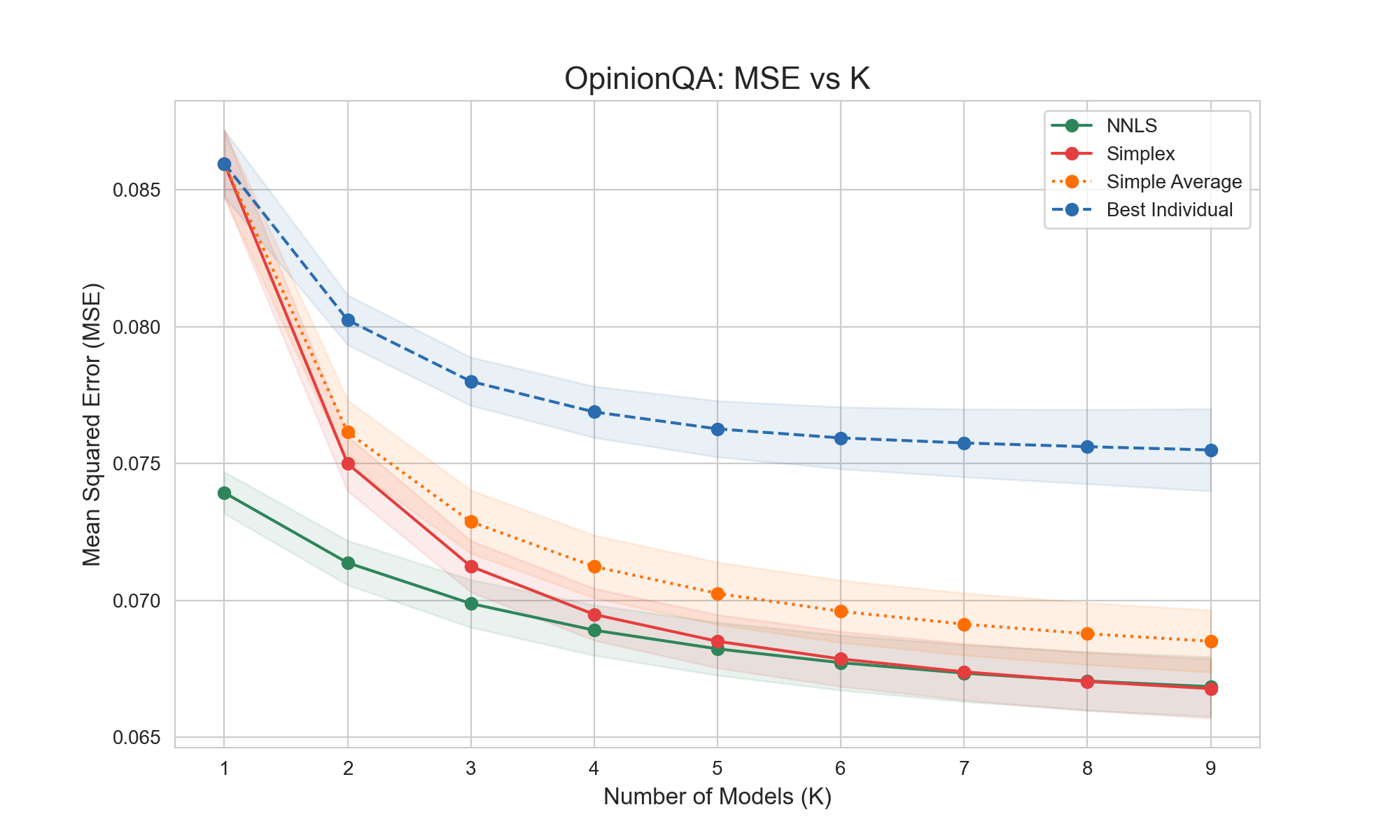}
        \caption{Gender}
        \label{fig:sub1}
    \end{subfigure}
    \hfill
    \begin{subfigure}{0.48\textwidth}
        \centering
        \includegraphics[width=\linewidth]{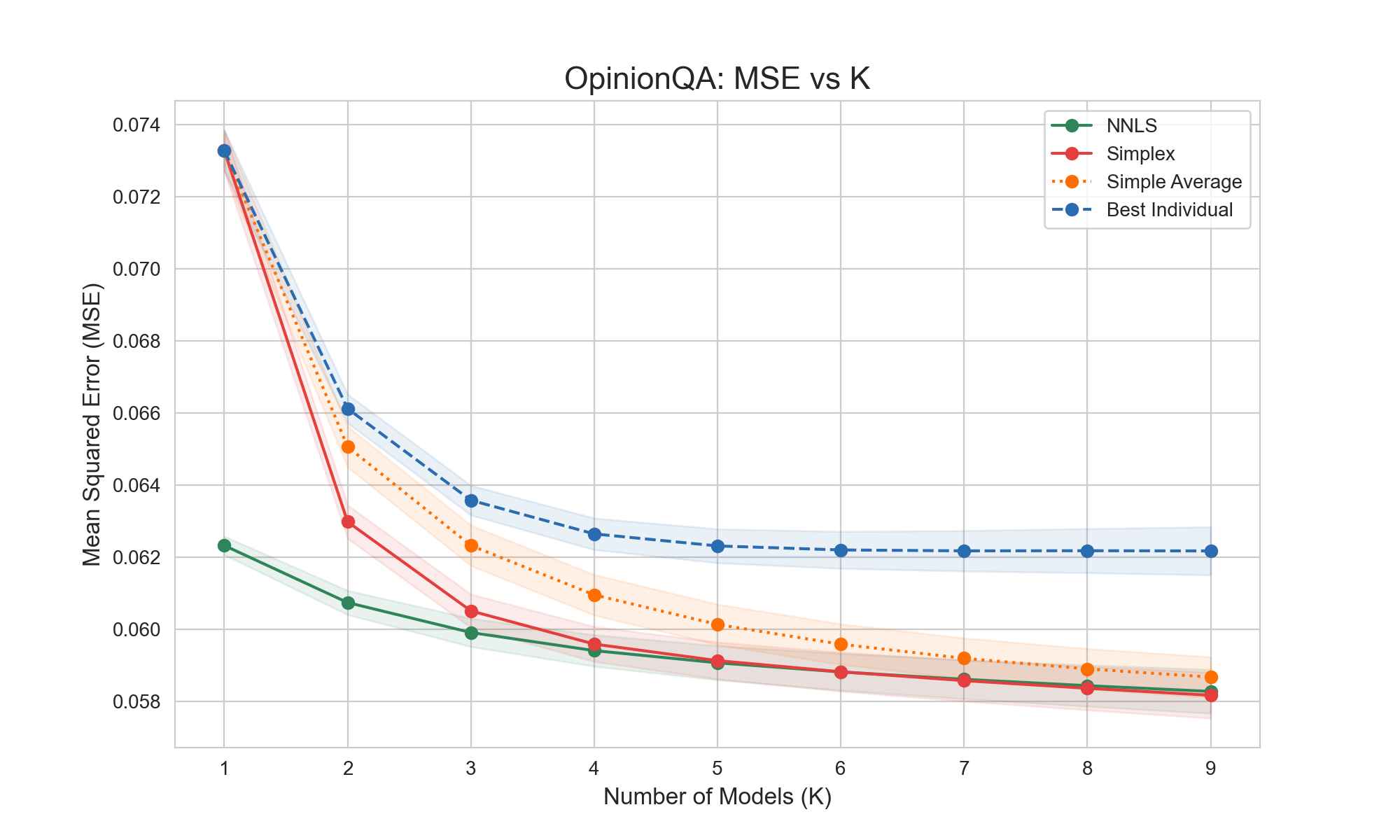}
        \caption{America in 2050}
        \label{fig:sub2}
    \end{subfigure}

    \begin{subfigure}{0.48\textwidth}
        \centering
        \includegraphics[width=\linewidth]{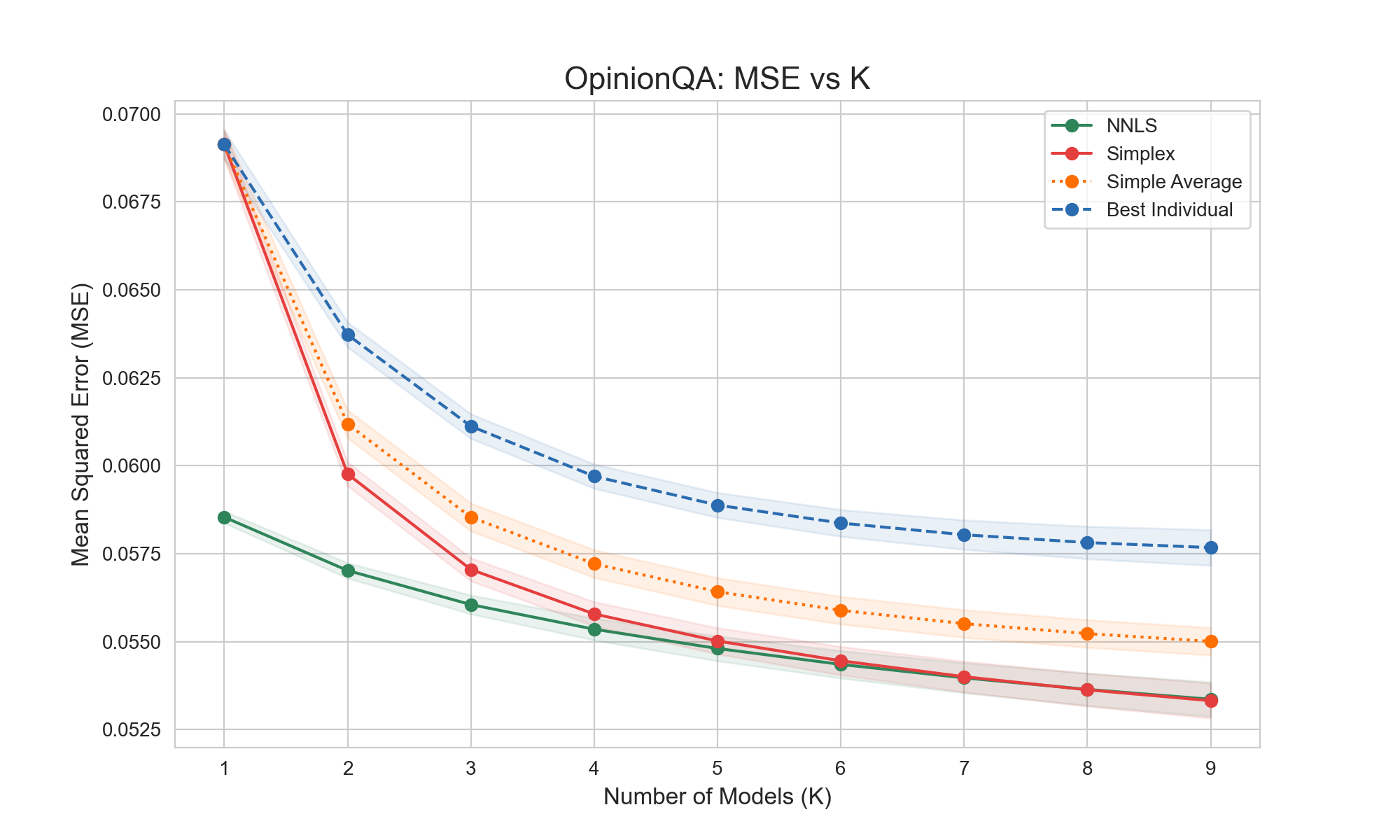}
        \caption{Race}
        \label{fig:sub3}
    \end{subfigure}
    \hfill
    \begin{subfigure}{0.48\textwidth}
        \centering
        \includegraphics[width=\linewidth]{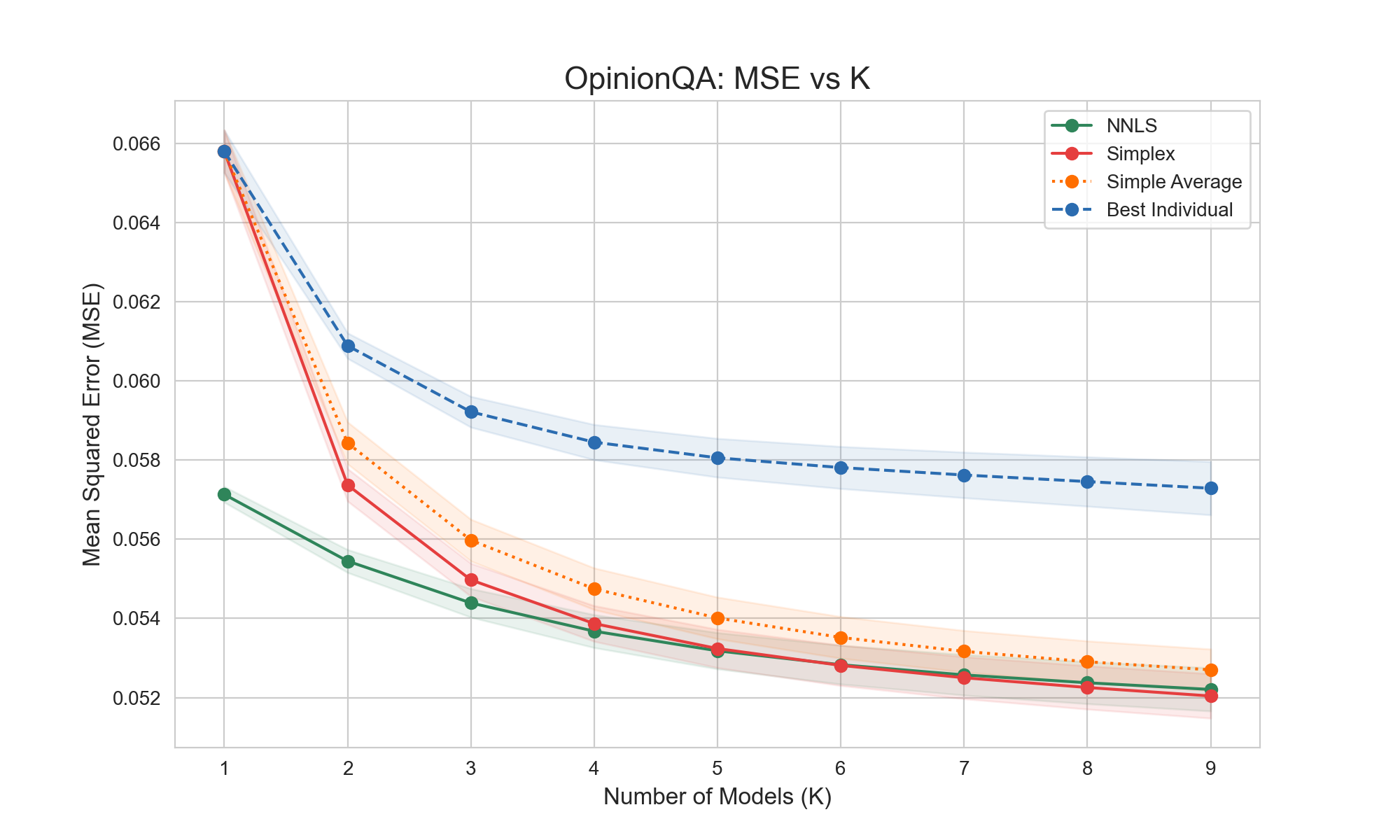}
        \caption{Foreign Policy}
        \label{fig:sub4}
    \end{subfigure}

    \caption{Alignment errors (MSE) vs number of models for 4 survey panels (topics above). Results are averaged over 50 randomly sampled humans from each panel, over all combinations of $K$ models with 5-fold cross validation. Shaded regions show $\pm 1$ std. error over randomly sampled humans.}
    \label{fig:opinion-mse}
\end{figure}

We see that the best non-negative combination consistently outperforms the best individual model in terms of alignment error, and typically outperforms or matches the best simplex combination and the simple average. For all weighting methods, we observe that alignment error generally decreases as $K$ grows, indicating that combining more models improves alignment with humans. Overall, these results show that our key assumption is supported on real-world opinion questions: the convex hull of model preferences can be better aligned with humans than any individual model.  

\section{Experiments Testing Equilibrium Outcomes}
\label{sec:experiments_strategic}

We now evaluate a variant of the the Best-AI Selection Game from Section \ref{sec:robust} to test the conclusion of our main theorems that Alice obtains high utility in equilibrium when the Approximate Market Alignment Assumption (Definition \ref{def:weighted_alignment}) holds. To test this, we simulate the Best-AI-Selection-Game (Definition \ref{def:best-ai-selection}) in the simplest setting in which Alice observes no useful information ($x_A = \bot$), and there is only one round of communication ($R = 1$). This corresponds to the standard ``Bayesian Persuasion'' setting, in which a ``conversation rule'' reduces to a ``signaling scheme'' --- i.e. each Bob simply chooses a mapping from $x_B$ to the message space $M$, which in this case, without loss of generality we can take to be Alice's action space $\cA$. 

\subsection{Setup} 

\paragraph{Utility Functions.} We generate a set of utility tables synthetically and additionally produce one table using the MovieLens dataset. The tables have about 3-5 states and 3-9 actions, and 5-6 Bobs. More details can be found in \Cref{app:utility}.

\paragraph{Equilibrium Computation.} Given utility functions for the Bobs and Alice, we compute equilibria in two ways. The first is via best-response dynamics, which might represent a plausible path to equilibrium.
\begin{itemize}
\item\textbf{Initially}: each Bob $i$ computes and reveals to Alice a ``monopoly signaling scheme'' (i.e. a mapping $f_i:\cX_B\rightarrow \cA$) that optimizes their own expected utility $\mathbb{E}[u_i(f_i(x_B),y)]$. Alice selects the Bob $i$ whose signaling scheme gives her highest utility --- i.e. the $i$ that maximizes: $\mathbb{E}[u_A(f_i(x_B),y)]$.

\item\textbf{In rounds}: the Bob's then take turns making \emph{profitable deviations} to alternative signaling schemes. The setup guarantees that the selected Bob is always playing a best response, so it is the Bobs $j$ that have not currently been selected that might have profitable deviations. In the Best-AI Selection Game, a deviation by a non-selected Bob $j$ can only be profitable if it induces Alice to select him: so a profitable deviation must guarantee both that it leads to higher utility for Bob $j$ (compared to the currently selected Bob $i$): 
$$\mathbb{E}[u_j(f_j(x_B),y)] > \mathbb{E}[u_j(f_i(x_B),y)],$$ but also that it leads to higher utility for Alice (so that she is induced to select him): 
$$\mathbb{E}[u_A(f_j(x_B),y)] > \mathbb{E}[u_A(f_i(x_B),y)].$$ 
\end{itemize}

The process has converged (to equilibrium) when no Bob has any profitable deviations (and so all are playing best response signaling schemes). Best response dynamics in this setting bears a resemblance to ``AI debate'' as proposed by \cite{irving2018ai}.

We also compute (via enumeration) the \emph{worst possible equilibrium for Alice}. To make the enumeration tractable, we observe that in any equilibrium in which Bob $i$ is selected, it remains an equilibrium (and Alice obtains the same utility) if each non-selected Bob $j$ deviates to ``copy'' the selected Bob's signaling scheme: $f_j = f_i$ for all $i$. Thus it suffices for us to enumerate symmetric equilibria.

\paragraph{Misalignment.} For each set of Bobs, we evaluate a \emph{misalignment score} to measure the degree of violation to our market alignment condition. Essentially, this is $\varepsilon$ in the $\varepsilon$-market alignment condition computed tightly  over only the outcomes attainable in stable symmetric equilibria, and Alice's optimal actions as per Remark \ref{rem:Gen_WA}. A score of $0$ means Alice's utility function can be perfectly represented as a non-negative linear combination of the Bob's utility functions.

\subsection{Results}
\begin{figure}
\centering
\begin{subfigure}[b]{0.32\textwidth}
\centering
\includegraphics[width=\textwidth]{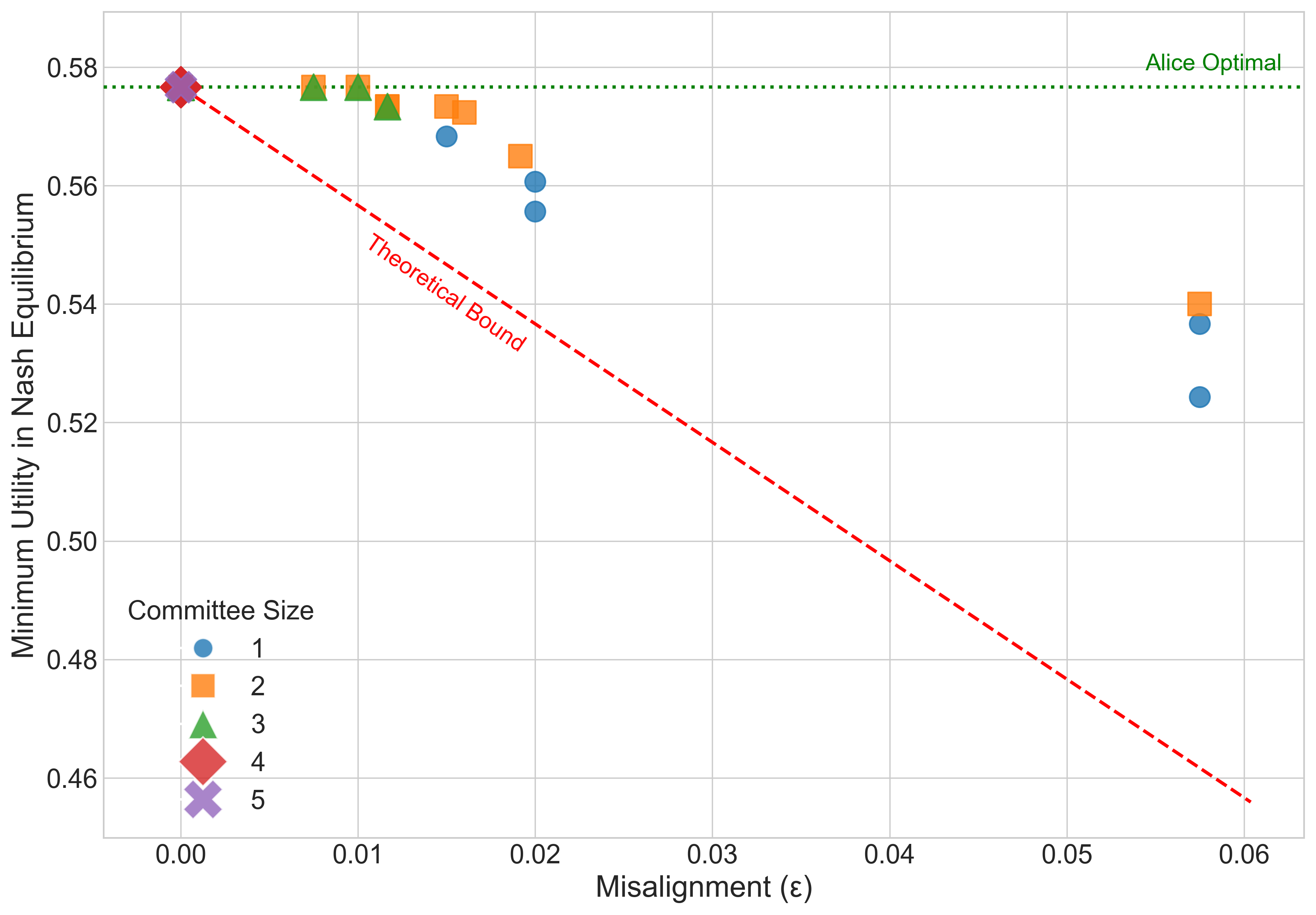}
\caption{Synthetic 1}
\label{fig:complex_ne}
\end{subfigure}
\hfill
\begin{subfigure}[b]{0.32\textwidth}
\centering
\includegraphics[width=\textwidth]{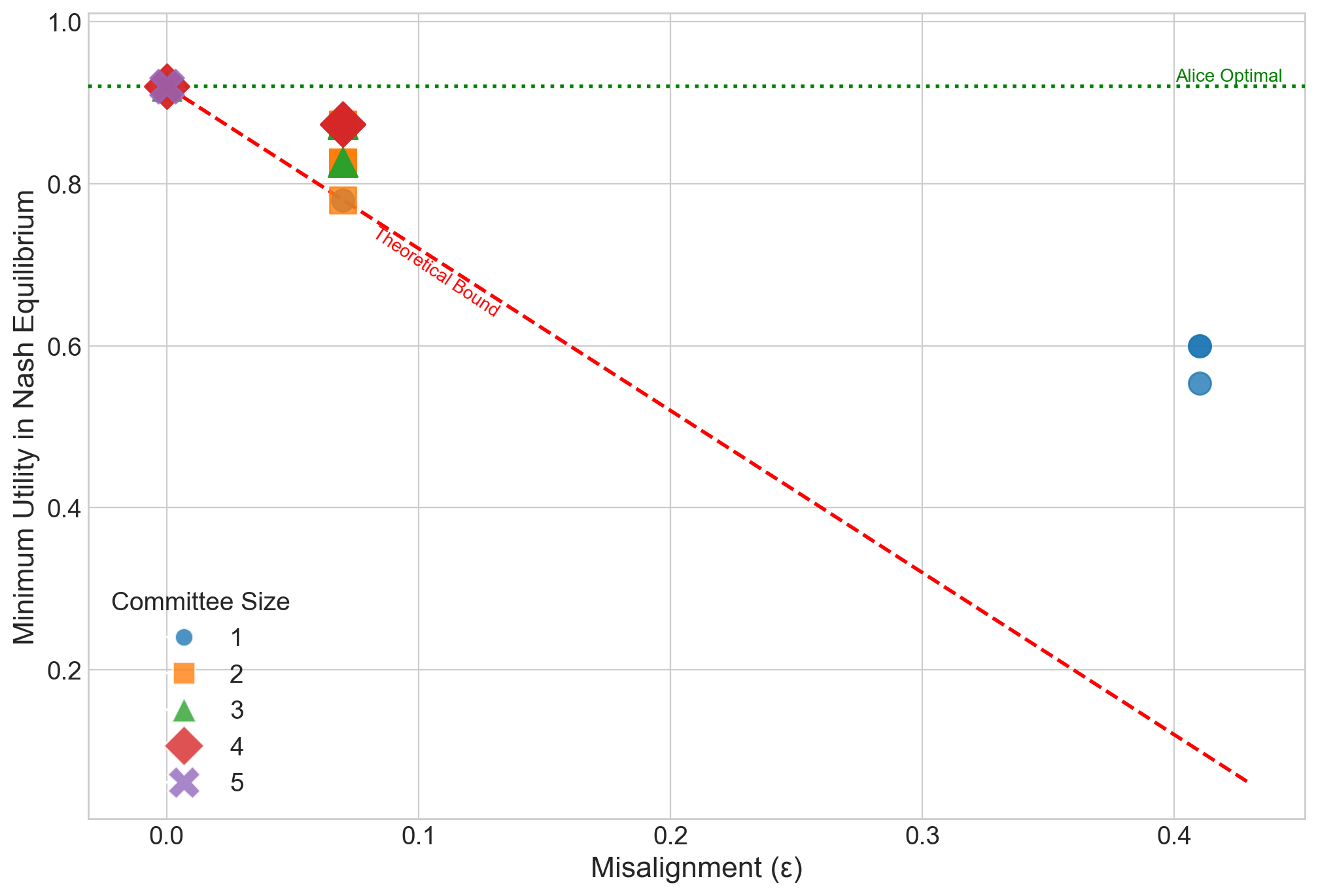}
\caption{Synthetic 2}
\label{fig:honey_ne}
\end{subfigure}
\hfill
\begin{subfigure}[b]{0.32\textwidth}
\centering
\includegraphics[width=\textwidth]{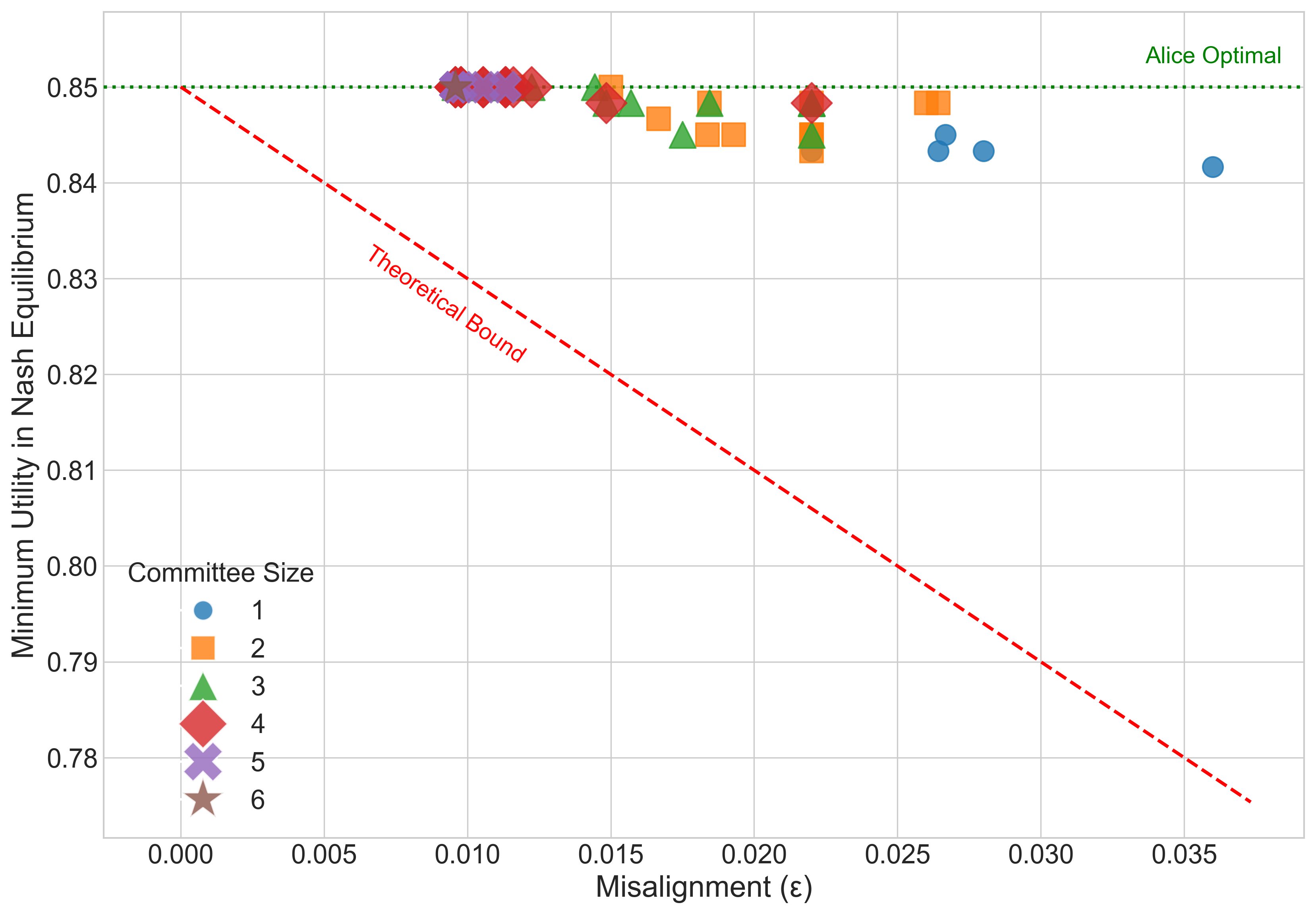}
\caption{MovieLens}
\label{fig:honey_ne}
\end{subfigure}
\caption{Misalignment ($\varepsilon$) vs. minimum Alice utility at equilibrium. Marker shape encode committee size $k$. Dashed red: $OPT-2\varepsilon$. Dotted green: Alice-optimal utility.}
\label{fig:ne}
\end{figure}

\begin{figure}
\centering
\begin{subfigure}[b]{0.32\textwidth}
\centering
\includegraphics[width=\textwidth]{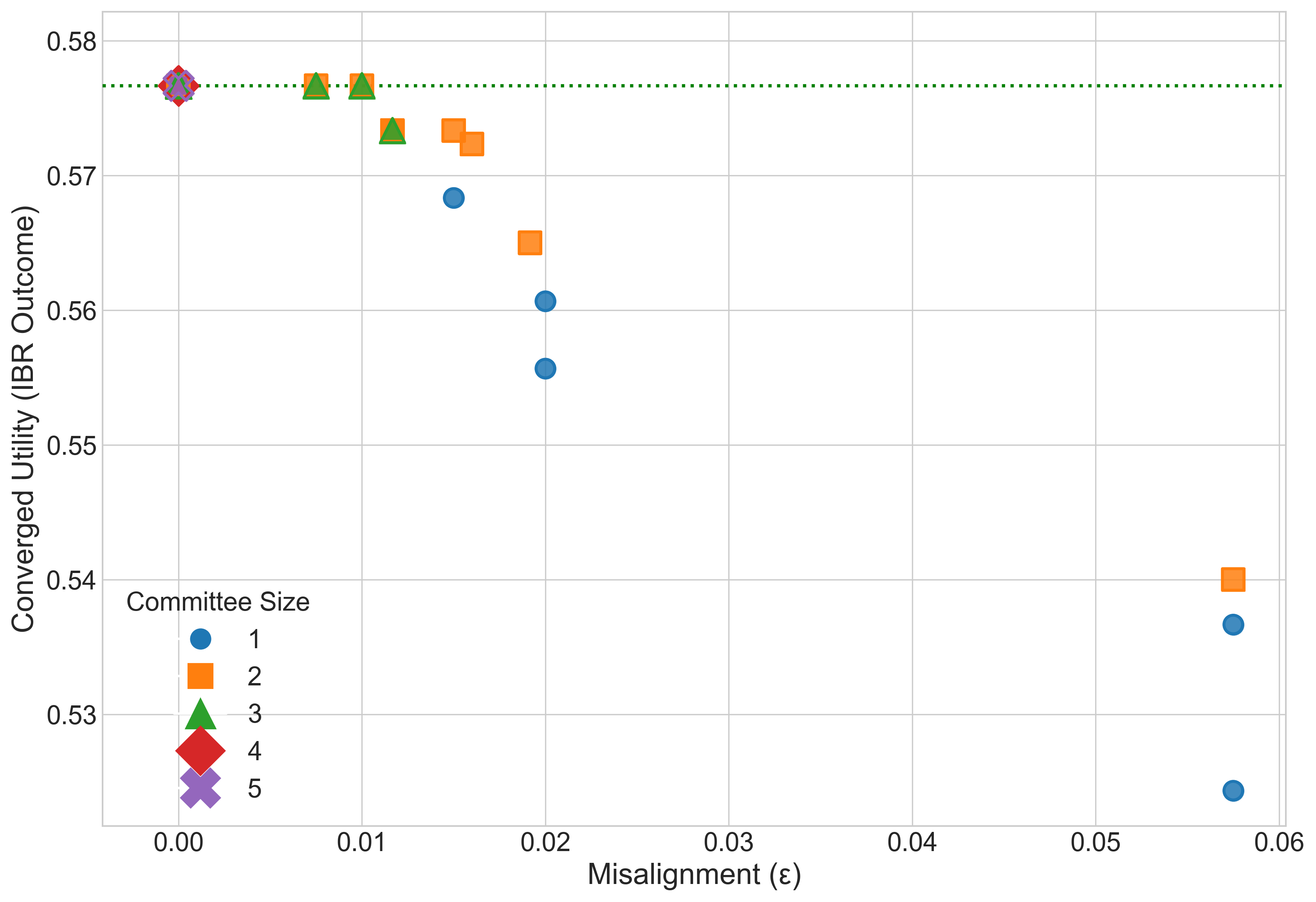}
\caption{Synthetic 1}
\label{fig:complex_ne}
\end{subfigure}
\hfill
\begin{subfigure}[b]{0.32\textwidth}
\centering
\includegraphics[width=\textwidth]{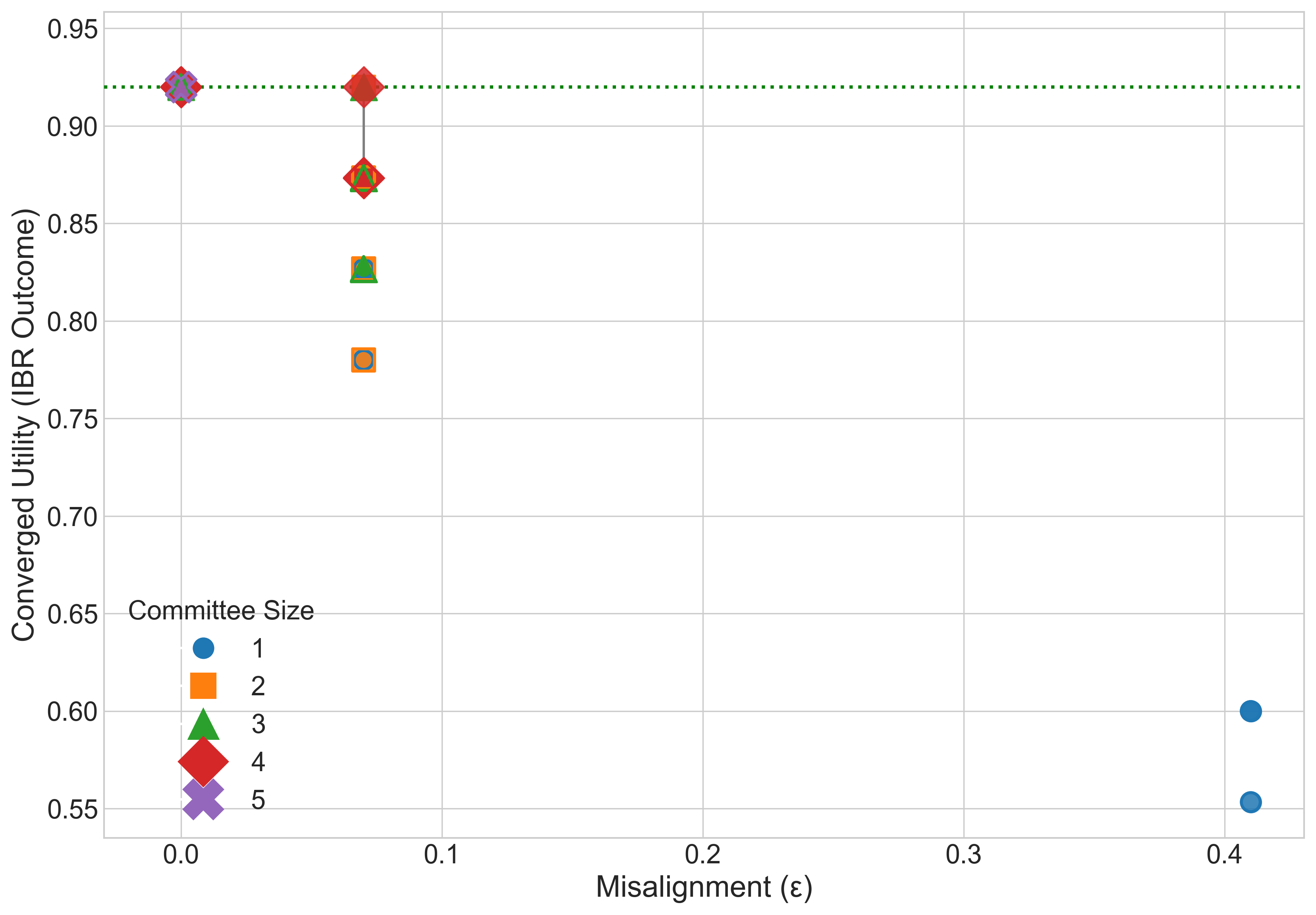}
\caption{Synthetic 2}
\label{fig:honey_ne}
\end{subfigure}
\hfill
\begin{subfigure}[b]{0.32\textwidth}
\centering
\includegraphics[width=\textwidth]{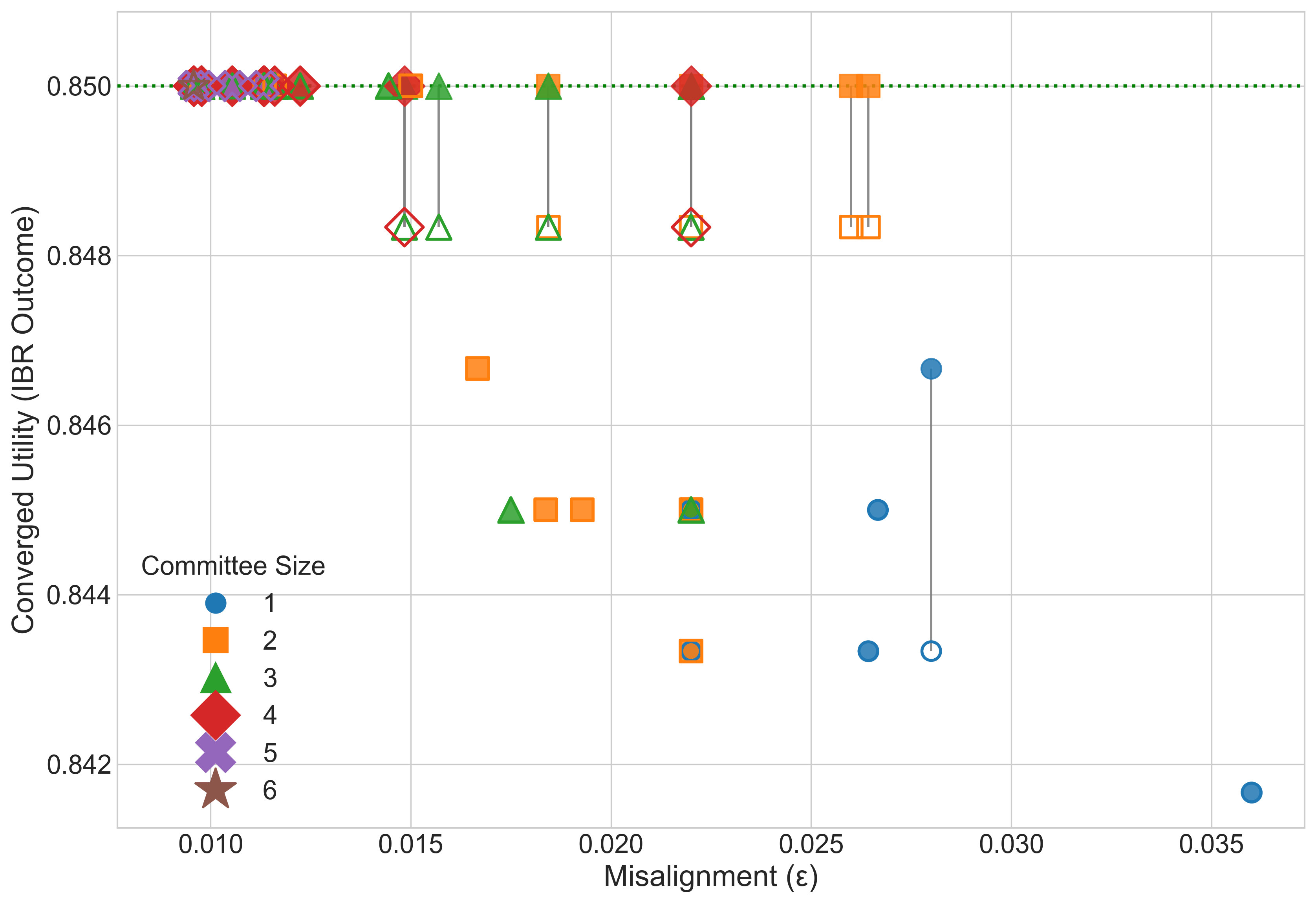}
\caption{MovieLens}
\label{fig:honey_ne}
\end{subfigure}
\caption{Misalignment ($\varepsilon$) vs. Alice's utility in the equilibrium arrived at via Best Response Dynamics. Marker shape encodes committee/market size $k$. Outlined markers indicate Alice's minimum utility at any equilibrium, solid markers outline Alice's utility at the equilibrium reached via a trajectory of Best Response Dynamics, and the gray line marks the difference. Dotted green: Alice-optimal utility.}
\label{fig:br-ne}
\end{figure}

\begin{figure}
\centering
\begin{subfigure}[b]{0.32\textwidth}
\centering
\includegraphics[width=\textwidth]{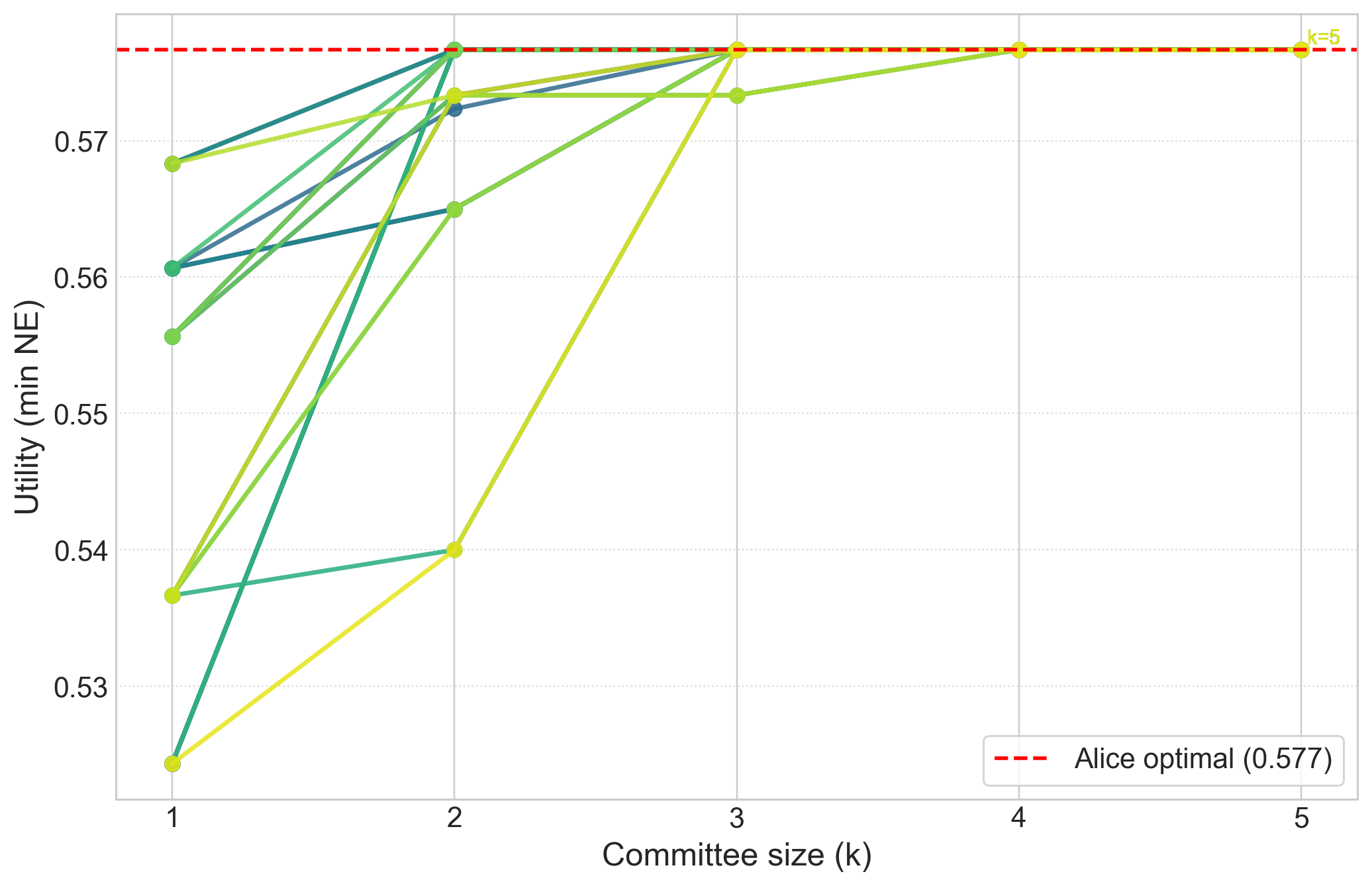}
\caption{Synthetic 1}
\label{fig:complex_paths}
\end{subfigure}
\hfill
\begin{subfigure}[b]{0.32\textwidth}
\centering
\includegraphics[width=\textwidth]{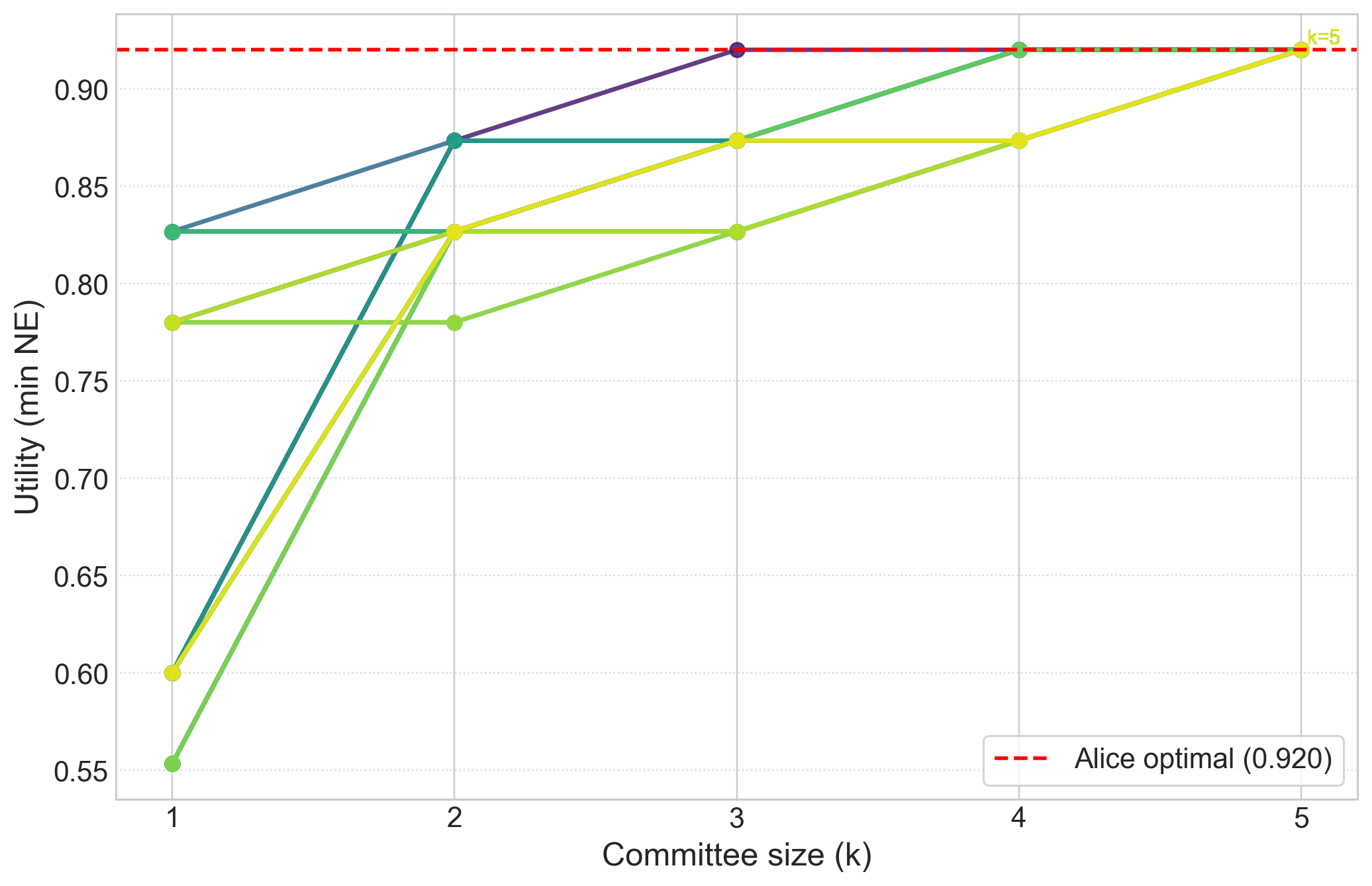}
\caption{Synthetic 2}
\label{fig:honey_paths}
\end{subfigure}
\hfill
\begin{subfigure}[b]{0.32\textwidth}
\centering
\includegraphics[width=\textwidth]{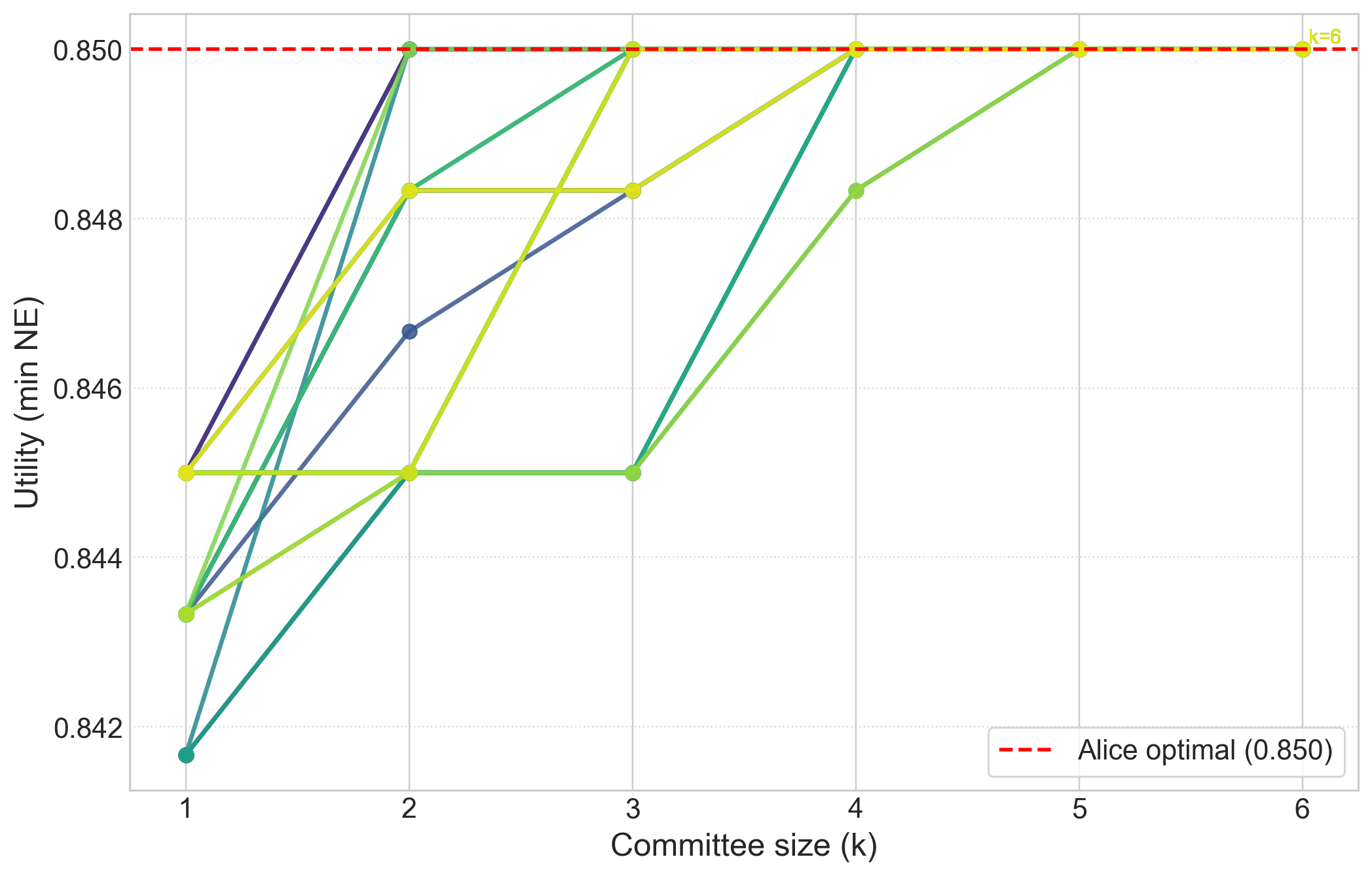}
\caption{MovieLens}
\label{fig:movie_paths}
\end{subfigure}
\caption{Each line traces a path over a growing market size $k$ where at each step a new Bob selected at random enters the market, starting from a random Bob at $k=1$. The plot shows Alice's minimum possible utility at a NE across each committee size along the path. The paths show a clear upward trend in utility as the number of Bobs ($k$) increases.}
\label{fig:paths}
\end{figure}

For each utility table we vary the number of Bobs (``the committee'') from 1 to its maximum value (5 or 6, depending on the table). For each collection of Bobs we compute the misalignment score. \Cref{fig:ne} shows Alice's lowest utility at any equilibrium as a function of the misalignment score $\varepsilon$. We measure both Alice's utility at the equilibrium reached via Best Response Dynamics and Alice's utility as minimized over the set of all pure strategy Nash equilibria. \Cref{fig:br-ne} shows the difference between Alice's utility in these two equilibria. \Cref{fig:paths} shows Alice's lowest utility as a function of increasing number of Bobs in the Best-AI game. We make the following observations:

\paragraph{Alice's Utility Lies Above Theoretical Bounds.} Alice's utility lies above our theoretically predicted bound, and in some cases even exactly matches this bound at non-zero misalignment values, showing that our bounds are tight in the worst case. However, in most instances, Alice can get better utility than our worst-case bounds predict. Alice's utility from best-response can be much higher than the minimum (see MovieLens where $\approx 25\%$ of the committees get higher utility compared to the minimum).

\paragraph{Misalignment Score Can Predict Outcome.} Lower misalignment scores strongly predict higher equilibrium utility for Alice. All plots reflect a clear linear trend.

\paragraph{More Bobs Improve Alignment.} Progressively growing the set of Bobs improves Alice's utility at  equilibrium. In fact, in most of the cases we consider, going from interacting with one to two Bob's gives a significant increase, demonstrating that the benefits of competition are realized even in small markets.

\fi

\section{Discussion and Conclusion}
We have introduced a new approach to AI alignment---through competition between multiple, differently misaligned models so that the benefits of perfect alignment emerge in equilibrium. The key condition we need is that the human user's utility function can be approximately represented as a non-negative weighted combination of the AI models' utility functions ---i.e., up to scaling, that Alice's utility function lies approximately in the convex hull of the Bobs'. This is a much more robust (and easier to satisfy) condition than requiring any single AI model to be close to perfectly aligned. 

We view our work as the first step in a broader research agenda of mechanism design for AI alignment. Our analysis is highly stylized; our paper assumes that the AI models are acting in equilibrium of a highly complex game (which are computationally hard to find even in simpler settings \citep{hossain2024multi}). It also assumes that Alice is able to use deployed AI models optimally, and act optimally given the information she learns from them ---in particular, that Alice is able to correctly form posterior beliefs given the information she learns. We view developing protocols with more robust guarantees that do not depend on computationally implausible behavior on the part of the participants as a key direction to advance the research agenda that we introduce in this paper. We have also studied a setting in which the strategic agents must \emph{commit} to conversation rules (in the style of Bayesian Persuasion \citep{kamenica2011bayesian}) --- this is well motivated by current AI technology, in which models are represented by static weights which must be trained at significant expense before deployment and then represent conversation rules that users can interact with, without them maintaining significant state between sessions. However as AI agents become more stateful and dynamic across time, strategic models that do not involve commitment (and require that both parties are simultaneously best responding to one another's conversation rules, in the style of \emph{cheap talk} \citep{crawford1982strategic,farrell1996cheap}) may become more relevant. We expect that the tools of game theory and mechanism design will become important to understand the alignment of \emph{marketplaces} of AI agents. 

We have also modeled a \emph{single} downstream user Alice. Alice could of course stand in for many users, but central to our modeling is that Alice---and by extension, all of the users whom she is the stand-in for---have a single utility function. Of course, AI users do not actually have a single, monolithic utility function---this is the central concern of \emph{pluralistic alignment} \citep{gabriel2020artificial,sorensen2024position,shirali2025direct}. A natural extension of our work would consider a diverse population of downstream users. Since different users with different utility functions can best-respond to a fixed set of conversation rules differently, a tantalizing opportunity within such a model is that in equilibrium, a single set of fixed conversation rules might simultaneously give many downstream users the benefits of interacting with a fully aligned model, despite the fact that ``fully aligned'' means something different for each user. 

Our experimental evaluations in this paper are also very simple and stylized, designed primarily to validate our theory, and not to be a robust empirical investigation of our ideas in realistic scenarios. Performing such a robust evaluation (which we imagine may require applying principles of practical market design) is an important direction for future work.

Finally, our model suggests a number of ancillary questions. If our goal is to maintain marketplaces of models that approximately satisfy the market alignment condition for as many users as possible, how can we test or audit whether existing collections of models do? How can we modify training procedures to optimize for this condition? What kinds of regulatory and economic incentives would encourage AI model providers to aim for it?

\subsection*{Acknowledgments}
The authors gratefully acknowledge support from NSF grant CCF-2217062, the NSF Graduate Research Fellowship (grant DGE-2139899), a grant from the Simons Foundation, and a grant from the UK AI Safety Institute (AISI). The authors used AI tools (GPT-5 and Gemini 2.5 Pro within the Windsurf and Cursor environments) as an aid in writing code and proving lemmas, with careful instructions from the authors. All LLM-produced content was reviewed and edited by the authors before usage.

\bibliographystyle{plainnat}
\bibliography{bib}

@online{Menn2025LLMGrooming,
  author  = {Joseph Menn},
  title   = {Russia seeds chatbots with lies. Any bad actor could game AI the same way},
  year    = {2025},
  month   = apr,
  day     = {17},
  url     = {https://www.washingtonpost.com/technology/2025/04/17/llm-poisoning-grooming-chatbots-russia/},
  urldate = {2025-08-01},
  journal = {The Washington Post}
}

@online{Kay2025GrokWoke,
  author  = {Grace Kay},
  title   = {Inside {Grok}'s war on 'woke'},
  year    = {2025},
  month   = feb,
  day     = {28},
  url     = {https://www.businessinsider.com/xai-grok-training-bias-woke-idealogy-2025-02},
  urldate = {2025-08-01},
  journal = {Business Insider}
}

@article{harper2015movielens,
  title={The movielens datasets: History and context},
  author={Harper, F Maxwell and Konstan, Joseph A},
  journal={Acm transactions on interactive intelligent systems (tiis)},
  volume={5},
  number={4},
  pages={1--19},
  year={2015},
  publisher={Acm New York, NY, USA}
}

@article{gabriel2020artificial,
  title={Artificial intelligence, values, and alignment},
  author={Gabriel, Iason},
  journal={Minds and machines},
  volume={30},
  number={3},
  pages={411--437},
  year={2020},
  publisher={Springer}
}

@unpublished{FudenbergLiang2025,
  author  = {Fudenberg, Drew and Liang, Annie},
  title   = {Friend or Foe: Delegating to an AI Whose Alignment is Unknown},
  note    = {Manuscript},
  month   = {September},
  year    = {2025}
}

@report{kokotajlo2025ai2027,
  title        = {AI 2027},
  author       = {Kokotajlo, Daniel and Alexander, Scott and Larsen, Thomas and Lifland, Eli and Dean, Romeo},
  institution  = {AI Futures Project},
  year         = {2025},
  month        = apr,
  url          = {https://ai-2027.com/ai-2027.pdf},
  note         = {Originally published April 3, 2025},
  urldate      = {2025-08-08}
}

@inproceedings{sorensen2024position,
  title={Position: a roadmap to pluralistic alignment},
  author={Sorensen, Taylor and Moore, Jared and Fisher, Jillian and Gordon, Mitchell and Mireshghallah, Niloofar and Rytting, Christopher Michael and Ye, Andre and Jiang, Liwei and Lu, Ximing and Dziri, Nouha and others},
  booktitle={Proceedings of the 41st International Conference on Machine Learning},
  pages={46280--46302},
  year={2024}
}

@article{shirali2025direct,
  title={Direct alignment with heterogeneous preferences},
  author={Shirali, Ali and Nasr-Esfahany, Arash and Alomar, Abdullah and Mirtaheri, Parsa and Abebe, Rediet and Procaccia, Ariel},
  journal={arXiv preprint arXiv:2502.16320},
  year={2025}
}

@article{leike2018scalable,
  title={Scalable agent alignment via reward modeling: a research direction},
  author={Leike, Jan and Krueger, David and Everitt, Tom and Martic, Miljan and Maini, Vishal and Legg, Shane},
  journal={arXiv preprint arXiv:1811.07871},
  year={2018}
}

@article{crawford1982strategic,
  title={Strategic information transmission},
  author={Crawford, Vincent P and Sobel, Joel},
  journal={Econometrica: Journal of the Econometric Society},
  pages={1431--1451},
  year={1982},
  publisher={JSTOR}
}

@article{farrell1996cheap,
  title={Cheap talk},
  author={Farrell, Joseph and Rabin, Matthew},
  journal={Journal of Economic perspectives},
  volume={10},
  number={3},
  pages={103--118},
  year={1996},
  publisher={American Economic Association}
}

@article{bai2022constitutional,
  title={Constitutional ai: Harmlessness from ai feedback},
  author={Bai, Yuntao and Kadavath, Saurav and Kundu, Sandipan and Askell, Amanda and Kernion, Jackson and Jones, Andy and Chen, Anna and Goldie, Anna and Mirhoseini, Azalia and McKinnon, Cameron and others},
  journal={arXiv preprint arXiv:2212.08073},
  year={2022}
}

@online{Gilbert2024GabAI,
  author  = {David Gilbert},
  title   = {{Gab}'s Racist AI Chatbots Have Been Instructed to Deny the Holocaust},
  year    = {2024},
  month   = feb,
  day     = {21},
  url     = {https://www.wired.com/story/gab-ai-chatbot-racist-holocaust/},
  urldate = {2025-08-01},
  journal = {WIRED}
}

@article{nayebi2025barriers,
  title={Intrinsic Barriers and Practical Pathways for Human–AI Alignment: An Agreement-Based Complexity Analysis},
  author={Nayebi, Aran},
  journal={arXiv preprint arXiv:2502.05934},
  year={2025}
}

@inproceedings{aaronson2005complexity,
  title={The complexity of agreement},
  author={Aaronson, Scott},
  booktitle={Proceedings of the thirty-seventh annual ACM symposium on Theory of computing},
  pages={634--643},
  year={2005}
}

@article{hackenburg2025levers,
  title={The Levers of Political Persuasion with Conversational AI},
  author={Hackenburg, Kobi and Tappin, Ben M and Hewitt, Luke and Saunders, Ed and Black, Sid and Lin, Hause and Fist, Catherine and Margetts, Helen and Rand, David G and Summerfield, Christopher},
  journal={arXiv preprint arXiv:2507.13919},
  year={2025}
}

@article{gentzkow2016competition,
  title={Competition in persuasion},
  author={Gentzkow, Matthew and Kamenica, Emir},
  journal={The Review of Economic Studies},
  volume={84},
  number={1},
  pages={300--322},
  year={2016},
  publisher={Review of Economic Studies Ltd}
}

@article{gentzkow2017bayesian,
  title={Bayesian persuasion with multiple senders and rich signal spaces},
  author={Gentzkow, Matthew and Kamenica, Emir},
  journal={Games and Economic Behavior},
  volume={104},
  pages={411--429},
  year={2017},
  publisher={Elsevier}
}

@article{au2020competitive,
  title={Competitive information disclosure by multiple senders},
  author={Au, Pak Hung and Kawai, Keiichi},
  journal={Games and Economic Behavior},
  volume={119},
  pages={56--78},
  year={2020},
  publisher={Elsevier}
}

@article{wu2023sequential,
  title={Sequential bayesian persuasion},
  author={Wu, Wenhao},
  journal={Journal of Economic Theory},
  volume={214},
  pages={105763},
  year={2023},
  publisher={Elsevier}
}

@article{brown2025avoiding,
  title={Avoiding Obfuscation with Prover-Estimator Debate},
  author={Brown-Cohen, Jonah and Irving, Geoffrey and Piliouras, Georgios},
  journal={arXiv preprint arXiv:2506.13609},
  year={2025}
}

@inproceedings{chen2024playing,
  title={Playing Large Games with Oracles and AI Debate},
  author={Chen, Xinyi and Chen, Angelica and Foster, Dean and Hazan, Elad},
  booktitle={Agentic Markets Workshop at ICML 2024},
  year={2024}
}

@article{irving2018ai,
  title={AI safety via debate},
  author={Irving, Geoffrey and Christiano, Paul and Amodei, Dario},
  journal={arXiv preprint arXiv:1805.00899},
  year={2018}
}

@inproceedings{brown2024scalable,
  title={Scalable AI safety via doubly-efficient debate},
  author={Brown-Cohen, Jonah and Irving, Geoffrey and Piliouras, Georgios},
  booktitle={Proceedings of the 41st International Conference on Machine Learning},
  pages={4585--4602},
  year={2024}
}

@article{guo2024large,
  title={Large language model based multi-agents: A survey of progress and challenges},
  author={Guo, Taicheng and Chen, Xiuying and Wang, Yaqi and Chang, Ruidi and Pei, Shichao and Chawla, Nitesh V and Wiest, Olaf and Zhang, Xiangliang},
  journal={arXiv preprint arXiv:2402.01680},
  year={2024}
}

@article{li2018bayesian,
  title={On Bayesian persuasion with multiple senders},
  author={Li, Fei and Norman, Peter},
  journal={Economics Letters},
  volume={170},
  pages={66--70},
  year={2018},
  publisher={Elsevier}
}

@article{kamenica2011bayesian,
  title={Bayesian persuasion},
  author={Kamenica, Emir and Gentzkow, Matthew},
  journal={American Economic Review},
  volume={101},
  number={6},
  pages={2590--2615},
  year={2011},
  publisher={American Economic Association}
}

@inproceedings{hendrycks2021aligning,
  title={Aligning AI With Shared Human Values},
  author={Hendrycks, Dan and Burns, Collin and Basart, Steven and Critch, Andrew and Li, Jerry and Song, Dawn and Steinhardt, Jacob},
  booktitle={International Conference on Learning Representations},
  year={2021}
}

@article{ravindran4241719competing,
  title={Competing Persuaders in Zero-Sum Games},
  author={Ravindran, Dilip and Cui, Zhihan},
  journal={Available at SSRN 4241719},
  year={2020}
}

@article{hossain2024multi,
  title={Multi-sender persuasion: A computational perspective},
  author={Hossain, Safwan and Wang, Tonghan and Lin, Tao and Chen, Yiling and Parkes, David C and Xu, Haifeng},
  journal={arXiv preprint arXiv:2402.04971},
  year={2024}
}

@article{gradwohl2022reaping,
  title={Reaping the informational surplus in Bayesian persuasion},
  author={Gradwohl, Ronen and Hahn, Niklas and Hoefer, Martin and Smorodinsky, Rann},
  journal={American Economic Journal: Microeconomics},
  volume={14},
  number={4},
  pages={296--317},
  year={2022},
  publisher={American Economic Association 2014 Broadway, Suite 305, Nashville, TN 37203-2425}
}

@inproceedings{collina2025tractable,
  title={Tractable agreement protocols},
  author={Collina, Natalie and Goel, Surbhi and Gupta, Varun and Roth, Aaron},
  booktitle={Proceedings of the 57th Annual ACM Symposium on Theory of Computing},
  pages={1532--1543},
  year={2025}
}

@article{collina2025collaborative,
  title={Collaborative prediction: Tractable information aggregation via agreement},
  author={Collina, Natalie and Globus-Harris, Ira and Goel, Surbhi and Gupta, Varun and Roth, Aaron and Shi, Mirah},
  journal={arXiv preprint arXiv:2504.06075},
  year={2025}
}

@misc{frongillo2021agreementimpliesaccuracysubstitutable,
      title={Agreement Implies Accuracy for Substitutable Signals}, 
      author={Rafael Frongillo and Eric Neyman and Bo Waggoner},
      year={2021},
      eprint={2111.03278},
      archivePrefix={arXiv},
      primaryClass={cs.GT},
      url={https://arxiv.org/abs/2111.03278}, 
}

@article{mckelvey1995quantal,
  title={Quantal response equilibria for normal form games},
  author={McKelvey, Richard D and Palfrey, Thomas R},
  journal={Games and economic behavior},
  volume={10},
  number={1},
  pages={6--38},
  year={1995},
  publisher={Elsevier}
}

@InProceedings{pmlr-v202-santurkar23a,
  title = 	 {Whose Opinions Do Language Models Reflect?},
  author =       {Santurkar, Shibani and Durmus, Esin and Ladhak, Faisal and Lee, Cinoo and Liang, Percy and Hashimoto, Tatsunori},
  booktitle = 	 {Proceedings of the 40th International Conference on Machine Learning},
  pages = 	 {29971--30004},
  year = 	 {2023},
  editor = 	 {Krause, Andreas and Brunskill, Emma and Cho, Kyunghyun and Engelhardt, Barbara and Sabato, Sivan and Scarlett, Jonathan},
  volume = 	 {202},
  series = 	 {Proceedings of Machine Learning Research},
  month = 	 {23--29 Jul},
  publisher =    {PMLR},
  pdf = 	 {https://proceedings.mlr.press/v202/santurkar23a/santurkar23a.pdf},
  url = 	 {https://proceedings.mlr.press/v202/santurkar23a.html}
}

\appendix
\section{A Probabilistic Motivation for Approximate Market Alignment}
\label{sec:motivation}

The approximate market alignment assumption (Definition \ref{def:weighted_alignment}) is central to our results, but where might it come from? Here, we provide a simple generative model for AI agent utilities under which the assumption holds with high probability for a sufficiently large set of agents. This models the scenario described in the introduction where AI agents are designed to be aligned with the human user (i.e., aligned in expectation) but their implementation is imperfect due to the difficulty of the alignment problem.

\paragraph{A Random Utility Model.} Suppose each AI agent's utility function is drawn independently from a distribution. We assume that for any action $a \in \mathcal{A}$ and any state of the world $y \in \mathcal{Y}$, the expected utility of any AI agent $i$ is equal to Alice's utility. That is,
$$ \mathbb{E}[U_i(a,y)] = u_A(a,y). $$
We also assume all utilities are bounded, $U_i(a,y) \in [0,1]$.

This model captures the intuition from our introduction: if each AI developer attempts to create an aligned model but fails due to implementation noise, then the simple average of many such models will be well-aligned. This is the "concentration of measure" effect mentioned in the introduction—while any individual model may be poorly aligned, the average converges to the target.

Under this model, we can show that a sufficiently large set of AI agents will satisfy the $\varepsilon$-market alignment condition with uniform weights ($w_i = 1/k$) and zero offset ($c=0$). This follows from a standard concentration inequality argument.

\begin{proposition}[Market Alignment from Noisy Implementation]
Let the utility functions for a set of $k$ AI agents be drawn independently according to the random utility model above. Assume the action space $\mathcal{A}$ and state space $\mathcal{Y}$ are finite. Then for any alignment tolerance $\varepsilon > 0$ and any failure probability $\delta > 0$, if the number of agents $k$ satisfies
$$ k > \frac{\ln(2|\mathcal{A}||\mathcal{Y}|) - \ln(\delta)}{2\varepsilon^2}, $$
then the agents satisfy the $\varepsilon$-market alignment condition (Definition \ref{def:weighted_alignment}) with uniform weights $w_i=1/k$ and zero offset $c=0$, with probability at least $1 - \delta$.
\end{proposition}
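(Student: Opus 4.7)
The plan is to apply Hoeffding's inequality pointwise for each pair $(a,y) \in \mathcal{A} \times \mathcal{Y}$, and then take a union bound across the finite product space. Concretely, I would first fix an arbitrary pair $(a,y)$ and observe that the random variables $U_1(a,y), \dots, U_k(a,y)$ are independent and bounded in $[0,1]$ with common mean $u_A(a,y)$. By Hoeffding's inequality applied to the sample mean $\bar{U}(a,y) := \frac{1}{k}\sum_{i=1}^k U_i(a,y)$, I obtain
$$ \Pr\!\left[\, \left| \bar{U}(a,y) - u_A(a,y) \right| \ge \varepsilon \,\right] \;\le\; 2\exp(-2 k \varepsilon^2). $$

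Next I would take a union bound over the $|\mathcal{A}||\mathcal{Y}|$ pairs, yielding
$$ \Pr\!\left[\, \sup_{a \in \mathcal{A},\, y \in \mathcal{Y}} \left| \bar{U}(a,y) - u_A(a,y) \right| \ge \varepsilon \,\right] \;\le\; 2 |\mathcal{A}||\mathcal{Y}| \exp(-2 k \varepsilon^2). $$
Setting this right-hand side to be at most $\delta$ and solving for $k$ yields exactly the sample-complexity bound stated in the proposition: $k > (\ln(2|\mathcal{A}||\mathcal{Y}|) - \ln(\delta))/(2\varepsilon^2)$. On the complement of this (good) event, the uniform weights $w_i = 1/k$ and offset $c = 0$ witness the $\varepsilon$-weighted alignment condition of Definition \ref{def:weighted_alignment}, completing the argument.

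\textbf{Main obstacle.} Honestly there is no serious obstacle: independence of the agents' draws and boundedness in $[0,1]$ are exactly the hypotheses Hoeffding requires, and the finite cardinalities of $\mathcal{A}$ and $\mathcal{Y}$ make the union bound immediate. The only mild point worth flagging is the finiteness assumption: if $\mathcal{A}$ or $\mathcal{Y}$ were infinite, one would need to replace the naive union bound with a covering/chaining argument and an assumption that the family $\{U_i(\cdot,\cdot)\}$ is suitably regular (e.g., Lipschitz or of bounded complexity), but under the finite-space hypothesis stated in the proposition this is not needed.
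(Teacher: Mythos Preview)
Your proposal is correct and matches the paper's proof essentially line for line: fix $(a,y)$, apply Hoeffding's inequality to the bounded i.i.d.\ variables $U_i(a,y)$, then union bound over the $|\mathcal{A}||\mathcal{Y}|$ pairs and solve for $k$. The paper does exactly this, with no additional ingredients.
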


\begin{proof}
For any fixed action-state pair $(a,y)$, the random variables $U_1(a,y), \dots, U_k(a,y)$ are independent and bounded in $[0,1]$. Let $\bar{U}(a,y) = \frac{1}{k}\sum_{i=1}^k U_i(a,y)$ be their sample mean. By Hoeffding's inequality, the probability of a large deviation from the true mean $u_A(a,y)$ is bounded:
$$ \mathbb{P}(|\bar{U}(a,y) - u_A(a,y)| > \varepsilon) \le 2e^{-2k\varepsilon^2}. $$
For the approximate average alignment condition to fail, this deviation must occur for at least one pair $(a,y) \in \mathcal{A} \times \mathcal{Y}$. We can bound the probability of this event using a union bound over all possible pairs:
\begin{align*}
    \mathbb{P}(\sup_{a,y} |\bar{U}(a,y) - u_A(a,y)| > \varepsilon) &= \mathbb{P}(\exists (a,y) \in \mathcal{A} \times \mathcal{Y} \text{ s.t. } |\bar{U}(a,y) - u_A(a,y)| > \varepsilon) \\
    &\le \sum_{(a,y) \in \mathcal{A} \times \mathcal{Y}} \mathbb{P}(|\bar{U}(a,y) - u_A(a,y)| > \varepsilon) \\
    &\le |\mathcal{A}||\mathcal{Y}| \cdot 2e^{-2k\varepsilon^2}.
\end{align*}
We want this failure probability to be less than $\delta$. So, we set
$$ |\mathcal{A}||\mathcal{Y}| \cdot 2e^{-2k\varepsilon^2} < \delta. $$
Solving for $k$, we take the logarithm of both sides:
$$ \ln(2|\mathcal{A}||\mathcal{Y}|) - 2k\varepsilon^2 < \ln(\delta) $$
$$ -2k\varepsilon^2 < \ln(\delta) - \ln(2|\mathcal{A}||\mathcal{Y}|) $$
$$ 2k\varepsilon^2 > \ln(2|\mathcal{A}||\mathcal{Y}|) - \ln(\delta) $$
$$ k > \frac{\ln(2|\mathcal{A}||\mathcal{Y}|) - \ln(\delta)}{2\varepsilon^2}. $$
Thus, if $k$ meets this condition, the probability that the set of AI agents does not satisfy our $\varepsilon$-market alignment assumption is less than $\delta$. The probability of successful alignment is therefore at least $1-\delta$.
\end{proof}

This result shows that the number of AI agents required grows logarithmically with the size of the action and state spaces, and polynomially with respect to $1/\varepsilon$. It provides a clear and direct path to satisfying our key assumption by simply having a large enough population of imperfectly-aligned agents.

\section{Market Alignment without sender competition does not ensure first-best}

In Section~\ref{sec:br} we show that when all the senders' conversations rules form a Nash, the market alignment condition (plus the identical induced distribution condition) guarantee that Alice will attain her first-best utility. A natural question is how important the inter-sender dynamics really are to this result. Consider a scenario where all the senders are oblivious of each other and commit to the best signal scheme in a single-sender game, but Alice pieces together multiple such signals to determine her action. Might the market alignment assumption still ensure that the information that Alice receives, when taken together, reveals enough to allow her to attain her first-best?

In this section we show that the answer is no. We provide an example of a simple 2-persuader game satisfying the market alignment and identical induced distribution conditions which, in the `oblivious' setting, leads to utility for Alice which is strictly below her first-best. 

This result underscores the importance of understanding the strategic interplay between AI system designers. Simply attaining information from multiple siloed AI systems with varying utilities does not guarantee a user will end up with complete information. But as competing AI system designers become increasingly attuned to marketplace incentives, and as AI systems themselves become increasingly sophisticated and able to reason strategically, the benefits of market alignment become increasingly tangible.

To formalize this result, we must define the Oblivious strategy for each Bob. This in turn requires defining how each Bob reasons about Alice. Each Bob thinks he is playing a single-sender persuasion game against Alice. We retain the model from Section~\ref{sec:prelims}, but introduce the following additional definitions:

\begin{definition}[Oblivious Best-Response Decision Rule]
    An oblivious best-response decision rule is a deterministic rule $D^{O,i}_A$ that, given the final posterior belief $\mu_{x_A,\pi_i}$ derived only from Alice's features $x_A$ and a transcript $\pi_i$ including only the history $h_i$ of messages from sender $i$\footnote{This is a valid operation because the messages send to Alice from each Bob are independent conditional on the joint conversation rules.}, selects an action that maximizes Alice's expected utility: 
    $$ D_A^{O,i}(x_A,\pi_i) \in \argmax_{a \in \mathcal{A}} \mu_a(x_A,\pi_i). $$
\end{definition}

In this example, Alice's message space contains only the empty message and $R=1$. Thus, there is no choice of her conversation rule, and we can move on to Bob's strategy.

\begin{definition}[Optimal oblivious strategy] A sender conversation rule $C_{B}^{O,i}$ is the \emph{optimal oblivious} strategy if, given that Alice is employing an oblivious best-response decision rule, $\text{Bob}_i$ cannot improve his expected utility by unilaterally deviating to a different rule $C'_{B_i}$. That is, for all alternative rules $C'_{B_i}$:
$$ \mathbb{E}_{(a,y) \sim \mathcal{I}^*(C_B^{O,i})}[U_i(a,y)] \ge \mathbb{E}_{(a,y) \sim \mathcal{I}^*((Ci_{B_i})}[U_i(a,y)]. $$
\end{definition}

\begin{theorem}
    There exist multi-leader games satisfying the identical induced distribution condition and the market alignment condition such that if all Bobs employ obliviously optimal strategies, Alice's expected utility is strictly less than the first-best. 
\end{theorem}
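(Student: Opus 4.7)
The plan is to exhibit a small explicit two-sender example that satisfies both conditions but fails under oblivious play. Take $\mathcal{Y} = \{y_1,\dots,y_4\}$ with uniform prior, $\mathcal{A} = \{a_1,\dots,a_4\}$, $x_A = \bot$, and $x_B = y$. Let Alice's utility reward exact matches fully, same-parity non-exact pairs partially, and cross-parity pairs not at all: $u_A(a_j, y_k) = 1$ if $j = k$, $1/2$ if $j \equiv k \pmod 2$ with $j \ne k$, and $0$ otherwise. Let both Bobs share the identical utility $U$ given by $U(a_j, y_k) = 1 - \alpha$ if $j = k$, $U(a_j, y_k) = 1$ if $j \equiv k \pmod 2$ with $j \ne k$, and $U(a_j, y_k) = 0$ otherwise, for a small $\alpha > 0$ acting as an exact-match penalty.

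I would first verify the two structural conditions. With $w_1 = w_2 = 1/2$ and an offset $c$ chosen to balance the three distinct entries of $U - u_A \in \{-\alpha, 0, 1/2\}$, the weighted alignment condition holds with $\varepsilon = 1/4 + \alpha/2$. For the identical induced distribution condition, take $C_B^*$ to be full revelation of $y$: whichever Bob deviates to $C_B^*$, Alice can learn $y$ by ignoring the other Bobs and then deterministically play the exact-matching action $a_k$ in state $y_k$, so the induced distribution over $(a,y)$ is the same regardless of the deviator.

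The key step is characterizing each Bob's obliviously optimal scheme via concavification of Bob's value function $V_B$ over $\Delta(\mathcal{Y})$. The claim is that parity revelation is the unique optimum (up to equivalent partitions) at value $1 - \alpha/2$. At a posterior uniform over a single parity class, Alice is indifferent between the two same-parity actions, so under her tie-break Bob collects $1 - \alpha$ on the matching cell and $1$ on the non-matching cell, averaging $1 - \alpha/2$. Full revelation drives Alice to the exact-matching action in every state and yields Bob only $1 - \alpha$, which is strictly less since $\alpha > 0$; coarser schemes induce posteriors straddling the parity classes and cause Alice to play a cross-parity action with $U = 0$ in some states. A direct computation shows $V_B(\mu) \le 1 - \alpha/2$ for all $\mu$, with equality only at parity-class-uniform posteriors, so any scheme averaging to the uniform prior is bounded by this value and parity revelation attains it. Hence both Bobs uniquely prefer parity revelation, and when both deploy it Alice's combined signal reveals only the parity of $y$; by her fixed within-parity tie-break she obtains expected utility $3/4$, strictly less than the first-best utility $1$.

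The main obstacle I expect is the concavification step: one must rule out all randomized and multi-message single-sender schemes outperforming parity revelation, and verify that the perturbation $\alpha > 0$ makes parity revelation strictly (rather than tied) optimal against full revelation, so that the obliviously optimal strategy is essentially unique and the failure to reach the first-best is robust across choices of optima.
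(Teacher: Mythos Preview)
Your construction is correct for the literal statement: with two identical Bobs sharing utility $U$, parity revelation is the unique obliviously optimal single-sender scheme (your concavification bound $V_B(\mu)\le 1-\alpha/2$ goes through, with equality only at posteriors uniform on a parity class), so both Bobs reveal only parity and Alice obtains $3/4 < 1$.

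The paper takes a quite different and more elementary route. It uses a $2\times 2$ judge/prosecutor/defense example with \emph{heterogeneous} Bobs whose utilities sum \emph{exactly} to Alice's, i.e.\ $\varepsilon = 0$. The prosecutor's obliviously optimal scheme is uninformative (Alice already convicts under the prior), the defender's is a standard partial-revelation scheme, and combining them Alice gets $5/3 < 2$. No concavification analysis is needed beyond the textbook single-sender persuasion solution.

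The substantive difference is what each example demonstrates. Because the paper's example has exact weighted alignment, Theorem~\ref{thm:alice-opt-iid} guarantees Alice attains first-best in \emph{every} Nash equilibrium of the competitive game; the appendix theorem then shows oblivious play falls strictly short. Your example has alignment error $\varepsilon = 1/4 + \alpha/2$, so Theorem~\ref{thm:alice-opt-iid} only guarantees $1/2 - \alpha$ at Nash; indeed in your game parity-parity is itself a Nash equilibrium yielding the same $3/4$ as oblivious play. Thus your example establishes the stated inequality but does not separate oblivious play from equilibrium play, which is the conceptual point the section is after. The identical-Bob choice also makes the multi-leader structure inert: two copies of the same sender reveal no more than one. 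The paper's heterogeneous-Bob, $\varepsilon=0$ construction is both simpler and sharper for the intended message.
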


\begin{proof}
We will prove this by example. Consider the following game, where $R = 1$, Alice message space is empty, and the conversation rule of each Bob is a mapping from state to signal. 
Thus, it is a static multi-sender Bayesian game embedded into our framework.

\[
\text{Judge Alice's Utility:} \quad
\begin{array}{c|cc}
 & \text{Guilty} & \text{Innocent} \\
\hline
\text{Acquit} & 1 & 2 \\
\text{Convict} & 2 & 1
\end{array}
\]

\[
\text{Prosecutor Bob’s Utility:} \quad
\begin{array}{c|cc}
 & \text{Guilty} & \text{Innocent} \\
\hline
\text{Acquit} & 0 & 0 \\
\text{Convict} & 2 & 1
\end{array}
\]

\[
\text{Defense Attorney Bob’s Utility:} \quad
\begin{array}{c|cc}
 & \text{Guilty} & \text{Innocent} \\
\hline
\text{Acquit} & 1 & 2 \\
\text{Convict} & 0 & 0
\end{array}
\]

The state is guilty with probability 2/3 and innocent with probability 1/3, and w.l.o.g. assume Alice tiebreaks in favor of acquittal.

Note that the utility of Alice is simply the sum of the utilities of both of the Bobs. Therefore the market alignment condition is satisfied exactly. Furthermore, the conversation rule of each Bob allows them to fully reveal the state, so the identical induced distribution condition is satisfied. Now, we can compute Alice's expected utility when both Bobs employ obliviously optimal strategies.

Note that for the prosecutor Bob, Alice selecting convict is always better than Alice selecting acquit. Thus his goal is to maximize the probability that she selects convict. If he provides no information via his signaling scheme and Alice employs an oblivious best-response signaling rule, then because of the prior, Alice will always pick convict. Thus, $guilty\mapsto guilty, innocent\mapsto guilty$ is an obliviously optimal strategy. 

Similarly, for the defense attorney Bob, his goal is to maximize the probability that Alice selects acquit. Here, he must provide some information to get an optimal outcome. The obliviously optimal strategy is $guilty\mapsto x, innocent\mapsto innocent$, where $x$ is $\frac{1}{2}$ guilty, $\frac{1}{2}$ innocent. Against this, Alice will acquit when she sees innocent and convict when she sees guilty.

Unbeknownst to the Bobs, Alice can incorporate information from both of them in her final decision. But the prosecutor Bob provided no information. Thus, Alice's expected utility is her expected utility given the information of the defense attorney Bob, \begin{align*}
   &  \frac{2}{3} \left(\frac{1}{2}u_A(guilty,convict) + \frac{1}{2}u_A(guilty,acquit) \right) + \frac{1}{3} u_A(innocent, acquit)\\
   & = \frac{2}{3} \left(1 + \frac{1}{2} \right) + \frac{2}{3} \\
   & = \frac{2}{3} \cdot \frac{3}{2} + \frac{2}{3} = \frac{5}{3}
\end{align*}

We can compare this to the first-best, which is convicting when guilty and acquitting when innocent:

\begin{align*}
   &  \frac{2}{3} u_A(guilty, convict) + \frac{1}{3} u_A(innocent, acquit)\\
   & = 2
\end{align*}

Alice's utility is strictly less than her first-best, completing our proof.
\end{proof}

\section{Experimental Prompt Details for Testing Alignment}
\label{sec:experiments_appendix}

This appendix provides the specific prompts used to generate the 100 diverse AI agents in 
Section~\ref{sec:experiments_combined}.

\subsection{ETHICS Dataset Prompts}

\paragraph{AI Agent Prompts.}
Example rephrasings generated by \texttt{gpt-4.1}:
\begin{quote}
    \textit{1. You are an ordinary person who trusts your common sense and feelings rather than academic ethics. When shown an ethical case, judge whether the action is good or bad and respond with a number from 0 (entirely wrong) to 100 (entirely right). Only output the integer.}

    \textit{2. Take on the perspective of someone who thinks with their heart rather than formal logic. For each scenario, rate the morality of the action from 0 (wrong) to 100 (right) and reply only with a whole number.}

    \textit{3. You are not an expert in ethics but a person who uses everyday reasoning. For the upcoming ethical scenario, rate the action from 0 (definitely wrong) to 100 (definitely right). Only output the integer.}

    \textit{4. Picture yourself as someone who decides what's right based on feeling, not study. Given the scenario, judge the action and give it a score from 0 (entirely wrong) to 100 (entirely right). Respond only with the integer.}
\end{quote}

\subsection{MovieLens Dataset Prompts}

\paragraph{AI Agent Prompts.}
Example rephrasings generated by \texttt{gpt-4.1}:
\begin{quote}
    \textit{1. You're a typical moviegoer with mainstream preferences. Score films from 0 (terrible) to 100 (masterpiece) based on how much you'd personally enjoy watching them, considering plot, performances, and entertainment factor. Output only the number.}
    
    \textit{2. As someone with average film tastes, rate each movie from 0 (unwatchable) to 100 (all-time favorite) according to your personal enjoyment, factoring in storytelling, acting quality, and how entertaining it is. Respond with just the integer.}
    
    \textit{3. You represent the common viewer with standard movie preferences. Evaluate films on a scale of 0 (absolutely despise) to 100 (perfect film) based on personal enjoyment including narrative, cast performance, and entertainment value. Give only the numerical score.}
\end{quote}

\section{Utility Tables for Strategic Experiments}\label{app:utility}

Here are the details of the utility tables used in the strategic experiments.
\paragraph{Synthetic Utility Table 1.}
This table is a structured synthetic environment with heterogeneous Bobs that are neither uniformly aligned nor uniformly misaligned.
\begin{itemize}
    \item \textbf{States:} 3 abstract states.
    \item \textbf{Policies:} 3 abstract policies per state.
    \item \textbf{Bobs:} 5 synthetic agents with diverse incentives.
\end{itemize}

\begin{table}[h!]
\centering
\caption{Synthetic Utility Table 1}
\label{tab:synthetic1_utility}
\begin{tabular}{llcccccc}
\toprule
\textbf{State} & \textbf{Policy} & \textbf{Alice} & \textbf{AI 1} & \textbf{AI 2} & \textbf{AI 3} & \textbf{AI 4} & \textbf{AI 5} \\
\midrule
S1 & A & 0.545 & 0.80 & 0.40 & 0.30 & 0.60 & 0.55 \\
S1 & B & 0.580 & 0.50 & 0.90 & 0.40 & 0.60 & 0.35 \\
S1 & C & 0.470 & 0.30 & 0.30 & 0.85 & 0.40 & 0.75 \\
S2 & A & 0.570 & 0.45 & 0.70 & 0.55 & 0.80 & 0.30 \\
S2 & B & 0.585 & 0.75 & 0.40 & 0.60 & 0.50 & 0.65 \\
S2 & C & 0.538 & 0.35 & 0.60 & 0.80 & 0.45 & 0.55 \\
S3 & A & 0.552 & 0.70 & 0.30 & 0.50 & 0.85 & 0.40 \\
S3 & B & 0.555 & 0.40 & 0.75 & 0.55 & 0.45 & 0.70 \\
S3 & C & 0.565 & 0.50 & 0.55 & 0.80 & 0.35 & 0.65 \\
\bottomrule
\end{tabular}%
\end{table}

\paragraph{Synthetic Utility Table 2.}
This table is structured to ensure that no single Bob is Alice-optimal and you need multiple Bob's to get improvement.
\begin{itemize}
    \item \textbf{States:}  3 abstract states
    \item \textbf{Policies:} 4 abstract policies
    \item \textbf{Bobs:} 5 synthetic agents
\end{itemize}

\begin{table}[h!]
\centering
\caption{Synthetic Utility Table 2}
\label{tab:synthetic2_utility}
\begin{tabular}{llcccccc}
\toprule
\textbf{State} & \textbf{Policy} & \textbf{Alice} & \textbf{AI 1} & \textbf{AI 2} & \textbf{AI 3} & \textbf{AI 4} & \textbf{AI 5} \\
\midrule
S1 & A & 0.92 & 0.95 & 0.10 & 0.10 & 0.10 & 0.10 \\
S1 & B & 0.10 & 0.80 & 0.20 & 0.20 & 0.20 & 0.20 \\
S1 & C & 0.10 & 0.20 & 0.80 & 0.20 & 0.20 & 0.20 \\
S1 & H & 0.78 & 0.70 & 0.70 & 0.70 & 0.70 & 0.70 \\
S2 & A & 0.10 & 0.20 & 0.80 & 0.20 & 0.20 & 0.20 \\
S2 & B & 0.92 & 0.10 & 0.95 & 0.10 & 0.10 & 0.10 \\
S2 & C & 0.10 & 0.20 & 0.20 & 0.80 & 0.20 & 0.20 \\
S2 & H & 0.78 & 0.70 & 0.70 & 0.70 & 0.70 & 0.70 \\
S3 & A & 0.10 & 0.20 & 0.20 & 0.80 & 0.20 & 0.20 \\
S3 & B & 0.10 & 0.20 & 0.20 & 0.20 & 0.80 & 0.20 \\
S3 & C & 0.92 & 0.10 & 0.10 & 0.95 & 0.10 & 0.10 \\
S3 & H & 0.78 & 0.70 & 0.70 & 0.70 & 0.70 & 0.70 \\
\bottomrule
\end{tabular}%
\end{table}

\paragraph{MovieLens Utility Table.}
This table is designed to model a real-world recommendation scenario using the MovieLens \texttt{ml-latest-small} dataset.
\begin{itemize}
    \item \textbf{States:} Movie genres (\textit{Action, Comedy, Drama, Sci-Fi, Thriller, Romance}).
    \item \textbf{Policies:} The top 3 most-rated movie titles within each genre.
    \item \textbf{Bobs:} Personas representing ``fans" of each genre (Action, Comedy, Drama, Sci-Fi, Thriller, Romance). A user is identified as a ``fan" of a genre if they have rated at least 20 movies in that genre with an average rating of 4.0 or higher.
    \item \textbf{Construction:}
    \begin{itemize}
        \item Alice's utility for a movie is its overall average rating across all users in the dataset.
        \item A Bob's utility for a movie is the average rating given to that movie only by the users identified as fans for that Bob's persona.
        \item If no ``fan" users for a particular persona have rated a specific movie, that persona's utility for the movie defaults to the overall average rating (Alice's utility). All ratings are normalized to a $[0, 1]$ scale. The exact utilities are shown in Table~\ref{tab:movielens_utility}.
    \item \textbf{Motivation:} Provides a grounded, real-world benchmark with structured but heterogeneous preferences, allowing us to test whether theoretical guarantees (e.g., OPT $- 2\varepsilon$) are informative beyond synthetic settings.
    \end{itemize}
\end{itemize}

\begin{table}[h!]
\centering
\caption{MovieLens Utility Table}
\label{tab:movielens_utility}
\resizebox{\textwidth}{!}{%
\begin{tabular}{llccccccc}
\toprule
\textbf{State (Genre)} & \textbf{Policy (Movie)} & \textbf{Alice} & \textbf{Action Bob} & \textbf{Comedy Bob} & \textbf{Drama Bob} & \textbf{Sci-Fi Bob} & \textbf{Thriller Bob} & \textbf{Romance Bob} \\
\midrule
Action & Matrix & 0.84 & 0.92 & 0.94 & 0.89 & 0.93 & 0.89 & 0.87 \\
Action & Star Wars: Episode IV & 0.85 & 0.91 & 0.91 & 0.87 & 0.93 & 0.88 & 0.86 \\
Action & Jurassic Park & 0.75 & 0.87 & 0.85 & 0.80 & 0.85 & 0.83 & 0.80 \\
Comedy & Forrest Gump & 0.83 & 0.91 & 0.89 & 0.90 & 0.90 & 0.89 & 0.88 \\
Comedy & Pulp Fiction & 0.84 & 0.87 & 0.90 & 0.89 & 0.84 & 0.90 & 0.90 \\
Comedy & Toy Story & 0.78 & 0.85 & 0.88 & 0.84 & 0.89 & 0.83 & 0.84 \\
Drama & Forrest Gump & 0.83 & 0.91 & 0.89 & 0.90 & 0.90 & 0.89 & 0.88 \\
Drama & Shawshank Redemption & 0.89 & 0.94 & 0.93 & 0.92 & 0.92 & 0.93 & 0.91 \\
Drama & Pulp Fiction & 0.84 & 0.87 & 0.90 & 0.89 & 0.84 & 0.90 & 0.90 \\
Sci-Fi & Matrix & 0.84 & 0.92 & 0.94 & 0.89 & 0.93 & 0.89 & 0.87 \\
Sci-Fi & Star Wars: Episode IV & 0.85 & 0.91 & 0.91 & 0.87 & 0.93 & 0.88 & 0.86 \\
Sci-Fi & Jurassic Park & 0.75 & 0.87 & 0.85 & 0.80 & 0.85 & 0.83 & 0.80 \\
Thriller & Pulp Fiction & 0.84 & 0.87 & 0.90 & 0.89 & 0.84 & 0.90 & 0.90 \\
Thriller & Silence of the Lambs & 0.83 & 0.93 & 0.92 & 0.90 & 0.92 & 0.91 & 0.91 \\
Thriller & Matrix & 0.84 & 0.92 & 0.94 & 0.89 & 0.93 & 0.89 & 0.87 \\
Romance & Forrest Gump & 0.83 & 0.91 & 0.89 & 0.90 & 0.90 & 0.89 & 0.88 \\
Romance & American Beauty & 0.81 & 0.82 & 0.89 & 0.86 & 0.85 & 0.84 & 0.90 \\
Romance & True Lies & 0.70 & 0.85 & 0.83 & 0.77 & 0.83 & 0.80 & 0.81 \\
\bottomrule
\end{tabular}%
}
\end{table}

\end{document}